\theoremstyle{plain}
\newtheorem{theorem}{Theorem}[section]
\newtheorem{corollary}[theorem]{Corollary}
\theoremstyle{definition}
\theoremstyle{remark}
\icmltitlerunning{Activation by Interval-wise Dropout: A Simple Way to Prevent Neural Networks from Plasticity Loss}
\begin{document}

\twocolumn[
\icmltitle{Activation by Interval-wise Dropout\\A Simple Way to Prevent Neural Networks from Plasticity Loss}

% It is OKAY to include author information, even for blind
% submissions: the style file will automatically remove it for you
% unless you've provided the [accepted] option to the icml2025
% package.

% List of affiliations: The first argument should be a (short)
% identifier you will use later to specify author affiliations
% Academic affiliations should list Department, University, City, Region, Country
% Industry affiliations should list Company, City, Region, Country

% You can specify symbols, otherwise they are numbered in order.
% Ideally, you should not use this facility. Affiliations will be numbered
% in order of appearance and this is the preferred way.
\icmlsetsymbol{equal}{*}

\begin{icmlauthorlist}
\icmlauthor{Sangyeon Park}{xxx}
\icmlauthor{Isaac Han}{xxx}
\icmlauthor{Seungwon Oh}{xxx}
\icmlauthor{Kyung-Joong Kim}{xxx}
\end{icmlauthorlist}

\icmlaffiliation{xxx}{Department of AI Convergence, Gwangju Institute of Science and Technology (GIST), Gwangju, South Korea}
% \icmlaffiliation{yyy}{School of Integrated Technology, Gwangju Institute of Science and Technology (GIST), Gwangju, South Korea}

\icmlcorrespondingauthor{Kyung-Joong Kim}{kjkim@gist.ac.kr}

% You may provide any keywords that you
% find helpful for describing your paper; these are used to populate
% the "keywords" metadata in the PDF but will not be shown in the document
\icmlkeywords{Machine Learning, Plasticity, Warm-start, Continual Learning, ICML}

\vskip 0.3in
]

% this must go after the closing bracket ] following \twocolumn[ ...

% This command actually creates the footnote in the first column
% listing the affiliations and the copyright notice.
% The command takes one argument, which is text to display at the start of the footnote.
% The \icmlEqualContribution command is standard text for equal contribution.
% Remove it (just {}) if you do not need this facility.

\printAffiliationsAndNotice{}  % leave blank if no need to mention equal contribution
%\printAffiliationsAndNotice{\icmlEqualContribution} % otherwise use the standard text.

\begin{abstract}

Plasticity loss, a critical challenge in neural network training, limits a model's ability to adapt to new tasks or shifts in data distribution. 
%While widely used techniques like L2 regularization and Layer Normalization have proven effective in mitigating this issue, Dropout remains notably ineffective. 
This paper introduces \textbf{AID} (\textbf{A}ctivation by \textbf{I}nterval-wise \textbf{D}ropout), a novel method inspired by Dropout, designed to address plasticity loss. Unlike Dropout, AID generates subnetworks by applying Dropout with different probabilities on each preactivation interval. Theoretical analysis reveals that AID regularizes the network, promoting behavior analogous to that of deep linear networks, which do not suffer from plasticity loss. We validate the effectiveness of AID in maintaining plasticity across various benchmarks, including continual learning tasks on standard image classification datasets such as CIFAR10, CIFAR100, and TinyImageNet. Furthermore, we show that AID enhances reinforcement learning performance in the Arcade Learning Environment benchmark.

\end{abstract}

\section{Introduction}
\label{sec:introduction}

% Plastcity 문제가 무엇인가? 에 대한 설명
Loss of plasticity refers to the phenomenon in which a neural network loses its ability to learn, which has been reported in recent studies \cite{lyle2023understanding, dohare2021continual}.
This phenomenon has been observed in models pre-trained on datasets where labels are assigned randomly or input images are randomly permuted.
When trained on a new task, these models exhibit degraded learning capabilities than freshly initialized networks.
Loss of plasticity is a critical issue that arises across various domains, making it an essential problem to address.
There have been evidence that plasticity loss is caused by non-stationarity \cite{lyle2024disentangling}.
While these results are from supervised domain like continual learning, plasticity loss also has been observed in the reinforcement learning domain, caused by its inherent non-stationarity \cite{sokar2023dormant, kumar2020implicit}.
However, recent study suggests that this phenomenon is prevalent not only in non-stationary domains, but also in stationary domains \cite{lee2024slow, shin2024dash}.

\begin{figure}[!t]
    \centering
    \includegraphics[width=\columnwidth]{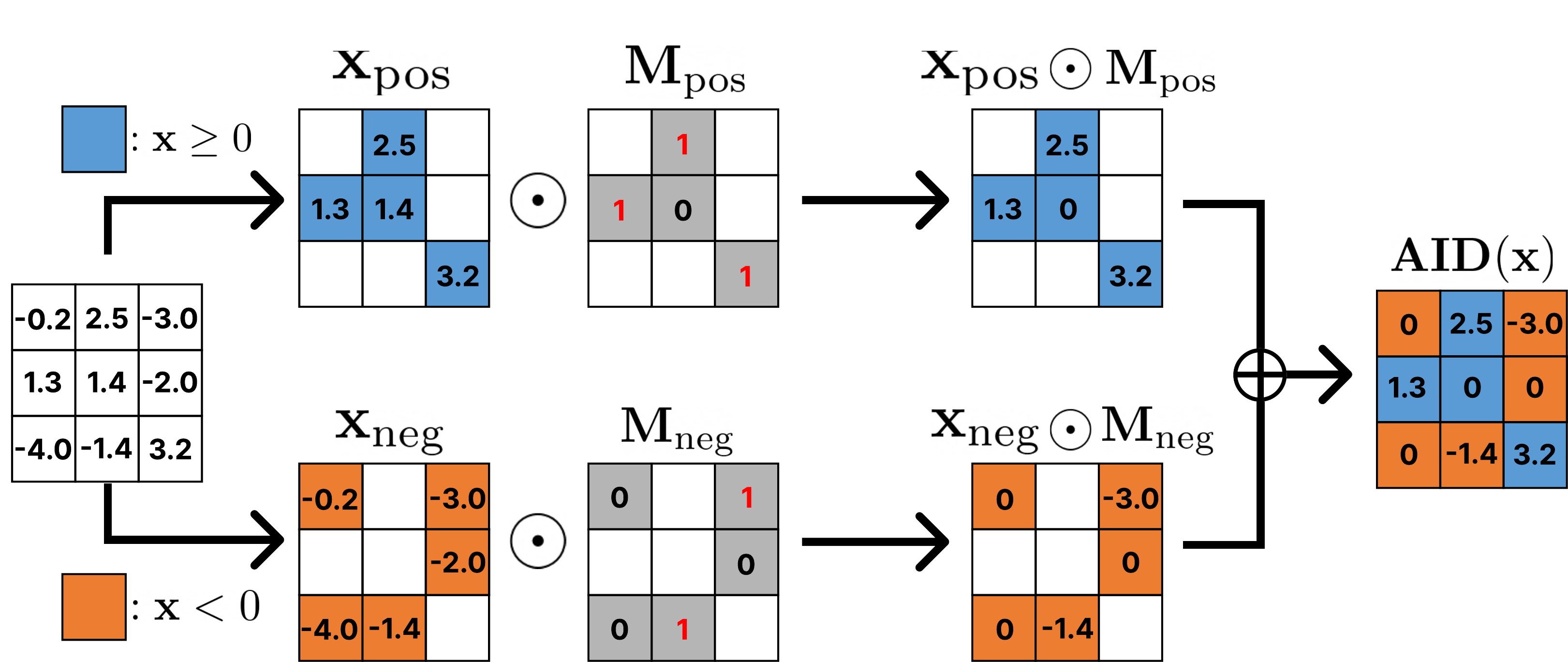}
    \caption{\textbf{AID Architecture for Simplified Version.} We apply Dropout at a rate of $1-p$ for positive values and $p$ for negative values, where $m_{\text{pos}} \sim Ber(p)$ and $m_{\text{neg}} \sim Ber(1-p)$. Unlike ReLU activation, AID allows utilizing negative preactivation and regularizing toward linear network, effectively mitigating plasticity loss.}
    \label{fig:AID}
\end{figure}

% PL의 원인으로 다양한 요소가 지목되었고, 이를 해결하기 위한 다양한 방법이 제시되었다. 간략하게 요약 어떠한 종류의 방법이 있었는지. 
Several factors have been proposed as causes of plasticity loss.
Earlier studies have identified issues such as dead neurons \cite{sokar2023dormant} and large weight norms \cite{lyle2023understanding} as potential contributors to this phenomenon.
More recent research suggests that non-stationary serves as the primary trigger for plasticity loss, subsequently leading to effects like dead neurons, linearized units, and large weight norms, which constrain the network's capability and destabilize training \cite{lyle2024disentangling}.
Other factors have also been implicated, including feature rank \cite{kumar2020implicit}, the sharpness of the loss landscape \cite{lyle2023understanding}, and noise memorization \cite{shin2024dash}.

Based on these insights, various approaches have been proposed to address plasticity loss.
These include adding penalty terms to regularize weights \cite{kumar2023maintaining, gogianu2021spectral}, resetting dead or low-utility neurons \cite{dohare2021continual, dohare2023maintaining, sokar2023dormant}, shrinking the network to inject randomness \cite{ash2020warm, shin2024dash}, introducing novel architectures \cite{lee2024slow} or novel activations\cite{abbas2023loss, lewandowski2024plastic}, and developing new optimization strategies \cite{elsayed2024addressing}.

% Overfitting을 방지하기 위한 전통적인 방법인 L2는 PL에서 효과가 있음. 하지만 비슷하게 전통적으로 쓰이던 Dropout은 효과가 없었음. 우리는 plasticity 문제를 해결하기 위한 DIA를 제안함. 
% data augmentation 또한 여러 실험에서 plasticity loss를 완화하는 것으로 알려졌다. 
% 이로 미루어봤을때, overfitting과 plasticity loss는 연관성이 있다고 추측할 수 있다.
% 하지만 overfitting을 잘 다루는 dropout은 plasticity loss에 별로 도움이 되지 않음이 보여졌다.
Widely used methods such as L2 regularization and LayerNorm \cite{ba2016layer} have proven to be effective in addressing plasticity loss \cite{lyle2024disentangling}.
Several studies \cite{ma2023revisiting,lee2024slow} have also shown that data augmentation alleviates plasticity loss, suggesting a possible connection between overfitting and plasticity loss.
However, these approaches do not tackle the underlying cause of plasticity loss. Furthermore, a recent study \cite{dohare2024loss} found that Dropout, which is commonly used to prevent overfitting, is not effective in mitigating plasticity loss.
This raises the question: why is Dropout ineffective against plasticity loss?
We first analyze why Dropout fails to address plasticity loss.

Building on this analysis, we propose a new method inspired by Dropout, called \textbf{AID} (\textbf{A}ctivation by \textbf{I}nterval-wise \textbf{D}ropout), to tackle plasticity loss.
%====>Before
% Unlike Dropout, AID generates subnetworks by applying dropout with different probabilities to each pre-activation interval.
% Because each interval is assigned a distinct dropout probability, AID acts as a nonlinear function and can therefore serve as a replacement for traditional activation functions.
%====>After
Unlike Dropout, AID applies interval-specific Dropout probabilities, making it functions as a nonlinear activation.
%====>
Through experiments, we demonstrate that AID's masking strategy is highly effective in addressing plasticity loss.
%====>Before
% Additionally, recent studies have shown that plasticity loss does not occur in deep linear networks.
% The authors of \cite{lewandowski2023curvature} provided theoretical evidence for why deep linear networks do not suffer from plasticity loss.
%====>After
Recent studies have shown that plasticity loss does not occur in deep linear networks \cite{lewandowski2023curvature, lewandowski2024plastic, dohare2024loss}.
%====>
We theoretically prove that AID maintains plasticity by regularizing activations to close to those of a linear network. 
Since AID also functions as an activation function, we theoretically prove its compatibility with He initialization, demonstrating that AID can seamlessly replace the ReLU function.

% 그래서 우리는 무엇을 제안한다. inject linearity to neural networks in a stochastic manner

% Our contributions are summarized as follows:

% \begin{itemize} 
% \item We observed that Dropout does not offer benefit for plasticity loss and identify the reasons through hypotheses and toy experiments.
% \item We propose AID, a simple yet effective method to address plasticity loss.
% AID mitigates plasticity loss by stochastically injecting linearity into neural networks.
% During training, AID generates subnetworks by masking specific connections based on pre-activation value ranges.
% During testing, AID ensembles the subnetworks by performing a weighted sum of activations based on their value ranges.
% \item Through extensive experiments, we demonstrate that AID effectively mitigates plasticity loss across various domains, including continual learning and reinforcement learning.
% \item We theoretically prove that AID regularizes networks to behave similarly to deep linear networks, with the degree of regularization controlled by the hyperparameter $p$.
% \item We also show that AID is compatible with He initialization, suggesting that AID can seamlessly replace the ReLU function.
% \end{itemize}

%AID randomly applies ReLU function with probability $p$ and otherwise applies Negative ReLU (NReLU) function. 

\section{Related Works}
\label{sec:related_works}

\subsection{Plasticity in Neural Networks}
% 최근까지 plasticity loss를 다루는 다양한 방법들이 제안되어 왔다.
% 몇몇 연구들은 active unit을 유지하거나 dead unit을 초기화하는데 집중했다.
% 몇 연구들은 모델이 파라미터의 초기 통계량에서 멀어지는 것을 제한하는 것을 탐구했다.
% structure 수정도 제안되었다.
% 하지만 이와 같은 방법들은 특정한 task에만 집중한 반면, 우리는 standard supervised learning을 포함한 모든 task에 대한 영향을 분석한다.
In recent years, various methods have been proposed to address plasticity loss.
Several works have focused on maintaining active units \cite{abbas2023loss, elsayed2024addressing} or re-initializing dead units \cite{sokar2023dormant, dohare2024loss}.
Other studies have explored limiting deviations from the initial statistics of model parameters \cite{kumar2023maintaining, lewandowski2023curvature, elsayed2024weight}.
Additionally, some methods rely on architectural modifications \cite{nikishin2024deep, lee2024slow, lewandowski2024plastic}.  
Plasticity loss also occurs in the reinforcement learning due to its inherent non-stationary. \citet{nikishin2022primacy} proposed resetting the model, while \citet{asadi2024resetting} suggested resetting the optimizer state. 

%====>Before
% \citet{lee2024slow} divided plasticity into two main aspects: trainability and generalizability.
% Trainability refers to the ability of a neural network to continuously reduce the training loss by adapting to new data or environmental distributions.
% In contrast, generalizability emphasizes the capacity of a neural network to retain learned knowledge while effectively adapting to new, unseen data or situations.
% They demonstrated that plasticity loss also occurs under a stationary distribution, as in a warm-start learning scenario where the model is pretrained on a subset of the training data and then fine-tuned on the full dataset.
%====>After
As noted by \citet{berariu2021study}, loss of plasticity can be divided into two distinct aspects: a decreased ability of networks to minimize training loss on new data (trainability) and a decreased ability to generalize to unseen data (generalizability).
While most previous works focused on trainability, \citet{lee2024slow} addressed generalizability loss.
They demonstrated that plasticity loss also occurs under a stationary distribution, as in a warm-start learning scenario where the model is pretrained on a subset of the training data and then fine-tuned on the full dataset.
%====>

%====>Before
% However, most of these works have primarily addressed either a single task, such as plasticity loss or reinforcement learning problems, without considering standard supervised learning settings.
% In this work, we validate our approach on both supervised learning and reinforcement learning tasks, demonstrating its versatility across diverse scenarios.
%====>After
Most existing studies have focused on only one of the following challenges: trainability, generalizability, or reinforcement learning.
However, in this study, we validate our AID method across all three aspects, demonstrating its effectiveness in each scenario.
%====>

\subsection{Activation Function}
Our AID method is a stochastic approach similar to Dropout while also functioning as an activation function.
Therefore, we aim to discuss previously proposed probabilistic activation functions.
Although the field of probabilistic activation functions has not seen extensive research, two noteworthy studies exist.
The first is the Randomized ReLU (RReLU) function, introduced in the Kaggle NDSB Competition \cite{xu2015empirical}.
The original ReLU function maps all negative values to zero, whereas RReLU maps negative values linearly based on a random slope.
During testing, negative values are mapped using the mean of the slope distribution.
%The original ReLU function masks all pre-activations less than zero by setting them to zero, but RReLU randomly preserves some of these negative pre-activations and outputs them as identity values.
Their experimental results suggest that RReLU effectively prevents overfitting.
Another example of a probabilistic activation function is DropReLU \cite{liang2021drop}.
DropReLU randomly determines whether a node's activation is processed through a ReLU function or a linear function.
The authors claim that DropReLU improves the generalization performance of neural networks.
The fundamental distinction between these probabilistic activation functions and our method lies in the generality of our approach.
Unlike simple probabilistic activation functions, our method encompasses techniques such as Dropout and ReLU, providing a more comprehensive framework.

%====>Before
%Another related approach is activation functions that address plasticity loss. For instance, the Concatenated Rectified Linear Units (CReLU) improve ReLU function by making it pass negative value also, which is reported to enhance the plasticity on reinforcement learning \citep{abbas2023loss}. Additionally, rational activation functions were proposed for reinforcement learning \cite{delfosseadaptive}. The authors claim that Rational activation prevents value overfitting and overestimation.
%====>After
Another related approach involves activation functions designed to address plasticity loss.
\citep{abbas2023loss} proposed the Concatenated Rectified Linear Units (CReLU), which concatenates the outputs of the standard ReLU applied to the input and its negation.
This structure prevents the occurrence of dead units, thereby improving plasticity.
Additionally, trainable activation functions have also been shown to effectively mitigate plasticity loss in reinforcement learning \citep{delfosse2024adaptive}.
Specifically, they introduced a trainable rational activation function that prevents value overfitting and overestimation in reinforcement learning.
%====>

% \begin{figure}[t]
% \centerline{\includegraphics[width=80mm]{figures/mainnet_pls_acc.png}}
% \centerline{\includegraphics[width=80mm]{figures/subnet_pls_acc.png}}
% \centerline{\includegraphics[width=80mm]{figures/dropout_preact.png}}
% \caption{\textbf{title} text (\textbf{top}). }
% \label{generalization_figure}
% \end{figure}

\begin{figure*}[ht!]
    \centering
    \includegraphics[width=0.3\textwidth]{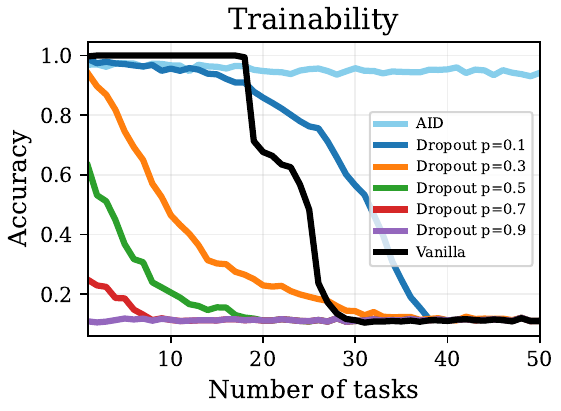}
    \includegraphics[width=0.3\textwidth]{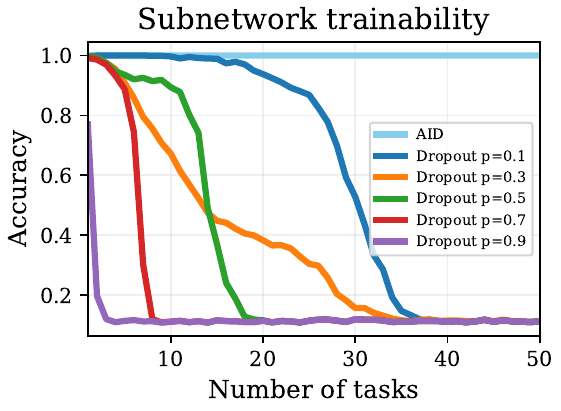}
    \includegraphics[width=0.3\textwidth]{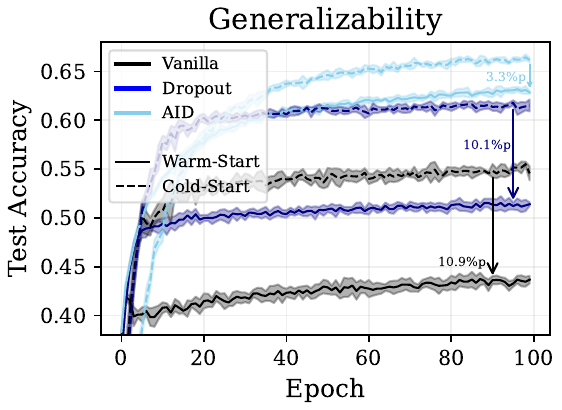}
    \caption{\textbf{Left.} Random label MNIST experiment using an 8-layer MLP. Higher dropout probabilities result in significant trainability loss. 
    \textbf{Middle.} Accuracy of the subnetworks trained on random target. Each subnetworks are sampled from original network after each epoch. Subnetworks of the Dropout also experience trainability loss. \textbf{Right.} Warm-start scenario of Resnet-18 model with CIFAR100 dataset. Dropout improves generalization performance; however, the reduction in accuracy compared to the cold-start scenario is nearly identical to that of the vanilla model.}
    \label{exp_dropout}
\end{figure*}

% Another related approach is trainable activation functions. For instance, the Parametric Rectified Linear Unit (PReLU) improves the ReLU function by making it trainable, which is reported to enhance the generalization performance of neural networks \cite{trottier2017parametric}. Additionally, Rational activation functions were proposed for reinforcement learning \cite{delfosseadaptive}. The authors claim that Rational activation prevents value overfitting and improves sample efficiency.

% 모델의 일반화 성능을 높이기 위하여, 확률적으로 non-linearity를 제어하는 activation function들이 제안되어 왔음
% (Parametric ReLU는 )
% (Rational Activation은 )
% Random ReLU는 
% Dropout Activation은 
% 하지만, ???
%Recent study demonstrated that Dropout does not mitigate and even accelerate plasticity loss \cite{dohare2024loss}. In this paper, we clarify the cause and focus on addressing plasticity loss by stochastically controlling the network connection.

\section{Why is Dropout Ineffective for Maintaining Plasticity?}
\label{sec:dropout}

In this section, we investigate the ineffectiveness of Dropout in mitigating plasticity loss.
Dropout operates by randomly deactivating specific nodes during training, effectively generating random subnetworks.
This mechanism fosters an ensemble effect, where the network learns more generalized features across these subnetworks.
However, our hypothesis suggests that this process does not address plasticity loss:

\textit{``Dropout merely creates multiple subnetworks, all of which suffer from plasticity loss.''}

In other words, because each subnetwork independently experiences plasticity loss, the combined network—treated as an ensemble—also suffers from degraded plasticity.

To validate this hypothesis, we conducted experiments on two forms of plasticity loss: trainability and generalizability. Details of the experimental configurations are provided in Appendix \ref{app:loss_pl_dropout}.

\subsection{Trainability Perspective}
To examine the relationship between Dropout and trainability, we trained an 8-layer MLP on the randomly labeled MNIST dataset.
For each task, labels were shuffled, and the network was trained for 100 epochs.
Following prior studies \citep{lyle2023understanding, kumar2023maintaining}, trainability degradation was quantified as the accuracy drop relative to the first task.
Figure \ref{exp_dropout} (left) demonstrates that higher Dropout probabilities lead to more pronounced trainability degradation.
This observation aligns with prior findings that Dropout exacerbates plasticity loss \citep{dohare2024loss}.
As Dropout probabilities increase, the network is divided into progressively smaller subnetworks, reducing the expected number of parameters available for learning.
This is consistent with the results of \citet{lyle2023understanding}, which show that smaller networks are more vulnerable to plasticity loss.
To further explore this phenomenon, we measured the trainability of individual subnetworks.
At the end of each task, we sampled 10 subnetworks and trained them on new tasks.
As shown in Figure \ref{exp_dropout} (middle), subnetworks with higher Dropout probabilities exhibited more severe trainability degradation, reinforcing our hypothesis that Dropout creates multiple subnetworks, each of which suffers from plasticity loss.
In contrast, our proposed method, AID, effectively maintained trainability across all tasks, demonstrating its robustness against plasticity loss.

\subsection{Generalizability Perspective}
\label{sec:generalizability_perspective}
To evaluate generalizability, we conducted a warm-start learning experiment inspired by \citet{lee2024slow}.
Specifically, we pre-trained a RESNET-18 model on 10\% of the training data for 1,000 epochs before continuing training on the full dataset.
We repeated this experiment for models trained with vanilla settings, Dropout, and AID.
The results are shown in Figure \ref{exp_dropout} (right).
% Interestingly, Dropout seemed to improve generalizability, as both warm-start and cold-start models showed better performance.
Dropout appeared to improve overall performance, with both warm-start and cold-start models showing better accuracy compared to the vanilla setting.
However, we argue that this improvement arises from enhanced model generalization rather than improved generalizability. %rather than a mitigation of plasticity loss.
% The magnitude of improvement with Dropout was similar for warm-start (10.9\%p) and cold-start (10.1\%p) models.
Dropout fails to reduce the accuracy gap between warm-start and cold-start training (10.1\%p) as vanilla setting does (10.9\%p).
If Dropout were primarily addressing plasticity loss, we would expect disproportionately higher performance improvements in warm-start models, as cold-start models inherently maintain perfect plasticity.
% improvement 
In contrast, the performance gap with AID was significantly smaller (3.3\%p compared to the vanilla model's 10.9\%p).
This observation suggests that AID effectively mitigates plasticity loss, as warm-start models trained with AID retain a higher degree of plasticity compared to those trained with Dropout. Extended results regarding generalizability can be found in Appendix~\ref{app:the_effect_of_dropout_on_generalizability}.

\section{Method}
\label{sec:method}

\newtheorem{property}{Property} % Property 환경 정의

\subsection{Notations}
% dropact와 비슷하게 
% r()은 relu, 표기의 용이성을 위해 
We denote $r(\cdot)$ to be a ReLU (Rectified Linear Unit) activation function, which means that $r(x) = \max(x, 0)$ for input data $x$. For ease of notation, we define a negative ReLU, $\bar r(x) = \min(x,0)$. 
%Note that ReLU and negative ReLU have the relation $r(x) + \bar r(x) = x$. 
Furthermore, for a real number $\alpha>0$, we define modified leaky ReLU, $r_\alpha$ as a function that scales the positive part of the input by $\alpha$, and the negative part by $1-\alpha$, formally given as $r_\alpha(x) = \frac{1}{2}x+(\alpha-\frac{1}{2})|x|$ (e.g. $r_1(x) = r(x)$, $r_0(x) = \bar r(x)$, and $r_{1/2}(x) = \frac{1}{2}x$). Note that, $r_\alpha$ differs from leaky ReLU, since its slope on the positive side may not be 1.

% $\max(\alpha x, (1-\alpha)x)$ <- original def of r_\alpha

\subsection{Activation by Interval-wise Dropout (AID)}\label{sec:4.2}

Activation by Interval-wise Dropout (AID) applies different Dropout probabilities to distinct intervals of preactivation values, functioning as a non-linear activation.
Specifically, given \( k \) predefined intervals and their corresponding Dropout probabilities \( \{p_1, p_2, \dots, p_k\} \), AID generates \( k \) distinct Dropout masks.
Each mask is applied only to the values falling within its corresponding interval, while other intervals are unaffected by that mask.
Formally, let the predefined intervals be \( \{I_1, I_2, \dots, I_k\} \), where \( I_j = [l_j, u_j) \) for \( j = 1, 2, \dots, k \).
The union of these intervals covers the entire range of real values, i.e., \( \bigcup_{j=1}^k I_j = \mathbb{R} \).
Moreover, the intervals are disjoint, meaning there is no overlap between any two intervals, i.e., \( I_i \cap I_j = \emptyset \quad \text{for all } i \neq j \).
AID operates as a non-linear activation function when the Dropout probabilities vary across intervals, i.e., when \( p_i \neq p_j \) for some \( i \neq j \).
If all Dropout probabilities are identical (\( p_1 = p_2 = \dots = p_k \)), AID reduces to a standard Dropout mechanism.
The output of the AID mechanism for an input vector \( \mathbf{x} \) is given by: \(AID(\mathbf{x}) = \mathbf{x} \odot \mathbf{m}\).
Where \( \mathbf{m} \) is the Dropout mask vector defined as: 
\( m_i = 0 \) with probability \( p_j \) if \( x_i \in I_j \), and \( m_i = 1 \) otherwise.
This interval-wise Dropout mechanism enables the model to handle varying ranges of preactivation values, effectively functioning as a nonlinear activation function while applying stochastic regularization with different probabilities across intervals.

During the testing phase of traditional Dropout, each weight is scaled by the probability \(1-p\), which is the chance that a node is not dropped.
Similarly, in AID, as each interval has a unique Dropout probability, the preactivation values within each interval $I_j$ are scaled by \(1-p_j\) for testing.
The pseudo-code for AID is presented in Algorithm \ref{alg:gen_AID}.

\begin{algorithm}[h]
   \caption{Activation by Interval-wise Dropout (AID)}
   \label{alg:gen_AID}
\begin{algorithmic}
    \STATE {\bfseries Given:} Predefined intervals $\{I_1, I_2, \dots, I_k\}$, Dropout probabilities $\{p_1, p_2, \dots, p_k\}$
    \STATE {\bfseries Input:} Preactivation values $\mathbf{x} \in \mathbb{R}^n$
    \STATE {\bfseries Output:} Postactivation values $\mathbf{y} \in \mathbb{R}^n$
    \IF{training phase}
        \STATE Initialize $\mathbf{y} \in \mathbb{R}^n$ as an empty vector
        \FOR{each \(x_i \in \mathbf{x}\)}
            \STATE Identify interval \( I_j \) such that \( x_i \in I_j \)
            \STATE Initialize binary variable \( m \sim Bernoulli(1 - p_j) \)
            \STATE Apply mask: \( y_i = x_i \cdot m \)
        \ENDFOR
    \ELSIF{test phase}
        \STATE Initialize $\mathbf{y} \in \mathbb{R}^n$ as an empty vector
        \FOR{each \(x_i \in \mathbf{x}\)}
            \STATE Identify interval \( I_j \) such that \( x_i \in I_j \)
            \STATE Apply scaling: \( y_i = x_i \cdot (1 - p_j) \)
        \ENDFOR
    \ENDIF
    \STATE \textbf{return} $\mathbf{y}$
\end{algorithmic}
\end{algorithm}

AID generalizes concepts from ReLU, standard Dropout, and DropReLU.
We provide a detailed discussion of these relationships in Appendix \ref{app:relation}.
Since defining each interval and its corresponding Dropout probability requires exploring a vast hyperparameter space, Algorithm \ref{alg:gen_AID} demands extensive hyperparameter tuning.
To address this, we introduce a simplified version of AID in the next section.

\subsection{Simplified Version of AID}
% 우리는 pre-activation의 값의 범위에 따라 Dropout의 prop을 다르게 적용하는 AID(Dropout for Identity Activation)를 제안한다. 
% 이 논문에서는 0을 기준으로 두 부분으로 나누어 적용하는 방법론을 기준으로 설명한다.
% 양수에 (1-p)의 Dropout, 음수에 p의 Dropout을 적용하는 것은 p의 확률로 relu, 1-p의 확률로 nrelu를 적용하는 것과 같음을 보임
Section \ref{sec:4.2} shows that this methodology offers infinitely many possible configurations depending on the number and range of intervals, as well as the Dropout probabilities.
%======>Before
%In this paper, we primarily focus on the case where the pre-activation values are divided into two parts around zero and apply different Dropout probabilities separately for each part. Specifically, let $AID_{p,q}(\cdot)$ represent a function that applies a Dropout rate of $p$ to positive and $q$ to negative values.
%======>After
To reduce the hyperparameter search space and improve computational efficiency, we propose the Simplified AID.
Specifically, we define $AID_p(\cdot)$ as a function that applies a Dropout rate of $1-p$ to positive and $p$ to negative values.
This definition is intuitive since $AID_p$ behaves like ReLU when $p=1$ and a linear network when $p=0.5$.
Additionally, we will demonstrate that this simplified version has an effect that regularizes to linear network and an advantage when He initialization is applied, as discussed in Section \ref{Sec:TheoreticalAnalysis}.
Next, we explore a property of $AID_p$ that are useful for its practical implementation.

\begin{property}
\label{property:2}
Applying $AID_p(\cdot)$ is equivalent to applying $r(\cdot)$ with probability $p$, and $\bar{r}(\cdot)$ with probability $1-p$.
\end{property}

% 성질 1을 이용하면 구현을 편하게 할 수 있다.
We provide a brief proof in Appendix \ref{app:proof_property_2}.
Using Property \ref{property:2}, the implementation of AID becomes straightforward.
An important consideration is whether scaling up is performed during training.
%Dropout scales up the values during training by a factor of $\frac{1}{1-p}$, ensuring that the expected output remains consistent between the training and test phases.
%Dropout layer with probability $p$ scales the value by $\frac{1}{1-p}$ at training phase, so it acts as an identity function during the test phase.
%However, since AID functions as a non-linear activation, we scale the value at test phase.
% This adjustment compensates for the randomly dropped nodes and maintains the stability of the overall activation distribution.
% In contrast, AID does not apply scaling up, which is similar to how ReLU functions.
% Since ReLU drops negative values without adjusting the scale of the remaining positive activations, AID follows a similar principle by not requiring compensation for the dropped nodes.
Unlike the inverted implementation of Dropout, which behaves as the identity function during the test phase, AID is intended to operate as an activation function and thus should retain its nonlinearity at test time at test time. To ensure this, we adopt the standard implementation, following the approach of \citet{srivastava2014dropout}.
Specifically, AID utilizes $r_\alpha(\cdot)|_{\alpha=p}$, which is the same as passing the average values from the training phase.
To clarify the process, we provide the pseudo-code for the AID activation in Algorithm \ref{alg:AID}.

\begin{algorithm}[h]
   \caption{Simplified Version of AID}
   \label{alg:AID}
\begin{algorithmic}
    \STATE {\bfseries Given:} ReLU $r$, negative ReLU $\bar r$, modified leaky ReLU $r_\alpha$ and one vector $\mathbf 1\in \mathbb R^n$
   \STATE {\bfseries Input:} Preactivation values $\mathbf{x}\in \mathbb R^n$, coefficient $p$
   \STATE {\bfseries Output:} Postactivation values $\mathbf{y} \in \mathbb R^n$
   \IF{training phase}
      \STATE $\mathbf{M}\leftarrow$ Generate binary masks  from $Bernoulli(p)$

      \STATE $\mathbf y \leftarrow \mathbf M \odot r(\mathbf{x}) + (\mathbf 1-\mathbf M) \odot \bar r(\mathbf x)$ 
   \ELSIF{test phase}
      \STATE $\mathbf y \leftarrow r_p(\mathbf x)$
   \ENDIF
   \STATE \textbf{return} $\mathbf{y}$
\end{algorithmic}
\end{algorithm}

Algorithm \ref{alg:AID} demonstrates that AID can be easily applied by replacing the commonly used ReLU activation.
Moreover, it shares the same computational complexity as standard Dropout.
These properties enable AID to integrate seamlessly with various algorithms and model architectures, regardless of their structure.
For reproducibility, we provide the PyTorch implementation of the AID module in Appendix \ref{app:implementation}. Throughout this paper, AID is implemented as described in Algorithm \ref{alg:AID}, unless specified otherwise.
In the next section, we study the factors that contribute to the effectiveness of AID.
% 다음 섹션에선 AID가 좋은 이유를 이론적으로 탐구해보자.

\subsection{Theoretical Analysis}
\label{Sec:TheoreticalAnalysis}
%1. Linear network로 regularize하는 효과가 있음
%2. He initialization에 안정적임 따라서 relu를 효과적으로 대체할 수 있음

% cbp에서 linear network는 plasticity loss가 발생하지 않는 것을 보았고, fourier는 그것을 이론적으로 증명함. 이것은 plassticity loss 완화에 linearity가 중요하단 것을 암시함

Recent studies \citep{dohare2024loss, lewandowski2024plastic} have found that linear networks do not suffer from plasticity loss and provided theoretical insights into this phenomenon.
This highlights that the properties of linear networks play a crucial role in resolving the issue of plasticity loss.

From an intuitive perspective, AID shares properties with deep linear networks in that, it provides the possibility of linearly passing all preactivation values and having gradients of 1.
This behavior contrasts with activations such as ReLU, RReLU, and leaky ReLU, which tend to pass fewer or no values for negative preactivation.
Building on this intuition, the following theoretical analysis establishes that AID as a regularizer, effectively constraining the network to exhibit properties of a linear network.

\begin{theorem}
    \label{Thm:1}
    For a 2-layer network, AID has a regularization effect that constrains the model to behave like a linear network. Formally,  let weight matrices be \( \mathbf{W}_1, \mathbf{W}_2 \in \mathbb R^{n\times n} \), the input and target vectors be \( \mathbf{x}, \mathbf{y} \in \mathbb R^n\), and the coefficient of AID be $p$. Then, for learnable parameter $\theta$:
 \begin{align}L_{p}^{AID}(\theta) &\doteq\mathbb{E}\big[\|\mathbf{W}_2 \text{AID}_p(\mathbf{W}_1 \mathbf{x}) - \mathbf{y}\|_2^2 \big] \label{eq:thm1}
 \\L_{p}(\theta) &\doteq \|\mathbf{W}_2 r_p(\mathbf{W}_1 \mathbf{x}) - \mathbf{y}\|_2^2 \nonumber
 \\ R_p(\theta) &\doteq  \|\mathbf{W}_2 \big(\frac{1}{2}\mathbf{W}_1 \mathbf{x}\big) - \mathbf{W}_2 r_p(\mathbf{W}_1 \mathbf{x})\|_2^2 \nonumber\\ \implies L_{p}^{AID}(\theta) &\geq L_{p}(\theta) + \frac{4p(1-p)}{n(2p-1)^2}R_p(\theta). \nonumber
\end{align}
\end{theorem}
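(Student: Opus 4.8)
The plan is to combine the stochastic representation from Property~\ref{property:2} with a bias--variance decomposition, which reduces the entire statement to a single Cauchy--Schwarz inequality. Writing $\mathbf z = \mathbf W_1 \mathbf x$ for the preactivation, Property~\ref{property:2} lets me represent $\text{AID}_p(\mathbf z) = \mathbf M \odot r(\mathbf z) + (\mathbf 1 - \mathbf M)\odot \bar r(\mathbf z)$ with $\mathbf M$ having independent $Bernoulli(p)$ entries. The crucial preliminary observation is that, because $r_p = p\,r + (1-p)\bar r$ pointwise, the test-phase map equals the mean of the training-phase map: $\mathbb E[\text{AID}_p(\mathbf z)] = r_p(\mathbf z)$. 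This is exactly what will identify the ``bias'' term with $L_p(\theta)$.

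First I would apply the bias--variance identity to $\mathbf u = \mathbf W_2 \text{AID}_p(\mathbf z)$, namely $\mathbb E\|\mathbf u - \mathbf y\|_2^2 = \|\mathbb E[\mathbf u] - \mathbf y\|_2^2 + \mathbb E\|\mathbf u - \mathbb E[\mathbf u]\|_2^2$, the cross term vanishing by centering. By the observation above the bias term is precisely $\|\mathbf W_2 r_p(\mathbf z) - \mathbf y\|_2^2 = L_p(\theta)$, so it only remains to lower bound the variance term $V \doteq \mathbb E\|\mathbf W_2(\text{AID}_p(\mathbf z) - r_p(\mathbf z))\|_2^2$. Note this decomposition is an \emph{exact} equality, so all of the inequality in the theorem will come from the final step.

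Next I would evaluate $V = \operatorname{tr}\bigl(\mathbf W_2^\top \mathbf W_2\, \operatorname{Cov}(\text{AID}_p(\mathbf z))\bigr)$ explicitly. Independence of the masks makes the covariance diagonal, so I only need the per-coordinate variance; using the two pointwise identities $r(z_i)\bar r(z_i) = 0$ and $r(z_i)^2 + \bar r(z_i)^2 = z_i^2$, a short computation gives $\operatorname{Var}(\text{AID}_p(z_i)) = p(1-p)z_i^2$, whence $V = p(1-p)\sum_{i=1}^n z_i^2\,\|(\mathbf W_2)_{:,i}\|_2^2$ with $(\mathbf W_2)_{:,i}$ the $i$-th column. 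In parallel I would simplify $R_p$: since $\tfrac12 z - r_p(z) = (\tfrac12 - p)|z|$ pointwise, one obtains $R_p(\theta) = \tfrac{(2p-1)^2}{4}\bigl\|\mathbf W_2 |\mathbf z|\bigr\|_2^2 = \tfrac{(2p-1)^2}{4}\bigl\|\sum_i |z_i|(\mathbf W_2)_{:,i}\bigr\|_2^2$.

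The final step, carrying the essential content, is the vector Cauchy--Schwarz bound $\bigl\|\sum_{i=1}^n |z_i|(\mathbf W_2)_{:,i}\bigr\|_2^2 \le n\sum_{i=1}^n z_i^2\,\|(\mathbf W_2)_{:,i}\|_2^2$, from the triangle inequality followed by Cauchy--Schwarz on the resulting sum of $n$ nonnegative terms. This yields $\|\mathbf W_2|\mathbf z|\|_2^2 \le \tfrac{n}{p(1-p)}V$, so that $\frac{4p(1-p)}{n(2p-1)^2}R_p = \frac{p(1-p)}{n}\|\mathbf W_2|\mathbf z|\|_2^2 \le V$, and adding $L_p$ back gives the claim. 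The main obstacle I anticipate is matching the exact constant $\tfrac{4p(1-p)}{n(2p-1)^2}$---in particular tracking how the $(2p-1)^2$ produced by $R_p$ cancels the denominator while the factor $n$ arises solely from Cauchy--Schwarz. I would also flag that the statement implicitly assumes $p \neq \tfrac12$ (and $p\in(0,1)$): at $p=\tfrac12$ one has $R_p = 0$ while the coefficient diverges, and there the bound degenerates to the trivial $L_p^{AID} \ge L_p$ coming directly from nonnegativity of $V$.
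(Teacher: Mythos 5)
Your proof is correct and is essentially the paper's argument in cleaner packaging: the bias--variance identity with $\mathbb{E}[\mathrm{AID}_p(\mathbf z)] = r_p(\mathbf z)$ reproduces exactly the paper's trace expansion (whose cross terms likewise cancel because $\mathbb{E}[\mathbf S]=\mathbf S_p$), your per-coordinate variance $p(1-p)z_i^2$ is the paper's $p(1-p)(1-2d_i)^2v_i^2$ term, and your triangle-inequality-plus-Cauchy--Schwarz step producing the factor $n$ is precisely the paper's inequality at Equation \eqref{eq:wrong_on_dropact}, the very step it footnotes as correcting an error in \citet{liang2021drop}. Your added observation that the bound requires $p\neq\tfrac12$ (degenerating to $L_p^{AID}\geq L_p$ at $p=\tfrac12$) is consistent with the paper's implicit division by $2p-1$ and is a worthwhile clarification.
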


We provide the detailed proof in Appendix \ref{app:proof_1}, following the approach taken by \citet{liang2021drop}.
Theorem \ref{Thm:1} shows that the lower bound of Equation \eqref{eq:thm1}—the loss under AID—can be decomposed into two terms: the loss under a modified leaky ReLU ($L_{p}$), and a regularization term ($R_p$) that encourages linearity in the network. As the hyperparameter $p$ approaches 0.5, AID imposes stronger regularization toward linear behavior.
Prior work \cite{lewandowski2024plastic} has shown that linear networks do not suffer from trainability issues, suggesting that the linearity-inducing effect of AID directly contributes to mitigating plasticity loss.
On the other hand, simply adding a Dropout layer with ReLU does not yield the same effect, and the corresponding lemma is provided in Appendix \ref{app:cor_1}.

%=====> Before
%While AID provides these beneficial effects, replacing ReLU with AID raises concerns about the potential for exploding or vanishing gradients or output during the early stages of training.
%=====> After
Since AID functions as an activation, it is important to examine its compatibility with existing network initialization methods designed for activations.
Therefore, we show that AID ensures the same stability as the ReLU function when used with Kaiming He initialization.
%=====>
The following property illustrates this strength:
%Xavier and Kaiming He initialization methods are designed to ensure stability during the early stages of training, assuming the use of activation functions such as linear or ReLU.
%Therefore, it is crucial to investigate whether AID is compatible with these initialization schemes.
%Fortunately, AID integrates seamlessly with Kaiming He initialization without the need for any modifications.
%Thus, examining the compatibility of AID with these initialization schemes is essential to maintaining stable training dynamics.
%Notably, AID can be integrated well with Kaiming He initialization without requiring any modifications.

\begin{property}
    \label{property:He_init}
    Replacing ReLU activation with $AID_p$ ensures the same stability during the early stages of training when using Kaiming He initialization.
\end{property}

% 자세한 설명은 Appendix 가서 봐라.
Property \ref{property:He_init} guarantees that the output and gradient will not explode or vanish, thereby supporting the learning stability of the network during its initial training.
The outline of the explanation builds on \citet{he2015delving}’s formulation, utilizing the fact that replacing ReLU with AID does not alter the progression of the equations.
A detailed explanation is provided in Appendix \ref{app:He_init}.

\section{Experiments}
\label{sec:experiments}

\begin{figure*}[t]
    \centering
    \includegraphics[width=\textwidth]{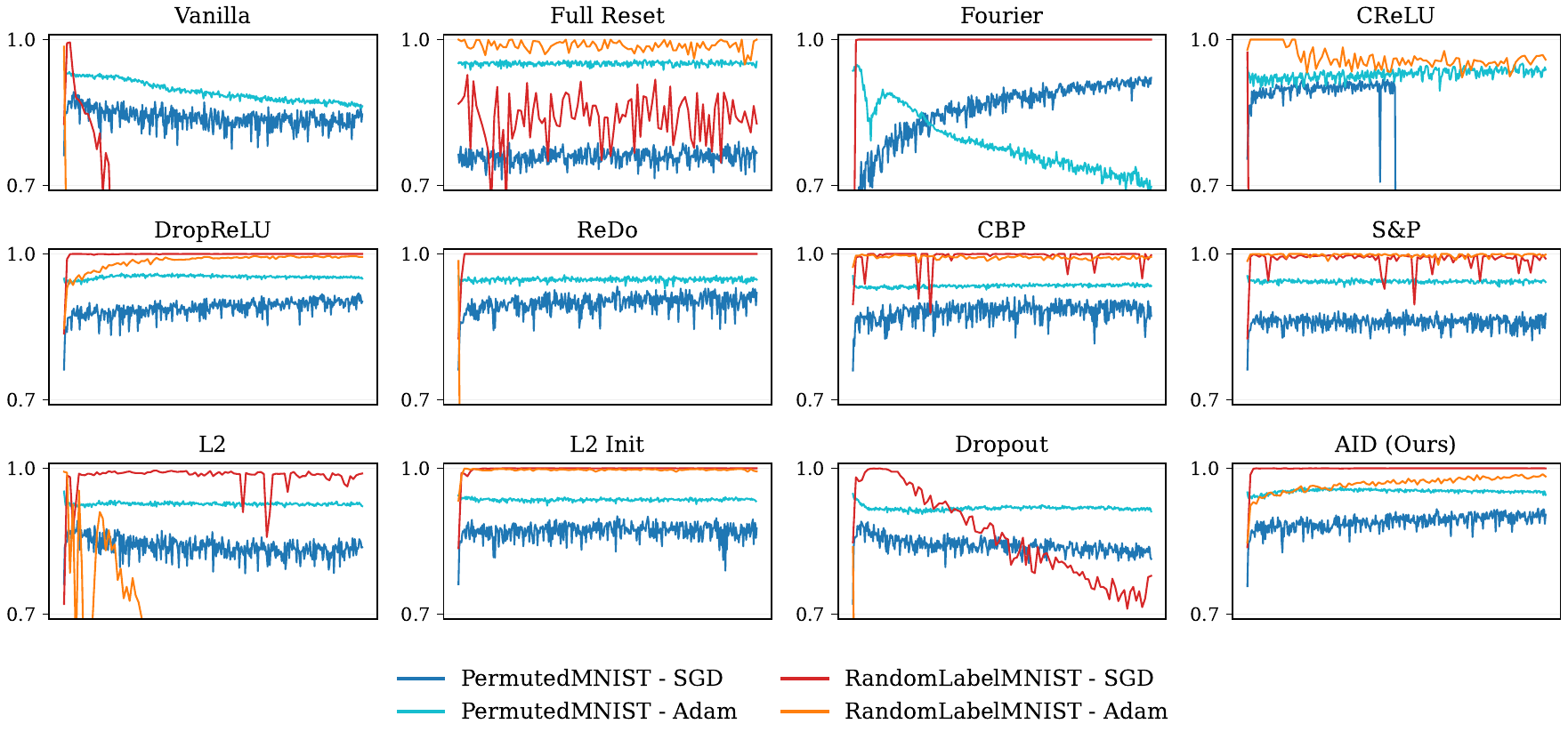}
    \caption{\textbf{Results for Trainability Experiments.} We plot the train accuracy comparisons across different optimizers on Permuted MNIST and Random Label MNIST for each method, with the x-axis representing tasks. For visibility, we did not show the region using standard deviation. Notably, AID consistently achieves high accuracy across all conditions, demonstrating its robustness in maintaining trainability.}
    \label{fig:trainability}
\end{figure*}

\begin{figure*}[t]
    \centering
    \includegraphics[width=0.33\textwidth]{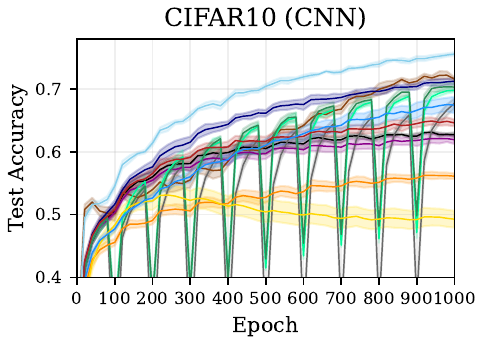}
    \includegraphics[width=0.33\textwidth]{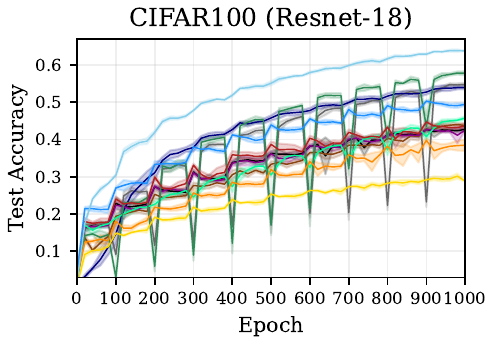}
    \includegraphics[width=0.33\textwidth]{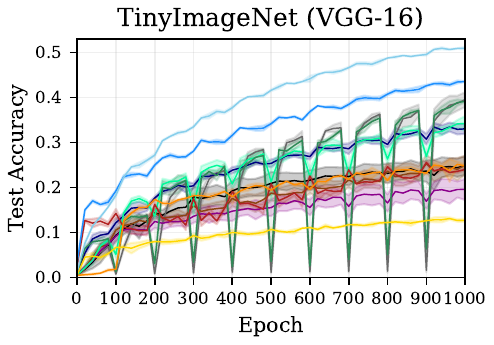}
        \includegraphics[width=0.33\textwidth]{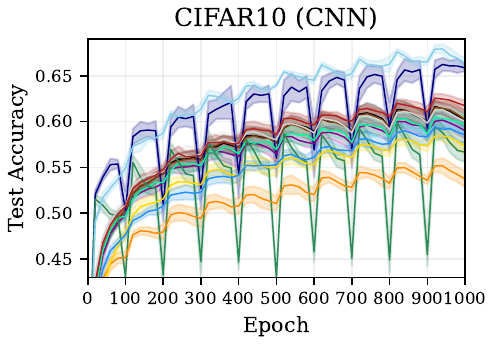}
    \includegraphics[width=0.33\textwidth]{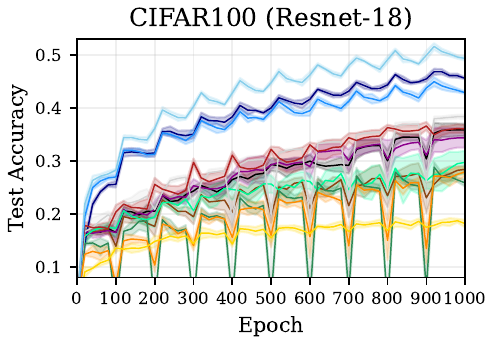}
    \includegraphics[width=0.33\textwidth]{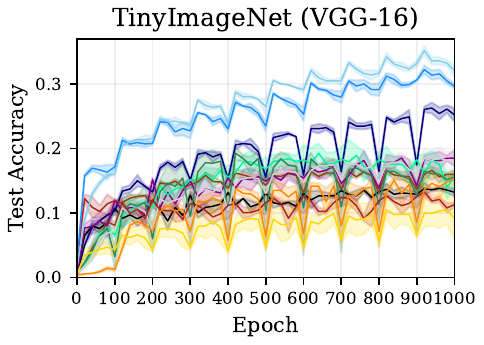}
    \includegraphics[width=0.8\textwidth]{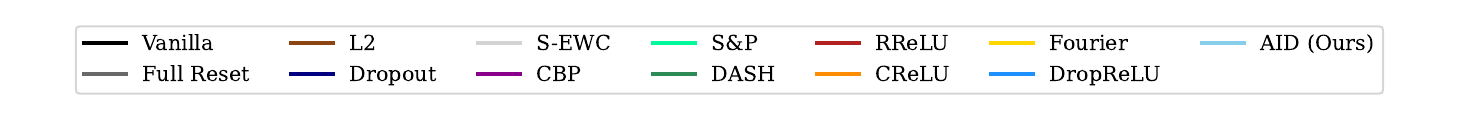}
\caption{\textbf{Results on Continual Full \& Limited Settings.} The dataset was divided into 10 chunks, with models trained over 10 stages of 100 epochs each. The settings are distinguished by whether access to previously seen data is allowed (\textbf{top row, continual full}) or not (\textbf{bottom row, continual limited}). AID consistently outperforms baseline methods, highlighting its effectiveness in mitigating plasticity loss across both settings.}
    
    \label{exp_continual_full}
\end{figure*}

% 본 섹션에서는, 실험적으로 AID의 효과를 입증한다.
% 먼저, loss of plasticity를 효과적으로 해결하는지 확인하기 위하여, 다양한 Continual Learning 실험세팅들에 대하여 실험을 수행한다.
% 4.2 섹션에서는, 이러한 효과가 실제로 RL에서 이득을 보이는지를 확인한다.
% 마지막으로, 제안하는 방법이 고전적인 지도학습에서도 문제없이 잘 동작함을 보인다.
In this section, we empirically evaluate the effectiveness of the proposed method across various tasks.
First, we compare AID with previous methods in a range of continual learning tasks to assess its impact on mitigating plasticity loss in Section \ref{sec:exp_CL}.
Second, we investigate whether AID provides tangible benefits in reinforcement learning in Section \ref{sec:exp_RL}.
Finally, we demonstrate that AID improves generalization in standard classification tasks in Section \ref{sec:exp_gen}.

\subsection{Continual Learning}
\label{sec:exp_CL}
% Lee et at.은 plasiticty를 두 가지 주요 측면으로 나누었다: trainability, generalizability.
% trainability는 신경망이 새로운 데이터나 환경에 효과적으로 적응하고 학습할 수 있는 능력을 의미한다.
% 반면에, generalizability는 신경망이 단순히 보지 못한 데이터나 상황에서 정확하게 수행하는 것을 넘어, 기존에 학습한 지식을 잃지 않으면서도 새로운 과제나 환경에 효과적으로 적응할 수 있는 능력을 강조한다.
% plasticity를 유지하려는 여러 방법들이 제안되었었다.
% 우리는 체계적으로 그들의 영향을 분석하고자, 다음과 같이 분류하고 AID와 비교한다: Regularization, Re-initialization, Activation function
We compare AID with various methods that were proposed to address the challenges of maintaining plasticity.
% Various methods have been proposed in the literature to address the challenges of maintaining plasticity.
In addition, Randomized ReLU and DropReLU are included due to their structural similarity to our method.
% We categorize these methods and compare them with AID.
Refer to Appendix \ref{app:baselines} for the description of each baseline.

\begin{itemize}
    \item \textbf{Regularization}: Dropout \cite{srivastava2014dropout}, L2 regularization (L2) \cite{krogh1991simple} and L2 regularization to initial value (L2 Init) \cite{kumar2023maintaining}
    \item \textbf{Re-initialization}: Shrink \& Perturb (S\&P) \cite{ash2020warm}, Recycling Dormant neurons (ReDo) \cite{sokar2023dormant} and Continual Backprop (CBP) \cite{dohare2024loss}
    \item \textbf{Activation function}: Concatenated ReLU (CReLU) \cite{abbas2023loss}, Randomized ReLU (RReLU) \cite{xu2015empirical}, deep Fourier features (Fourier) \cite{lewandowski2024plastic}, and DropReLU \cite{liang2021drop}
\end{itemize}

\subsubsection{Trainability}
\label{sec:trainability}
% 우리는 먼저 AID가 trainability를 잘 유지하는 것을 확인한다.
% Lee et al.은 trainability를 두 가지로 분류했다: Input trainability, Label trainability.
% Input trainability는 input data의 변화에 적응하는 능력을 가르키며, Label trainability는 input-output relationship의 변화에 적응하는 능력을 나타낸다.
% 이러한 개념을 바탕으로, 우리는 Input plasticity를 평가하기 위해 Permuted MNIST 실험을, Label plasticity를 평가하기 위해 Random Label MNIST 실험을 수행했다.
% 각 실험세팅에 대한 자세한 사항은 Appendix. ?에서 확인할 수 있다.
We start by validating the AID's ability to maintain trainability.
\citet{lee2024plastic} categorized the trainability of neural networks into two cases: Input trainability and Label trainability\footnote{In \citet{lee2024plastic}, the terms input plasticity and label plasticity were originally used. For consistency with our study, we instead adopt the terms input trainability and label trainability.}.
Input trainability denotes the adaptability to changing of input data, and label trainability denotes the adaptability to evolving input-output relationships.
Under the concept, we conducted permuted MNIST experiment for input plasticity by randomly permuting the input data and random label MNIST experiment for label plasticity by shuffling the labels at each task.
Detailed experimental settings are shown in Appendix \ref{app:trainability}.

% 우리는 PermutedMNIST와 RandomLabelMNIST에 대해서 optimizer, learning rate 별로 수행한 실험을 method에 따라서 plot하였음.
% 아무런 규제가 없는 vanilla는 SGD와 Adam optimizer 모두에서 점차 training accuracy가 하락하는 모습을 보인다.
% plasticity loss를 해결하기 위해 만들어진 method들(fourier, crelu, redo, cbp, s&p, L2 Init)은 여러 task들에서 plasticity를 향상시켰지만 일부 task 들에선 accuracy나 유지력에 대해서 떨어지는 모습을 보여준다. 
% dropout은 대부분의 실험에서 바닐라에 비해서 plasticity loss에 도움을 주지 않음
% 반면에, DropReLU와 AID는 모든 setting에 대해서 accuracy를 높고 지속적으로 유지하는 것을 보여줌
% 이에 대한 추가적인 분석은 Appendix {}에서 진행함
Figure \ref{fig:trainability} shows the training accuracy for each method, categorized by optimizer and experimental settings.
We used learning rates of $3\mathrm{e}{-2}$ for SGD and $1\mathrm{e}{-3}$ for Adam optimizer.
The results reveal several key findings.
First, training accuracy of vanilla model gradually declines for both SGD and Adam optimizers.
In contrast, methods specifically designed to address plasticity loss—such as Fourier, CReLU, ReDo, and CBP—demonstrate improved plasticity under most conditions.
However, these methods fail to retain accuracy in certain conditions.
In particular, S\&P and L2 Init often struggle to maintain plasticity when trained with a low learning rate (see Appendix \ref{app:extended_results_trainability}), suggesting that their effectiveness may be sensitive to learning rate choices.
Notably, Dropout fails to mitigate plasticity loss in most cases compared to the vanilla model, supporting the results of \citet{dohare2024loss}.
In contrast, DropReLU and AID consistently achieve high accuracy across all settings, which can be attributed to the regularizing effect of the linear network, as discussed in Section \ref{Sec:TheoreticalAnalysis}.
%Further analysis and detailed results with various learning rates and methods are provided in Appendix \ref{app:additional_results_trainability}.
We provide detailed experimental results with various learning rates and methods, as well as additional metrics relevant to plasticity loss—including dormant neuron rate, effective rank, and average unit sign entropy—for Vanilla, Dropout, and AID in Appendix~\ref{app:additional_results_trainability}.

\subsubsection{Generalizability}
\label{sec:generalizability}
% 앞선 실험을 통해, 몇 방법론들과 같이 trainability를 잘 유지하는 것을 확인했다.
% 하지만, 단순히 새로운 데이터에 대해 학습을 잘 하기만 하는 이러한 실험 세팅은 현실적이지 않다.
% 우리는 generalizability를 평가하며, AID가 현실적인 실험 세팅에서도 효과적인지를 확인한다.
% 실험 세팅은 다음과 같다: ~
% In the earlier experiment, it was confirmed that AID effectively maintained trainability, similar to some baseline methods that showed good performance.
% However, these setups, which focus solely on fitting new data, are not practical in real-world scenarios.
% We conducted additional experiments to evaluate generalizability and validate whether AID is an effective method in realistic settings.
% First, we explore variations of continual learning settings, which build upon the previous works \citep{lee2024slow,shen2024step}.
% These settings represent an extension of the warm-start scenario proposed by \citet{ash2020warm}.
% We divided the training process into 10 phases, with each phase consisting of 100 epochs.
% At each phase, 10\% of dataset is appended.
% The experiments were categorized into Continual Full and continual limited based on whether previously seen data was included during training.
In our earlier experiments, we confirmed that AID effectively maintains ability to adapt new data.
However, while adaptability is important, it is even more critical to ensure that the model retain the ability to generalize well to unseen data.
Therefore, generalizability is a critical aspect related to plasticity loss.
We evaluated the generalizability of AID across three benchmark settings: continual-full, continual-limited, and class-incremental.
For this evaluation, we utilized three datasets: CIFAR10, CIFAR100 \cite{krizhevsky2009learning}, and TinyImageNet \cite{le2015tiny}.

\textbf{Continual Full \& Limited} \quad
We investigated two continual learning settings inspired by previous works \citep{lee2024slow,shen2024step}. In the continual full setting, an extended version of the warm-start framework, training progress is divided into 10 phases, with 10\% of the dataset introduced at each phase while maintaining access to all previously seen data throughout training.
In contrast, the continual limited setting imposes stricter constraints by restricting access to past data, reflecting practical challenges such as privacy concerns and storage limitations.
In our experiments, we exclude L2 Init and ReDo, as they have been shown to have no improvement on generalizability, and instead include the recently proposed Direction-Aware SHrinking (DASH) \cite{shin2024dash} as a baseline.
Additionally, we incorporate Streaming Elastic Weight Consolidation (S-EWC) \cite{kirkpatrick2017overcoming, elsayed2024addressing} on continual limited, which has demonstrated effectiveness in mitigating forgetting under constrained settings.
% Continual full 결과
%1. Continual setting의 결과는 figure {}에 제시함
%2. Continual Full setting에서 DASH나 L2 같은 일부 method들은 task와 dataset에 따라 full reset과 비교하여 비슷하거나 더 나은 성능을 제공하는 경우가 존재함
%3. 하지만 AID는 학습 초반부터 후반까지 다른 method들과 비교하여 눈에 띄는 퍼포먼스 차이를 보여줌. 
%4. 이는 데이터가 추가 될때마다 작동하는 re-init 계열들보다 학습 도중에 꾸준히 적용되는 regularization이나 activation function의 중요성을 상기함
%5. 이 놀라운 결과는 data access가 제한되는 continual limited의 경우에도 유지됨.
%6. 흥미롭게도 여기선 dropout 계열의 방법들이 모두 성능이 높았는데, 이는 dropout이 catastrophic forgetting을 극복하는데 도움을 준다는 이전 연구의 결과를 뒷받침함

The results of the Continual setting are presented in Figure \ref{exp_continual_full}.
Several methods—such as DASH and S\&P—achieve comparable or even superior performance to full reset, depending on the specific task and dataset.
However, AID demonstrates a noticeable performance advantage over other methods throughout the entire training process.
This result highlights that plasticity can be maintained without the need to re-initialize certain layers or neurons, which carries the risk of losing previously learned knowledge.
Notably, this trend persists even in the continual limited setting.
Interestingly, methods belonging to the Dropout family (Dropout, DropReLU, and AID) also exhibit strong performance in this setting, aligning with previous findings that suggest Dropout can help mitigate catastrophic forgetting \cite{mirzadeh2020dropout}. We evaluated the sensitivity of AID to its hyperparameter in the continual full setting with results confirming its robustness provided in Appendix~\ref{app:hyperparameter_sensitivity}.

\textbf{Class-Incremental} \quad
In addition to these continual learning settings, we also conducted experiments on class-incremental Learning, inspired by previous works \cite{lewandowski2024plastic, dohare2024loss}.
Unlike the warm-start framework, this approach partitions the training process into 20 phases based on class labels, incrementally introducing new classes at each phase.
Like continual full setting, the model is trained on all available data, including previously introduced class data.
Further details regarding the experimental setup can be found in Appendix \ref{app:generalizability}.

\begin{figure}[t]
    \centering
    \includegraphics[width=0.08\columnwidth]{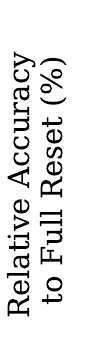}
    \includegraphics[width=0.45\columnwidth]{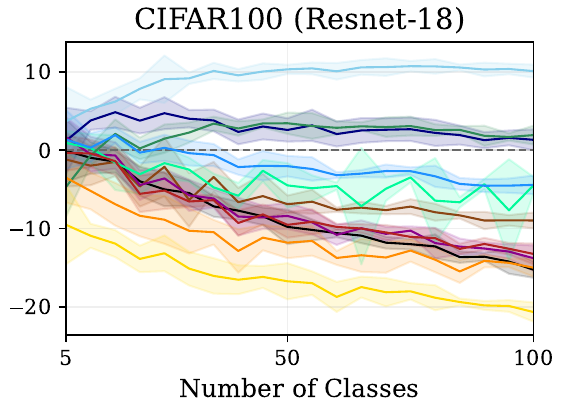}
    \includegraphics[width=0.45\columnwidth]{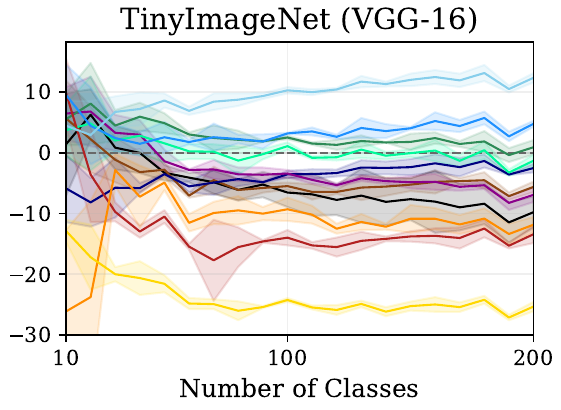}
    \includegraphics[width=\columnwidth]{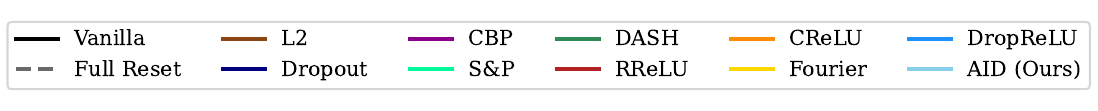}
    
    \caption{\textbf{Results on Class-Incremental Setting.} The figure shows the relative accuracy compared to a model trained from scratch. AID consistently outperforms the full reset approach, maintaining its advantage throughout the entire training process.}
    
    \label{exp_class_incremental}
\end{figure}

%1. 우리는 class가 추가될 때마다 full reset한 model과의 test accuracy 차이를 Figure 5에 보여줌.
%2. Dropout과 DASH같은 method들은 학습 중에 full reset에 accuracy에 대한 이점을 보임
%3. 하지만 AID는 학습이 진행되면서 full reset에 대한 이점이 줄지않는 유일한 method였음
%4. 결과적으로 generalizability를 실험한 여러 실험에서 AID는 plasticity loss를 훌륭하게 완화한다고 볼 수 있음.
Figure \ref{exp_class_incremental} presents the difference in test accuracy between a model trained with full reset as each class is introduced.
Methods—such as Dropout and DASH—initially demonstrate an accuracy advantage when full reset is applied during training.
However, as training progresses, this advantage diminishes, highlighting their limitations in long-term plasticity retention.
In contrast, AID stands out as the only method that consistently maintains its advantage over full reset throughout the training process.
These results suggest that AID effectively mitigates plasticity loss across various experiments evaluating generalizability, reinforcing its robustness in continual learning scenarios. Additional figures showing overall test accuracy are provided in Appendix \ref{app:omitted_results_CI}.

\subsection{Reinforcement Learning}
\label{sec:exp_RL}
% 높은 Replay ratio: high sample efficiency
% 하지만 RR을 높이면 초반 data에 overfit되고 plasticity loss를 잃게 됨(nikishin - reset, kumar - srank)
% normal dqn은 이러한 문제를 겪지만 AID를 사용하면 이를 완화할 수 있음을 보임
% 실험 세팅은 다음과 같다: ~
% 결과는 다음과 같다: ~
A high replay ratio is known to offer high sample efficiency in reinforcement learning.
However, increasing the replay ratio often leads to overfitting to early data and results in plasticity loss, as highlighted in previous studies \cite{kumar2020implicit, nikishin2022primacy}.
In this section, we demonstrate that using AID can effectively mitigate these challenges by maintaining plasticity while benefiting from high sample efficiency.

% 우리는 ~와 ~에서 한 것처럼 Atari 2600의 17개의 게임을 사용했다
%  ~와 ~처럼 Since increasing the replay ratio increases the training time and cost, we train DQN for 10M frames, as opposed to the regular 200M. 
% CNN 모델을 사용했으며 DQN 알고리즘을 사용함
% 안정적인 탐험을 하기 위해 AID를 사용할 경우rollout 시에 model을 test phase로 진행함(즉 r_p를 사용)
The experimental setup is as follows: 
Unlike standard DQN \cite{mnih2015human} use $RR=0.25$, we train with $RR=1$ on several Atari \cite{bellemare2013arcade} tasks, using the CleanRL environment \cite{huang2022cleanrl}.
We use the 17 games to train and hyperparameter settings from \citet{sokar2023dormant}.
Due to the high computational cost associated with a high RR, we followed the settings of \citet{sokar2023dormant, elsayed2024weight}, where the models are trained for 10 millions frames.

\begin{figure}[h]
    \centering
    \includegraphics[width=0.8\columnwidth]{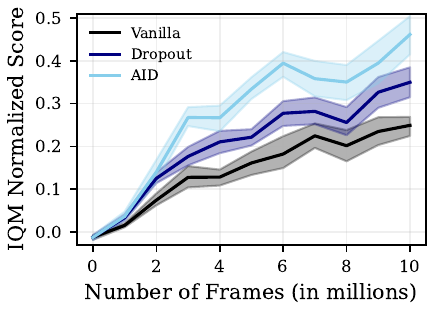}
    \caption{\textbf{Results on Reinforcement Learning.} We train DQN model with a replay ratio of 1.  The plot presents the IQM Human Normalized Score for 17 Atari games, with shaded regions representing the 95\% confidence intervals across 5 random seeds. The results indicate that AID significantly improves sample efficiency compared to the vanilla model.}
    \label{fig:RL_total}
\end{figure}

% 결과
% 17개 게임에 대한 IQM Human Normalized Score 결과는 plot 6에 소개됨
% 음영부분은 stratified bootstrap 95% Confidence Intervals를 나타냄
% AID는 sample efficiency를 높여줌
% 일반적으로 dropout은 RL에서 분산을 늘린다는 등의 이유로 사용되지 않아왔지만, 해당 결과는 dropout 계열의 방법이 RL에서도 가능성이 있음을 보여줌.
% (추가) dropout이 안좋은 이유는 5.1.1에서 실험으로 보였듯, dropout이 non-stationary한 환경에서 plasticity loss에 대한 완화 효과가 거의 없는 것도 하나의 이유로 볼수 있다.
The IQM Human Normalized Score results for the 17 Atari games are presented in Figure \ref{fig:RL_total}, with shading indicating stratified bootstrap 95\% confidence intervals.
The results demonstrate that AID significantly improves sample efficiency compared to the vanilla model.
Traditionally, Dropout has been rarely used in reinforcement learning due to several factors \cite{hausknecht2022consistent}. Additionally, as demonstrated in Section \ref{sec:trainability}, its limitation in mitigating plasticity loss in non-stationary environments may further explain this issue. We note that the slight performance improvement observed with a small Dropout rate appears to result from the ensemble effect of Dropout, rather than from mitigation of plasticity loss. In contrast, AID achieves performance gains by effectively addressing plasticity loss itself. Further discussion and additional results on this point are provided in Appendix~\ref{app:omitted_results_rl}.
%However, this result suggests that methods within the dropout family, such as AID, hold potential for enhancing RL performance, challenging the conventional perception of dropout's limitations in this domain.
%For additional analysis on RL, refer to the Appendix \ref{app:omitted_results_rl}.

\subsection{Standard Supervised Learning}
\label{sec:exp_gen}
% 지금까지의 실험들은 AID가 loss of plasticity를 잘 해결하며, non-stationary problem에서의 그 효과를 입증하였다.
% 이 서브섹션에서, 우리는 non-stationary가 아닌 고전적인 딥러닝에서도 이점이 있는 방법인지를 확인한다.
% 실험 세팅은 다음과 같다: ~
The preceding experiments have demonstrated that AID effectively addresses the loss of plasticity and exhibits its effectiveness in non-stationary problems.
In this section, we investigate whether AID also provides advantages in classical deep learning scenarios without non-stationarity.

% 우리는 supervised learning에서 많이 쓰이는 l2와 dropout을 baseline으로 두었다. 
% adam optimizer with learning rate 0.001로 총 200 epoch의 학습을 진행하였으며, 100, 150 epoch에 각각 1/10만큼 learning rate를 decay하였다. 
For comparison, we utilized L2 regularization and Dropout, which are commonly employed in supervised learning, as baselines.
The model was trained for a total of 200 epochs using the Adam optimizer with a learning rate of 0.001.
The learning rate was reduced by a factor of 10 at the 100th and 150th epochs.
The final test accuracy is shown in Table \ref{tab:generalization}.
%~~~~~~~~~
% train과 test learning curve는 appendix

\begin{table}[ht]
    
    \centering
    \caption{\textbf{Results on Generalization with Learning Rate Decay.} The final test accuracy after training 200 epochs. The learning rate initialized with 0.001 and divided by 10 at epoch 100 and 150. }
    \vskip 0.15in
    \resizebox{\columnwidth}{!}{%
        \begin{tabular}{l|ccc}
               & CIFAR10    & CIFAR100     & TinyImageNet  \\
             Method&(CNN)&(Resnet-18)&(VGG-16) \\ \hline 
             Vanilla & $0.660\pm 0.0058$ & $0.581\pm 0.0029$& $0.430\pm 0.0009$\\
             L2 & $0.742\pm 0.0038$ & $0.574\pm 0.0017$ & $0.417\pm 0.0042$\\
             Dropout & $0.733 \pm0.0037$ & $0.651 \pm0.0028$ & $0.466 \pm0.0040$\\
             AID & $\mathbf{0.748 \pm0.0021}$ & $\mathbf{0.690 \pm 0.0014}$ & $\mathbf{0.519 \pm 0.0043}$\\
             
        \end{tabular}%
    }
    \label{tab:generalization}
\end{table}
\vskip -0.1in

% 결과는 다음과 같다: table {}
% (AID는 PL 뿐만 아니라 일반적인 SL에서도 잘 작동하며 generalization gap을 줄이는데 도움을 줌을 보임으로써 범용적으로 사용 가능함을 어필)
% 기존에 일반화 성능을 개선하는데 주로 사용되는 방법론들과 같이, AID 또한 고전적인 supervised learning에서 효과적으로 동작하는 모습을 나타낸다.
% 제안한 방법의 그 범용성을 주목할만 하다.
% 자세한 learning curve 는 Appendix에서 봐라
The results presented in Table \ref{tab:generalization} demonstrate that AID is effective not only in addressing plasticity loss but also in improving generalization performance in classical supervised learning settings.
Interestingly, AID reduces the generalization gap more effectively than standard methods commonly used to addressing overfitting.
We report the corresponding learning curves in Appendix \ref{app:learning_curve_sl}.

% \subsection{Ablation}
% 이번 장에서는 AID로 인하여 모델이 어떻게 됬는지를 확인한다? (레반 혹은 라일 논문 참고하여 멘트 가져오자)
% 1. effective feature rank가 높다.
% 2. unit sign entropy가 높다.
% 3. dormant neurons이 적다.
% 4. pre-activation 관점에서 잘 유지가 되며, 이와 달리 DropReLU는 ~
% 그렇기 때문에, Continual 실험에서 DropReLU보다 더 좋았다.
% \textcolor{blue}{Finally, we show that dropout is unable to address pre-activation distribution shift.
% Recent study suggests that pre-activation distribution shift caused by target change is one of the main causes of plasticity loss, which ultimately leads to the emergence of dead units and reduces plasticity \cite{lyle2024disentangling}.
% This phenomenon also occurs when dropout is applied.
% The right figure in Figure \ref{exp_dropout} shows that the pr-eactivation distribution of a network with Dropout $p=0.5$ changes significantly between training on the first task and training on the tenth task.}

% continual 실험 두 개를 분석함
% MLP - trainability
% CNN - continual

% AID를 보면 pre-activation의 distribution에 변화가 적은 것을 확인할 수 있음.
% 글고 effective rank와 sign entropy가 쩜.
% 특히, sign entropy는 ~라서 ~임
% 이는 pre-activation 분포와 높은 관련도를 보이며~

% (bias_mean이 좋은데... 그렇게까진 좋진 않기도 하고...)

\section{Conclusion}
\label{sec:conclusion}

% 본 논문에서는, Dropout이 plasticity를 잘 유지하지 못하는 것에 대한 분석했다.
% AID를 제안 및 이론적 논증
% 실험적으로 다양한 continual learning task에서 우수성을 보여주었다.
% 뿐만 아니라, 고전적인 딥러닝에서도 잘 통하는 모습을 보여주며 그 범용성을 확인하였다.
In this paper, we analyzed why Dropout fails to maintain plasticity.
From this observation, we proposed AID (Activation by Interval-wise Dropout) and established through theoretical analysis that it acts as a regularizer by penalizing the difference between nonlinear and linear networks.
Through extensive experiments, we demonstrated the effectiveness of AID across various continual learning tasks, showing its superiority in maintaining plasticity and enabling better performance over existing methods.
Moreover, we validated its versatility by showing that AID performs well even in classical deep learning tasks, highlighting its generalizability across diverse scenarios.

% limitation
% 본 논문에서는 interval 개수가 2인 구현으로만 실험을 수행했다.
% 더 많은 다양한 조건에서는 어떤 결과를 야기하는지 확인하는 것은 좋은 방향성이 될 것이다.
% bernoulli mask를 만드는 연산 자체가 비싸다는 문제
% weight norm이 커지는 문제는 해결하지 않음 -> 더 long-term 학습에선 L2같은 규제가 추가적으로 필요할 수 있음
% However, several limitations remain in the current implementation of AID.
% First, the experiments predominantly used configurations with only two intervals, leaving the impact of more complex configurations underexplored.
% Second, generating Bernoulli masks for AID incurs additional computational overhead, which may become a concern for large-scale models or real-time applications.

% future work
% transformer나 다른 구조에 적용
% 일부 특정한 상황에서 AID가 굉장히 학습이 느리게 진행되는 것을 관찰
% adaptive하게도 할 수 있을 듯
% Future work includes optimizing AID’s scalability, exploring diverse configurations, adapting it to new architectures, and addressing slow convergence.
% AID significantly advances plasticity loss mitigation and offers broad potential for further innovation and adoption.

Despite its effectiveness, several limitations remain in the current implementation of AID.
Our experiments primarily focused on configurations with a simplified version, leaving the potential impact of more complex configurations underexplored. While we have provided a theoretical explanation for the high trainability of AID, its relationship to the observed improvements in generalizability remains uncertain.
Future work will address this limitation by optimizing AID’s scalability, exploring diverse configurations and adapting it to new architectures. Additionally, investigating the connection between trainability and generalizability would be a promising direction for future research.

\section*{Acknowledgements}
\label{sec:acknowledgements}

% AI대학원
This work was supported by Institute of Information \& communications Technology Planning \& Evaluation (IITP) grant funded by the Korea government (MSIT) (No.2019-0-01842, Artificial Intelligence Graduate School Program (GIST)).

% MIT
This work was supported by the GIST-MIT Research Collaboration grant funded by the GIST in 2025.

% HPC
We appreciate the high-performance GPU computing support of HPC-AI Open Infrastructure via GIST SCENT.

% KIAT
This research was financially supported by the Ministry of Trade, Industry and Energy(MOTIE) and Korea Institute for Advancement of Technology(KIAT) through the “International Cooperative R\&D program” (Grant No. P0028435)

\section*{Impact Statement}
\label{sec:impact_statement}

The proposed method, Activation by Interval-wise Dropout (AID), improves neural network adaptability by mitigating plasticity loss—a critical challenge in continual and reinforcement learning. This enhancement enables better generalization in dynamic environments, making AID applicable across both domains. These advancements contribute to more robust AI systems capable of efficiently handling real-world tasks, such as autonomous systems and personalized user experiences.

% In the unusual situation where you want a paper to appear in the
% references without citing it in the main text, use \nocite
% \nocite{langley00}

\bibliography{main}
\bibliographystyle{icml2025}

%%%%%%%%%%%%%%%%%%%%%%%%%%%%%%%%%%%%%%%%%%%%%%%%%%%%%%%%%%%%%%%%%%%%%%%%%%%%%%%
%%%%%%%%%%%%%%%%%%%%%%%%%%%%%%%%%%%%%%%%%%%%%%%%%%%%%%%%%%%%%%%%%%%%%%%%%%%%%%%
% APPENDIX
%%%%%%%%%%%%%%%%%%%%%%%%%%%%%%%%%%%%%%%%%%%%%%%%%%%%%%%%%%%%%%%%%%%%%%%%%%%%%%%
%%%%%%%%%%%%%%%%%%%%%%%%%%%%%%%%%%%%%%%%%%%%%%%%%%%%%%%%%%%%%%%%%%%%%%%%%%%%%%%
\newpage
\onecolumn
\appendix

\section{Proof of Property \ref{property:2}}
\label{app:proof_property_2}
\begin{proof}
Consider an input vector $\mathbf{x} = (x_1, x_2, \dots, x_n) \in \mathbb{R}^n$.
After applying $AID_p$ to $\mathbf{x}$, the $i$-th element is expressed as:
\begin{align*}
AID_p(\mathbf{x})_i 
&= 
\begin{cases}
    p_{i} x_i & x_i \geq 0, \\
    (1-p_{i}) x_i & x_i < 0,
\end{cases}
\end{align*}
where $p_{i}$ is a random variable sampled from $Ber(p)$.

To enable vectorized computation, we introduce a mask matrix $\mathbf{M} \in \mathbb{R}^{n \times n}$, satisfying $AID_p(\mathbf{x}) = \mathbf{M} \mathbf{x}$.
Specifically, the diagonal elements $m_{(i,i)}$ of $\mathbf{M}$ are sampled as follows:
\[
m_{(i,i)} \sim Ber((2p-1)H(x_i) + (1-p)),
\]
where $H(\cdot)$ is the step function, mapping positive values to 1 and negative values to 0. Then, the matrix $\mathbf{M}$ can be written as:
\begin{align}
    \mathbf{M} = (2\mathbf{P} - \mathbf{I}) H(\mathbf{x}) + (\mathbf{I} - \mathbf{P}) 
    \label{eq:M}
\end{align}
where $\mathbf{I}$ is the identity matrix of size $n \times n$ and $\mathbf{P} \doteq \text{diag}((p_1, p_2, \dots, p_n))$, where $\text{diag}(\cdot)$ is the function that creates a matrix whose diagonal elements are given by the corresponding vector.

Using equation \ref{eq:M}, the AID operation can be expanded as:
\begin{align}
AID_p(\mathbf{x}) &= \mathbf{M} \mathbf{x} \nonumber\\
&= [(2\mathbf{P} - \mathbf{I}) H(\mathbf{x}) + (\mathbf{I} - \mathbf{P})] \mathbf{x} \nonumber\\
&= (2\mathbf{P} - \mathbf{I}) r(\mathbf{x}) + (\mathbf{I} - \mathbf{P}) \mathbf{x} \label{eq:AID_expansion} \\
&= (2\mathbf{P} - \mathbf{I}) r(\mathbf{x}) + (\mathbf{I} - \mathbf{P}) (\mathbf{x} - r(\mathbf{x}) + r(\mathbf{x})) \nonumber\\
&= \mathbf{P} r(\mathbf{x}) + (\mathbf{I} - \mathbf{P}) (\mathbf{x} - r(\mathbf{x})) \nonumber\\
&= \mathbf{P} r(\mathbf{x}) + (\mathbf{I} - \mathbf{P}) \bar{r}(\mathbf{x}) \nonumber\\
&= [\mathbf{P} r + (\mathbf{I} - \mathbf{P}) \bar{r}] \mathbf{x}.\nonumber
\end{align}

Thus, applying AID is equivalent to applying ReLU with probability $p$, and negative ReLU with probability $1-p$.
\end{proof}

\newpage

\section{Proof of Theorem \ref{Thm:1}}
\label{app:proof_1}
\begin{proof}
    We clarify that we follow a similar procedure to the one demonstrated during the proof by \citet{liang2021drop}, which employs a comparable methodology.
    
    Let the  $m$ input data points and true labels be $\mathbf{x}_1, \mathbf{x}_2, \dots, \mathbf{x}_m$ and $\mathbf{y}_1, \mathbf{y}_2, \dots, \mathbf{y}_m$ ($\mathbf{x}_i, \mathbf{y}_i \in \mathbb R^n$). For simplicity, let weight matrices be  $\mathbf{W}_1, \mathbf{W}_2 \in \mathbb R^{n\times n}$ and they do not have bias vectors. If we assume that this model uses $AID_p$, using notation in Equation \eqref{eq:AID_expansion}, the prediction of the model $\hat{\mathbf{y}}_i$ can be expressed as below:
    \[\hat{\mathbf{y}}_i = \mathbf{W}_2\left(\mathbf{I}-\mathbf{P}+(2\mathbf{P}-\mathbf{I})r\right)(\mathbf{W}_1\mathbf{x}_i)\]

    For testing, using $\mathbb E[\mathbf{P}] = p\mathbf{I}$, we get $\hat{\mathbf{y}}_i = \mathbf{W}_2((1-p)\mathbf{I} + (2p-1)r)(\mathbf{W}_1\mathbf{x}_i)$. Here, $\mathbf{I}$, $\mathbf{P}$, $r(\cdot)$ and $r_p(\cdot)$ refer to the identity matrix, Bernoulli mask, ReLU and modified leaky ReLU function, respectively, as mentioned in Property \ref{property:2}. 

    Then, the training process can be formally described as below:

    \[\min_{\mathbf{W}_1,\mathbf{W}_2}\sum_{i=1}^m \mathbb E\left[\|\mathbf{W}_2\left(\mathbf{I}-\mathbf{P}+(2\mathbf{P}-\mathbf{I})r\right)(\mathbf{W}_1\mathbf{x}_i)- \mathbf{y}_i\|_2^2\right]\]

    For simplicity, the input vector and label will be denoted as $(\mathbf{x}, \mathbf{y})$ in the following proof. Let $\mathbf{D}\doteq diag((\mathbf{W}_1\mathbf{x}>0))$ be the diagonal matrix, where $(\mathbf{W}_1\mathbf{x}>0)$ is $0$-$1$ mask vector, which means $ r(\mathbf{W}_1\mathbf{x}) = \mathbf{D}\mathbf{W}_1\mathbf{x}$. To reduce the complexity of the equations, let us denote $\mathbf{S}$, $\mathbf{S}_p$ and $\mathbf{v}$ as below:
    \begin{align*}
    \mathbf{S}&\doteq \mathbf{I} - \mathbf{P} + (2\mathbf{P}-\mathbf{I})\mathbf{D} \\
    \mathbf{S}_p &\doteq (1- p)\mathbf{I} + (2p-1)\mathbf{D} \\ \mathbf{v} &\doteq \mathbf{W}_1\mathbf{x}
    \end{align*}

    Note that $S$ and $S_p$ are diagonal matrix. Let the $\text{tr}$ be trace of matrix and $\text{vec}(\cdot)$ be the function that makes elements of diagonal matrix to the vector. Then, we can expand the objective function as below:
    
    \begin{align}
        \mathbb{E}\big[\|\mathbf{W}_2 \left(\mathbf{I} - \mathbf{P} + (2\mathbf{P} - \mathbf{I})r\right)(\mathbf{W}_1\mathbf{x}) - \mathbf{y}\|_2^2 \big] 
        &= \mathbb E[\|\mathbf{W}_2 \mathbf{S}\mathbf{v} - \mathbf{y}\|_2^2] \label{eq:AID_objective}\\
        &= \textcolor{green!60!black}{\mathbb{E}\big[ \text{tr}(\mathbf{W}_2 \mathbf{S} \mathbf{v} \mathbf{v}^\top \mathbf{S} \mathbf{W}_2^\top) \big]}  - 2\text{tr}(\mathbf{W}_2 \mathbf{S}_p \mathbf{v} \mathbf{y}^\top) + \text{tr}(\mathbf{y} \mathbf{y}^\top) \nonumber
    \end{align}

    \begin{align*}
        \text{where, }&\textcolor{green!60!black}{\mathbb{E}\big[\text{tr}(\mathbf{W}_2 \mathbf{S} \mathbf{v} \mathbf{v}^\top \mathbf{S} \mathbf{W}_2^\top) \big]} \\
        &= \mathbb{E}\big[\text{tr}(\text{vec}(\mathbf{S})\text{vec}(\mathbf{S})^\top \text{diag}(\mathbf{v}) \mathbf{W}_2^\top \mathbf{W}_2 \text{diag}(\mathbf{v}))\big] \\
        &= \text{tr}(\mathbb{E}[\text{vec}(\mathbf{S})\text{vec}(\mathbf{S})^\top] \text{diag}(\mathbf{v}) \mathbf{W}_2^\top \mathbf{W}_2 \text{diag}(\mathbf{v})) \\
        &= \text{tr}(\mathbb{E}[\text{vec}(\mathbf{S}_p)\text{vec}(\mathbf{S}_p)^\top + \text{vec}(\mathbf{S})\text{vec}(\mathbf{S})^\top - \text{vec}(\mathbf{S}_p)\text{vec}(\mathbf{S}_p)^\top] \text{diag}(\mathbf{v}) \mathbf{W}_2^\top \mathbf{W}_2 \text{diag}(\mathbf{v})) \\
        &= \text{tr}\big((\text{vec}(\mathbf{S}_p)\text{vec}(\mathbf{S}_p)^\top + \text{diag}((\mathbb{E}[(1-p_i + (2p_i-1)d_i)^2 - (1-p + (2p-1)d_i)^2])_{i=1}^n)) \text{diag}(\mathbf{v}) \mathbf{W}_2^\top \mathbf{W}_2 \text{diag}(\mathbf{v})\big) \\
        &= \text{tr}\big((\text{vec}(\mathbf{S}_p)\text{vec}(\mathbf{S}_p)^\top + \text{diag}((p(1-p)(1-2d_i)^2)_{i=1}^n)) \text{diag}(\mathbf{v}) \mathbf{W}_2^\top \mathbf{W}_2 \text{diag}(\mathbf{v})\big) \\
        &= \text{tr}\big((\text{vec}(\mathbf{S}_p)\text{vec}(\mathbf{S}_p)^\top + p(1-p)(1-2\mathbf{D})^2) \text{diag}(\mathbf{v}) \mathbf{W}_2^\top \mathbf{W}_2 \text{diag}(\mathbf{v})\big).
    \end{align*}

    Here, $p_i$ and $d_i$ are $(i,i)$ element of $\mathbf{P}$ and $\mathbf{D}$, respectively. Similarly, we expand the objective function when the activation function is modified leaky ReLU, $r_p(\cdot)$:

\begin{align}
    \|\mathbf{W}_2 r_p(\mathbf{W}_1 \mathbf{x}) - \mathbf{y}\|_2^2
    &= \|\mathbf{W}_2 \big((1-p)\mathbf{I} + (2p-1)\mathbf{D}\big)(\mathbf{W}_1 \mathbf{x}) - \mathbf{y}\|_2^2 \label{eq:mlrelu_objective}\\
    &= \|\mathbf{W}_2 \mathbf{S}_p \mathbf{v} - \mathbf{y}\|_2^2 \nonumber\\
    &= \textcolor{red!70!black}{\text{tr}(\mathbf{W}_2 \mathbf{S}_p \mathbf{v} \mathbf{v}^\top \mathbf{S}_p \mathbf{W}_2^\top)} 
       - 2 \text{tr}(\mathbf{W}_2 \mathbf{S}_p \mathbf{v} \mathbf{y}^\top) 
       + \text{tr}(\mathbf{y} \mathbf{y}^\top) \nonumber
\end{align}

\begin{align*}
    \text{where, }\textcolor{red!70!black}{\text{tr}(\mathbf{W}_2 \mathbf{S}_p \mathbf{v} \mathbf{v}^\top \mathbf{S}_p \mathbf{W}_2^\top)}
    &= \text{tr}(\mathbf{S}_p \mathbf{v} \mathbf{v}^\top \mathbf{S}_p \mathbf{W}_2^\top \mathbf{W}_2) \\
    &= \text{tr}(\text{diag}(\mathbf{v}) \text{vec}(\mathbf{S}_p)\text{vec}(\mathbf{S}_p)^\top \text{diag}(\mathbf{v}) \mathbf{W}_2^\top \mathbf{W}_2) \\
    &= \text{tr}(\text{vec}(\mathbf{S}_p)\text{vec}(\mathbf{S}_p)^\top \text{diag}(\mathbf{v}) \mathbf{W}_2^\top \mathbf{W}_2 \text{diag}(\mathbf{v})).
\end{align*}

    To determine the relationship between the two objective functions \eqref{eq:AID_objective} and \eqref{eq:mlrelu_objective}, we subtract one equation from the other:

\begin{align}
    &\mathbb{E}\big[\|\mathbf{W}_2 \big(\mathbf{I} - \mathbf{P} + (2\mathbf{P} - \mathbf{I})r\big)(\mathbf{W}_1 \mathbf{x}) - \mathbf{y}\|_2^2 \big] 
    - \|\mathbf{W}_2 \big((1-p)\mathbf{I} + (2p-1)\mathbf{D}\big)(\mathbf{W}_1 \mathbf{x}) - \mathbf{y}\|_2^2 \nonumber \\
    &= \textcolor{green!60!black}{\mathbb{E}\big[\text{tr}(\mathbf{W}_2 \mathbf{S} \mathbf{v} \mathbf{v}^\top \mathbf{S} \mathbf{W}_2^\top) \big]} 
    - \textcolor{red!70!black}{\text{tr}(\mathbf{W}_2 \mathbf{S}_p \mathbf{v} \mathbf{v}^\top \mathbf{S}_p \mathbf{W}_2^\top)} \nonumber \\
    &= \text{tr}\big(p(1-p)(1-2\mathbf{D})^2 \text{diag}(\mathbf{v}) \mathbf{W}_2^\top \mathbf{W}_2 \text{diag}(\mathbf{v})\big) \nonumber \\
    &= p(1-p) \text{tr}\big(\mathbf{W}_2 \text{diag}(\mathbf{v})(\mathbf{I} - 2\mathbf{D})(\mathbf{I} - 2\mathbf{D})^\top \text{diag}(\mathbf{v})^\top \mathbf{W}_2^\top\big) \nonumber \\
    &= p(1-p)  \|\mathbf{W}_2 (\mathbf{I} - 2\mathbf{D}) \text{diag}(\mathbf{W}_1 \mathbf{x})\|_2^2\nonumber \\
    &\geq \frac{p(1-p)}{n} \|\mathbf{W}_2 (\mathbf{I} - 2\mathbf{D}) \mathbf{W}_1 \mathbf{x}\|_2^2  \label{eq:wrong_on_dropact} 
 \\
    &= \frac{p(1-p)}{n(2p-1)^2} \|\mathbf{W}_2 \mathbf{W}_1 \mathbf{x} - 2\mathbf{W}_2 r_p(\mathbf{W}_1 \mathbf{x})\|_2^2 \label{eq:I-2D} \\
    &= \frac{4p(1-p)}{n(2p-1)^2} \left\|\mathbf{W}_2 \left(\frac{1}{2}\mathbf{W}_1 \mathbf{x}\right) - \mathbf{W}_2 r_p(\mathbf{W}_1 \mathbf{x})\right\|_2^2 \nonumber
\end{align}

    To derive Equation \eqref{eq:I-2D}, we utilize the definition of $\mathbf{S}_p$ introduced at the beginning of the proof:

    \begin{align*}
    \mathbf{S}_p &= (1-p)\mathbf{I} + (2p-1)\mathbf{D} \\
    \implies \mathbf{D} &= \frac{1}{2p-1}\big(\mathbf{S}_p - (1-p)\mathbf{I}\big) \\
    \implies \mathbf{I} - 2\mathbf{D} &= \mathbf{I} - \frac{2}{2p-1}\big(\mathbf{S}_p - (1-p)\mathbf{I}\big) \\
    &= \frac{(2p-1)\mathbf{I} - 2\mathbf{S}_p + 2(1-p)\mathbf{I}}{2p-1} \\
    &= \frac{\mathbf{I} - 2\mathbf{S}_p}{2p-1}
\end{align*}

    Since $r_p(\cdot) = \mathbf{S}_p(\cdot)$, we get Equation \eqref{eq:I-2D}. Finally, we get the relationship between these two objective functions:
    % \begin{align*}
    %     \mathbb{E}\big[\|\mathbf{W}_2 \left(\mathbf{I} - \mathbf{P} + (2\mathbf{P} - \mathbf{I})r\right)(\mathbf{W}_1\mathbf{x}) - \mathbf{y}\|_2^2 \big] &\geq \mathbb E[\|W_2((1-p)I+(2p-1)D)(W_1x)-y\|]_2^2 \\ &+ \frac{4p(1-p)}{n(2p-1)^2}\|W_2\left(\frac{1}{2}\right)(W_1x) -W_2r_p(W_1x)\|_2^2
    % \end{align*}

    \begin{align*}
    \mathbb{E}\big[\|\mathbf{W}_2 {AID}_p(\mathbf{W}_1 \mathbf{x}) - \mathbf{y}\|_2^2 \big] 
    &\geq \|\mathbf{W}_2 r_p(\mathbf{W}_1 \mathbf{x}) - \mathbf{y}\|_2^2 \\
    &\quad + \frac{4p(1-p)}{n(2p-1)^2} \left\|\mathbf{W}_2 \left( \frac{1}{2}\mathbf{W}_1 \mathbf{x} \right) - \mathbf{W}_2 r_p\left(\mathbf{W}_1 \mathbf{x}\right)\right\|_2^2.
\end{align*}

    This means that objective function of AID gives the upper bound of the optimization problem when we use modified leaky ReLU and penalty term. The penalty term means that weight matrices $\mathbf W_1, \mathbf W_2$ are trained to closely resemble the behavior of the linear network. As $p$ approaches 0.5, the regularization effect becomes stronger, which intuitively aligns with the fact that AID is equivalent to a linear network when $p=0.5$. Since $\left(\mathbb E[\|\mathbf W_2AID_p(\mathbf W_1\mathbf x)-\mathbf y\|_2^2] \rightarrow 0 \right) \implies \mathbf y = \mathbf W_2r_p(\mathbf W_1 \mathbf x) = \mathbf W_2\left (\frac{1}{2}\mathbf W_1 \mathbf x\right )$, we confirm that $AID$ has a regularizing effect that constrains the network to behave like a linear network\footnote{We identified an error made by \citet{liang2021drop} while deriving Equation \eqref{eq:wrong_on_dropact}, and by correcting it with an inequality, the claim becomes weaker compared to the original proof. However, we still assert that the regularization effect remains valid.}.

    % 리뷰어가 dropact 논문을 읽고 온다면 증명에 대해 반박할 수 있으므로 말이 필요한거 같긴함
    % 좀 더 부드러운 어투로 바꿀 필요가 있을 듯

\end{proof}

\section{Additional Studies}
\subsection{Exploration of AID in Relation to ReLU, Dropout and DropReLU}
\label{app:relation}

We explore that when the number of the interval is two around zero (i.e., $k=2$ and $u_1=l_2=0$), AID encompasses several well-known concepts, including ReLU activation, Dropout, and DropReLU.
This relationship is formalized in the following property:

\begin{property}
    \label{property:1}
    Let $AID_{p,q}$ be the function that dropout rate $p$ for positive and $q$ for negative values.
    Then, in training process, ReLU, Dropout, and DropReLU can be expressed by AID formulation.
\end{property}

    For simplicity, without derivation, we describe the relationships between the equations below.
    We assume that Bernoulli sampling generates an identical mask within the same layer.
    Let $r(\cdot)$ be the ReLU function, $Dropout_p(\cdot)$ be the Dropout layer with probability $p$, and $DropReLU_p(\cdot)$ be an activation function that applies ReLU with probability $p$ and identity with probabiltiy $1-p$.
    Then, during training process, for input vector $\mathbf{x}$, those functions can be represented by AID as below:

\begin{itemize}
    \item $r(\mathbf{x}) = AID_{0,1}(\mathbf{x})$
    \item $Dropout_p(\mathbf{x}) = AID_{p,p}(\mathbf{x}) / (1-p)$
    \item $Dropout_p(r(\mathbf{x})) = AID_{p,1}(\mathbf{x}) / (1-p)$
    \item $DropReLU_p(\mathbf{x}) = AID_{0,p}(\mathbf{x})$
\end{itemize}

%property 1은 AID function의 성능이 ReLU activation만 쓴 모델, ReLU와 dropout을 같이 사용한 모델, 또는 그 이후에 모델 규제에 효과적이라고 제안된 drop activation을 사용한 model의 maximum 성능이 lower bound라는 점을 나타낸다. 
%하이퍼파라미터 튜닝 복잡성과 뒤에 보일 He initialization과의 조합을 위하여 AID_p \doteq AID_{1-p, p} 로 정의한다. 
Property \ref{property:1} indicates that using AID guarantees performance at least for the maximum performance of models using ReLU activation alone, in combination with dropout, or with DropReLU.
Analysis and investigation of the AID function under varying hyperparameters such as intervals and probabilities are left as future work.

\subsection{Effect of Theorem \ref{Thm:1} for ReLU and Standard Dropout}
\label{app:cor_1}
% 우리는 AID가 아닌 ReLU function에 dropout layer를 적용했을 때 수식 변화에 대해 논의한다. 
% 결과적으로, 해당 케이스에선 AID처럼 linear network로 regularize하는 효과는 없는 것으로 보여짐
% 이에 대한 것은 Lemma 1에서 제시
We analyze the changes in the formulation when applying a dropout layer to the ReLU function instead of AID.
Our findings show that this approach does not achieve the same regularization effect toward a linear network as AID.
We discuss this in Corollary \ref{cor:1} below.

\begin{corollary}
\label{cor:1}
    Similar setting with Theorem \ref{Thm:1}, combination of ReLU activation and Dropout layer does not have effect that regularize to linear network.
\end{corollary}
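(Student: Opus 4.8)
The plan is to mirror the proof of Theorem \ref{Thm:1} line for line, replacing the AID mask by the ReLU-then-Dropout mask and then reading off the penalty term that the variance of that mask produces. Concretely, I would write the training prediction of the ReLU$+$Dropout model as $\hat{\mathbf y} = \mathbf W_2 \mathbf M \mathbf D \mathbf v$, where $\mathbf v \doteq \mathbf W_1\mathbf x$, the ReLU gating matrix $\mathbf D \doteq \text{diag}((\mathbf W_1\mathbf x > 0))$ satisfies $r(\mathbf W_1\mathbf x) = \mathbf D\mathbf v$, and $\mathbf M \doteq \text{diag}(\mathbf m)$ is the Dropout mask with independent entries $m_i \sim Ber(1-p)$. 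As in Theorem \ref{Thm:1}, I would expand $\mathbb E[\|\mathbf W_2\mathbf M\mathbf D\mathbf v - \mathbf y\|_2^2]$ into trace form and split the quadratic term into its mean and covariance contributions.

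The key computation is the covariance. Since the entries of $\mathbf M$ are independent Bernoullis with $\mathbb E[m_i]=1-p$, $\mathbb E[m_i^2]=1-p$ (so the variance is $p(1-p)$) and $\mathbf D$ is idempotent, only the diagonal of the outer product survives, giving $\mathbb E[\mathbf M\mathbf D\mathbf v\mathbf v^\top\mathbf D\mathbf M] = (1-p)^2\mathbf D\mathbf v\mathbf v^\top\mathbf D + p(1-p)\,\mathbf D\,\text{diag}(\mathbf v)^2$. Recombining the first summand with the cross term $-2(1-p)\text{tr}(\mathbf W_2\mathbf D\mathbf v\mathbf y^\top)$ and $\text{tr}(\mathbf y\mathbf y^\top)$ yields the clean decomposition $\mathbb E[\|\mathbf W_2\mathbf M\, r(\mathbf W_1\mathbf x) - \mathbf y\|_2^2] = \|(1-p)\mathbf W_2\, r(\mathbf W_1\mathbf x) - \mathbf y\|_2^2 + p(1-p)\,\|\mathbf W_2\,\text{diag}(r(\mathbf W_1\mathbf x))\|_2^2$, using $\mathbf D\,\text{diag}(\mathbf v)^2 = \text{diag}(r(\mathbf W_1\mathbf x))^2$. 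This is the ReLU$+$Dropout analogue of the Theorem \ref{Thm:1} identity, here with exact equality since no norm-contraction step is needed.

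The decisive step is to interpret the penalty $p(1-p)\|\mathbf W_2\,\text{diag}(r(\mathbf W_1\mathbf x))\|_2^2$ and contrast it with $R_p(\theta)$. The residual matrix appearing here is $\mathbf D$, in contrast to the $\mathbf I - 2\mathbf D$ that drove Theorem \ref{Thm:1}: AID's two stochastic branches are $r$ and $\bar r$, whose average is the linear map $\tfrac12\,\mathrm{id}$, so its mask variance penalizes the gap $\|\mathbf W_2(\tfrac12\mathbf W_1\mathbf x) - \mathbf W_2 r_p(\mathbf W_1\mathbf x)\|_2^2$ between the nonlinear and the linear network; whereas Dropout's two branches are $r$ and $0$, whose average is a rescaled ReLU, so its variance penalizes the magnitude $\|\mathbf W_2\,\text{diag}(r(\mathbf W_1\mathbf x))\|_2^2$ of the ReLU output itself. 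The latter is an activation-weighted shrinkage on $\mathbf W_2$ and contains no term comparing the network to $\mathbf W_2(\tfrac12\mathbf W_1\mathbf x)$; its minimizers drive the ReLU output (or $\mathbf W_2$) toward zero rather than toward linear behavior, which establishes the corollary.

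The main obstacle I anticipate is bookkeeping rather than conceptual: getting the Bernoulli second moments right (using $m_i^2=m_i$ and $d_i^2=d_i$) so that the off-diagonal cross terms cancel exactly and the surviving penalty is purely diagonal, and fixing the scaling convention. I would remark that the qualitative conclusion is insensitive to whether standard or inverted Dropout is used: inverted scaling merely rescales the mean term to the full ReLU network $\mathbf W_2 r(\mathbf W_1\mathbf x)$ and alters the coefficient in front of the penalty, but the penalty stays a variance of the ReLU activations and never acquires the linear-network comparison structure of $R_p$.
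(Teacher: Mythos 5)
Your proposal is correct and follows essentially the same route as the paper's proof: expand the expected ReLU$+$Dropout loss as in Theorem \ref{Thm:1}, isolate the mask-variance penalty, and observe that it is an activation-weighted shrinkage $\|\mathbf{W}_2\,\mathrm{diag}(r(\mathbf{W}_1\mathbf{x}))\|_2^2$ rather than the linear-network comparison $R_p$. The only differences are cosmetic and you correctly flag them as such: you use standard Dropout scaling and stop at the exact diagonal identity, whereas the paper uses inverted scaling ($\tfrac{1}{1-p}(\mathbf{I}-\mathbf{P})$) and applies the additional $\tfrac{1}{n}$ lower bound to state the penalty as $\tfrac{p}{n(1-p)}\|\mathbf{W}_2\, r(\mathbf{W}_1\mathbf{x})\|_2^2$.
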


\begin{proof}
    We use same notation with Theorem \ref{Thm:1}.
    We can write the prediction of the model that use Dropout layer with probability $p$ after ReLU activation, $\hat {\mathbf{y}_i}$ as below:
    \[\hat{\mathbf{y}_i} = \mathbf{W}_2\left(\frac{1}{1-p}(\mathbf{I} - \mathbf{P})r(\mathbf{W}_1\mathbf{x})\right)\]

    where $r$ is ReLU function. On test phase, $\hat {\mathbf{y}_i} = \mathbf{W}_2 r(\mathbf{W}_1\mathbf{x})$.
    Then, to simplify the equations, let us denote $\mathbf{S}$, $\mathbf{S}_p$ and $\mathbf{v}$ as follows:

    \begin{align*}
    \mathbf{S}&\doteq \frac{1}{1-p}(\mathbf{I} - \mathbf{P})\mathbf{D} \\
    \mathbf{S}_p &\doteq \mathbf{D} \\ 
    \mathbf{v} &\doteq \mathbf{W}_1\mathbf{x}.
    \end{align*}

Then, following the same process as in Theorem \ref{Thm:1}, we derive the expansion for the relationship between 
$\mathbb{E}\left[\|\mathbf{W}_2\frac{1}{1-p}(\mathbf{I}-\mathbf{P})r(\mathbf{W}_1\mathbf{x})-\mathbf{y}\|_2^2\right]$ 
and $\|\mathbf{W}_2 r(\mathbf{W}_1\mathbf{x}) - \mathbf{y}\|_2^2$. 
We note that the detailed derivation is omitted as it largely overlaps with the previously established proof.

\begin{align*}
    \mathbb{E}\left[\|\mathbf{W}_2\frac{1}{1-p}(\mathbf{I}-\mathbf{P})r(\mathbf{W}_1\mathbf{x})-\mathbf{y}\|_2^2\right] 
    - \|\mathbf{W}_2 r(\mathbf{W}_1\mathbf{x}) - \mathbf{y}\|_2^2 
    &= \text{tr}\left(\frac{p}{1-p} \mathbf{D}^2 \text{diag}(\mathbf{v}) \mathbf{W}_2^\top \mathbf{W}_2 \text{diag}(\mathbf{v})\right) \\
    &= \frac{p}{1-p} \text{tr}(\mathbf{W}_2 \text{diag}(\mathbf{v}) \mathbf{D} \mathbf{D}^\top \text{diag}(\mathbf{v})^\top \mathbf{W}_2^\top) \\
    &= \frac{p}{1-p} \|\mathbf{W}_2 \mathbf{D} \text{diag}(\mathbf{v})\|_2^2 \\
    &\geq \frac{p}{n(1-p)} \|\mathbf{W}_2 r(\mathbf{W}_1\mathbf{x})\|_2^2 \\
    \implies \mathbb{E}\left[\|\mathbf{W}_2\frac{1}{1-p}(\mathbf{I}-\mathbf{P})r(\mathbf{W}_1\mathbf{x})-\mathbf{y}\|_2^2\right] 
    &\geq \|\mathbf{W}_2 r(\mathbf{W}_1\mathbf{x}) - \mathbf{y}\|_2^2 + \frac{p}{n(1-p)} \|\mathbf{W}_2 r(\mathbf{W}_1\mathbf{x})\|_2^2.
\end{align*}    

\end{proof}
    % 따라서 같은 논리로 전개했을 경우, dropout과 relu를 동시에 사용하는 것은 AID와 다르게 linear network로의 regularization 효과는 나타나지 않는 것을 확인할 수 있다. 
Corollary \ref{cor:1} demonstrate that the simultaneous use of Dropout and ReLU does not achieve the same regularization effect on the linear network as AID.
This suggests that, rather than completely discarding all negative values as ReLU does, allowing a portion of negative values to pass through with dropout plays a crucial role in effect that regularizes to linear network.

\newpage

\section{Details of Property \ref{property:He_init}}
\label{app:He_init}
    At the forward pass case, the property of the ReLU function used in the derivation of equations in \citet{he2015delving}'s work as follows:
    \begin{align*}
        \mathbb E [(r(y))^2] &= \frac{1}{2} \mathbb E[y^2] \\
        &= \frac{1}{2} (\mathbb E[y^2]-\mathbb E[y]^2) \\
        &= \frac{1}{2} Var(y)
    \end{align*}

    where $r(\cdot)$ is ReLU activation, $y$ represents preactivation value which is a random variable from a zero-centered symmetric distribution. We show that substituting ReLU with AID in this equation yields the same result:

    \begin{align*}
        \mathbb E [(AID_p(y))^2] &=  p\mathbb E[(r(y))^2] + (1-p) \mathbb E[(\bar r(y))^2]\\
        &= p\times \frac{1}{2}Var(y) + (1-p)\times \frac{1}{2}Var(y) \\
        &= \frac{1}{2}Var(y).
    \end{align*}

    Additionally, during the backward pass ,  \citet{he2015delving} utilize that derivative of ReLU function, $r'(y)$, is zero or one with the same probabilities. We have same condition that $y$ is a random variable from a zero-centered symmetric distribution.
    Applying this to AID, we can confirm that it yields the same result:

    \begin{align*}
        P(AID_p'(y) = 1) &= P(AID_p'(y) = 1 | y\geq0)P(y\geq0) + P(AID_p'(y) = 1 | y<0)P(y<0) \\
        &= p\times\frac{1}{2} + (1-p)\times\frac{1}{2} \\
        &= \frac{1}{2} \\
        P(AID_p'(y) = 0) &= P(AID_p'(y) = 0 | y\geq0)P(y\geq0) + P(AID_p'(y) = 0 | y<0)P(y<0) \\
        &= (1-p)\times\frac{1}{2} + p\times\frac{1}{2} \\
        &= \frac{1}{2}.
    \end{align*}

    Therefore, under the given conditions, replacing the ReLU function with AID during the derivation process yields identical results. Consequently, it can be concluded that Kaiming He initialization is also well-suited for AID.
\newpage

\section{Baselines}
\label{app:baselines}

\textbf{Dropout \cite{srivastava2014dropout}.} Dropout randomly deactivates a fraction of neurons during training, effectively simulating an ensemble of sub-networks by preventing specific pathways from dominating the learning process. This approach improves generalization by reducing overfitting and encouraging the model to learn more robust and diverse representations. We applied Dropout layer next to the activation functions.

\textbf{L2 Regularization \cite{krogh1991simple}.} Also known as weight decay, L2 regularization (L2) adds a penalty proportional to the squared magnitude of the model weights to the loss function. This encourages smaller weights, which can lead to better generalization.

\textbf{L2 Init Regularization \cite{kumar2023maintaining}.} L2 Init introduces a regularization term that penalizes deviations of parameters from their initial values, aiming to preserve plasticity in continual learning. Unlike standard L2 regularization, which penalizes large weight magnitudes, it specifically focuses on maintaining the parameters near their initialization.

\textbf{Streaming Elastic Weight Consolidation \cite{elsayed2024addressing}.} Streaming Elastic Weight Consolidation (S-EWC) is a technique used in continual learning to prevent catastrophic forgetting. It identifies critical weights for previously learned tasks and penalizes changes to those weights during subsequent task learning.

\textbf{Shrink \& Perturb \cite{ash2020warm}.} Shrink \& Perturb (S\&P) is a method that combines weight shrinking, which reduces the magnitude of the weights to regularize the model, with perturbations that add noise to the weights. This approach is particularly effective in warm-start scenarios. As done in prior study \cite{lee2024slow}, we use a single hyperparameter to control both the noise intensity and the shrinkage strength. Specifically, given $\theta$ as the learnable parameter, $\theta_0$ as the initial learnable parameter, and $\lambda$ as the coefficient of S\&P, then the applying S\&P is defined as: \(\theta \leftarrow (1-\lambda)\theta + \lambda\theta_0\).

\textbf{DASH \cite{shin2024dash}.} Direction-Aware SHrinking (DASH) selectively shrinks weights based on their cosine similarity with the loss gradient, effectively retaining meaningful features while reducing noise memorization. This approach improves training efficiency and maintains plasticity, ensuring better generalization under stationary data distributions. We applied DASH between the stages of training for generalizability experiments.

\textbf{ReDo \cite{sokar2023dormant}.} Recycling Dormant Neurons (ReDo) is a technique designed to address the issue of dormant neurons in neural networks by periodically reinitializing inactive neurons. This method helps maintain network expressivity and enhances performance by reactivating unused neurons during training. We applied ReDo at the end of each task. 

\textbf{Continual Backprop \cite{dohare2024loss}.} Continual Backpropagation (CBP) is a method that enhances network plasticity in continual learning by periodically reinitializing a fraction of low-utilization neurons. This targeted reinitialization preserves adaptability to new tasks, effectively mitigating the loss of plasticity while preserving previously learned knowledge.

\textbf{Concatenated ReLU \cite{abbas2023loss}. } Concatenated ReLU (CReLU) is a variant of the ReLU activation function that combines the outputs of both the positive and negative regions of the input. This technique enhances plasticity on Streaming ALE environment. 

\textbf{Randomized ReLU \cite{xu2015empirical}.} Randomized ReLU (RReLU) behaves like a standard ReLU for positive inputs, while allowing negative inputs to pass through with a randomly determined scaling factor. This stochasticity can help improve model generalization by introducing diverse non-linearities.

\textbf{DropReLU \cite{liang2021drop}.} DropReLU operates by probabilistically switching the ReLU activation function to an linear function during training. This method effectively achieves generalization while remaining compatible with most architectures.

% deep fourier feature는 sin과 cos의 조합을 이용하면 linearity를 approximately embeds a deep linear network 한다는 성질을 만족하고 따라서 plasticity loss를 성공적으로 완화한다고 주장했다. 
\textbf{Deep Fourier Feature \cite{lewandowski2024plastic}.} Deep Fourier Features (Fourier) effectively address plasticity loss by leveraging the concatenation of sine and cosine functions, which approximate the embedding of deep linear network properties within a model.

% scratch 부터 학습한 모델과의 성능을 비교하기 위해 우리는 stage 마다 full reset한 결과를 baseline에 추가했다. stage 사이마다 우리는 model의 파라미터를 initial parameter로 대체하였다. 
\textbf{Full Reset. } To compare performance against models trained from scratch, we included full reset as a baseline. At each stage, we reset the model’s parameters to their initial values before continuing training.

% output의 차원이 두배로 되는 CReLU와 Fourier 같은baseline의 경우, trainability 실험의 경우 vanilla의 learnable parameter의 수와 최대한 일치하게 구현했으며, 그 이외의 경우, parameter의 수에 대한 이득을 배제하기 위해 output 차원을 절반으로 줄여 총 파라미터의 절반정도를 사용하도록 구현하였음을 밝힌다.
For baselines like CReLU and Fourier, which inherently double the output dimensionality, we ensured fair comparisons during trainability experiments by matching their number of learnable parameters to that of the vanilla model. In all other cases, we halved the output dimensionality to approximately halve the total number of parameters, thereby eliminating any advantage arising solely from increased parameter counts.

\newpage

\section{Experimental Details}
\label{app:experimental_details}

\subsection{Loss of plasticity with Dropout}
\label{app:loss_pl_dropout}

\subsubsection{Trainability for Dropout}
\label{app:loss_pl_dropout_trainability}
We employed an 8-layer MLP featuring 512 hidden units and trained it on 1400 samples from the MNIST dataset.
The model is trained for 50 different tasks, with each task running for 100 epochs.
To evaluate subnetwork plasticity, we extracted 10 subnetworks at the conclusion of each task, training these on new tasks for an additional 100 epochs and then calculated the mean final accuracy.
Adam optimizer was utilized for model optimization.

\subsubsection{Warm-Start}
We used the ResNet-18 architecture described in Appendix \ref{app:generalizability}.
In the warm-start scenario, the model was pre-trained on 10\% of the CIFAR100 dataset for 1,000 epochs and continued training on the full dataset for 100 epochs, with the optimizer reset at the start of full dataset training.
In the cold-start scenario, the model was trained on the full dataset for 100 epochs. Adam optimizer with learning rate $0.001$ was utilized.

\subsection{Trainability}
\label{app:trainability}
\textbf{Permuted MNIST. }
We followed the setup of \cite{dohare2024loss}.
It consists of a total of 800 tasks that 60,000 images are fed into models only once with 512 batch size.
In the beginning of each task, the pixel of images are permuted arbitrarily.
We trained fully connected neural networks with three hidden layers.
Each hidden layer has 2,000 units and followed by ReLU activation function.

\textbf{Random Label MNIST. }
We conducted a variant of \cite{kumar2023maintaining}.
It consists of a total of 200 tasks that 1,600 images are fed into models with 64 batch size.
In the beginning of each task, the label class of images are changed to other class arbitrarily.
We trained fully connected neural networks with three hidden layers 100 epochs per each task.
Each hidden layer has 2,000 units and followed by ReLU activation function.

\subsection{Generalizability}
\label{app:generalizability}
% 우리는 CIFAR10, CIFAR100, TinyImageNet 데이터셋을 각각 4 layer CNN, Resnet-18, VGG-16에 대해서 실험을 진행함.
% 
We conducted experiments on CIFAR-10, CIFAR-100 \cite{krizhevsky2009learning}, and TinyImageNet \cite{le2015tiny} datasets using a 4-layer CNN, ResNet-18, and VGG-16, respectively, to evaluate the effectiveness of AID across different model architectures and datasets.
We provide detailed model architecture below:
\begin{itemize}
    \item \textbf{CNN}: We employed a convolutional neural network (CNN), which is used in relatively small image classification.
    The model includes two convolutional layers with a $5\times5$ kernel and 16 channels and max-pooling layer is followed after activation function.
    The two fully connected layers follow with 100 hidden units. 
    % residual block에도 잘 적용되는지 확인하기 위해 modern architecture인 resnet-18을 실험에 사용 
    \item \textbf{Resnet-18} \citep{he2016deep}: We utilized Resnet-18 to examine how well AID integrates with modern deep architectures featuring residual connections.
    Following \cite{lee2024slow}, the stem layers were removed to accommodate the smaller image size of the dataset. Additionally, a gradient clipping threshold of 0.5 was applied to ensure stable training.
    \item \textbf{VGG-16} \citep{simonyan2014very}: We adopted VGG-16 with batch normalization to investigate whether AID adapts properly in large-size models. The number of hidden units of the classifiers was set to 4096 without dropout.
\end{itemize}

In addition, we replaced the maxpooling layer with an average pooling layer for methods such as Fourier activation, DropReLU, and AID, where large values may not necessarily represent important features.
Next, we provide the detailed experimental settings below.

\textbf{Continual Full. } Similar to the setup provided by \cite{shen2024step}, the entire data is randomly split into 10 chunks.
The training process consists of 10 stages and the model gains access up to the $k$-th chunk in each stage $k$.
In each stage, the dataset is fed into models with a batch size of 256.
We trained the model for 100 epochs per each stage, and we reset the optimizer when each stage starts training.
\newpage
% 이전 데이터셋도 학습 가능한 점(접근)을 제외하고 continual full과 같음
\textbf{Continual Limited. } This setting follows the same configuration as continual full, with the key distinction that the model does not retain access to previously seen data chunks.
At each stage, the model is trained only on the current chunk, without revisiting earlier data, simulating real-world constraints such as memory limitations and privacy concerns.

\textbf{Class-Incremental. } For CIFAR100 and TinyImageNet, the dataset was divided into 20 class-based chunks, with new classes introduced incrementally at each stage.
Unless otherwise specified, test accuracy is evaluated based on the corresponding classes encountered up to each stage.
The rest of the setup, including the batch size, and training epochs per stage, follows the Continual Full setting.

\subsection{Reinforcement Learning}
\label{app:reinforcement_learning}
% high RR에서의 AID의 효과를 검증하기 위해, 우리는 ~,~에서 테스트한 아타리의 17개의 게임을 테스트함
% 
To evaluate whether AID enhances sample efficiency on reinforcement learning, we conducted experiments on 17 Atari games from the Arcade Learning Environment \cite{bellemare2013arcade}, selected based on prior studies \cite{kumar2020implicit,sokar2023dormant}. We trained a DQN model following \citet{mnih2015human} using the CleanRL framework \cite{huang2022cleanrl}. The replay ratio was set to 1, as adopted in \citet{sokar2023dormant, elsayed2024weight}. We followed the hyperparameter settings for environment from \citet{sokar2023dormant}, with details provided in Table \ref{tab:hyperparameter_rl_env}.

\begin{table}[h]
    \centering
    \caption{Hyperparameters used in reinforcement learning environment.}
    \begin{tabular}{l r}
        \toprule
        \textbf{Parameter} & \textbf{Value} \\
        \midrule
        Optimizer & Adam \cite{kingma2014adam} \\
        Optimizer: $\epsilon$ & $1.5\mathrm{e}-4$ \\
        Optimizer: Learning rate & $6.25\mathrm{e}-4$ \\
        \midrule
        Minimum $\epsilon$ for training & $0.01$ \\
        Evaluation $\epsilon$ & $0.001$\\
        Discount factor $\gamma$ & $0.99$ \\
        Replay buffer size & $10^6$ \\
        Minibatch size & $32$ \\
        Initial collect steps & $20000$ \\
        Training iterations & $10$ \\
        Training environment steps per iteration & $250K$ \\
        Updates per environment step (Replay Ratio) & $1$ \\
        Target network update period & $2000$\\
        Loss function & Huber Loss \cite{huber1992robust} \\
        
        \bottomrule
    \end{tabular}
    \label{tab:hyperparameter_rl_env}
\end{table}

\subsection{Standard Supervised Learning}
For the same model architecture and dataset used in the generalizability experiments, we trained with Adam optimizer for 200 epochs, applying learning rate decay at the 100th and 150th epochs. The initial learning rate was set to $0.001$ and was reduced by a factor of $10$ at each decay step.

\newpage

\subsection{Hyperparameter Search Space}
\label{app:hyperparameter_seach_space}
We present the hyperparameter search space considered for each experiment in Table \ref{tab:hyperparameter_search_space}.
Without mentioned, we performed a sweep over 5 different seeds for all experiments, except for VGG-16 model on the TinyImageNet dataset, where we used only 3 seeds due to computational cost.
\cref{tab:hyperparameter_permuted_mnist,tab:hyperparameter_random_label_mnist,tab:hyperparameter_continual_full,tab:hyperparameter_continual_limited,tab:hyperparameter_class_incremental,tab:hyperparameter_rl,tab:hyperparameter_sl} shows the best hyperparameter set that we found in various experiments. 

\begin{table}[H]
    \centering
    \caption{Hyperparameter search space for every experiment.}
    \begin{tabular}{l l l l}
        \toprule
        \textbf{Experiment} & \textbf{Method} & \textbf{Hyperparameters} & \textbf{Search Space}\\
        \midrule
        Warm-Start & Dropout & $p$ & $0.1, 0.3, 0.5$ \\
        & AID & $p$ & $0.7, 0.8, 0.9$ \\
        \midrule
        Trainability & Adam & learning rate & $1\mathrm{e}-3, 1\mathrm{e}-4$ \\
                     & SGD & learning rate & $3\mathrm{e}-2, 3\mathrm{e}-3$ \\
                     & L2 & $\lambda$ & $1\mathrm{e}-2, 1\mathrm{e}-3, 1\mathrm{e}-4, 1\mathrm{e}-5, 1\mathrm{e}-6$ \\
                     & L2 Init & $\lambda$ & $1\mathrm{e}-2, 1\mathrm{e}-3, 1\mathrm{e}-4, 1\mathrm{e}-5, 1\mathrm{e}-6$ \\
                     & Dropout & $p$ & $0.01, 0.05, 0.1, 0.15, 0.2, 0.25, 0.3$ \\
                     & S\&P & $\lambda$ & $0.1, 0.2, 0.3, 0.4, 0.5, 0.6, 0.7, 0.8, 0.9$ \\
                     & ReDo & \text{threshold} ($\tau$) & $0.0, 1.0, 5.0, 10.0, 50.0$ \\
                     & CBP & \text{replacement rate}($\rho$) & $1\mathrm{e}-1, 1\mathrm{e}-2, 1\mathrm{e}-3, 1\mathrm{e}-4, 1\mathrm{e}-5$ \\
                     &     & \text{maturity threshold} & $100$ \\
                     & RReLU & \text{lower} & $0.0625, 0.125, 0.25$ \\
                     &       & \text{upper} & $0.125, 0.25, 0.333, 0.5$ \\
                     & DropReLU & $p$ & $0.1, 0.2, 0.3, 0.4, 0.5, 0.6, 0.7, 0.8, 0.9, 0.99$ \\
                     & AID & $p$ & $0.1, 0.2, 0.3, 0.4, 0.5, 0.6, 0.7, 0.8, 0.9, 0.99$ \\
        \midrule
        Generalizability & Adam & learning rate & $1\mathrm{e}-3, 1\mathrm{e}-4$ \\
                         & S\&P & $\lambda$ & $0.2, 0.4, 0.6, 0.8$ \\
                         & CBP & \text{replacement rate}($\rho$) & $1\mathrm{e}-4, 1\mathrm{e}-5$\\
                         &     & \text{maturity threshold} & $100, 1000$\\
                         & DropReLU & $p$ & $0.7, 0.8, 0.9$ \\
                         & L2 & $\lambda$ & $1\mathrm{e}-2, 1\mathrm{e}-3, 1\mathrm{e}-4, 1\mathrm{e}-5$ \\
                         & Dropout & $p$ & $0.1, 0.3, 0.5$\\
                         & RReLU & lower & $0.125$\\
                         &       & upper & $0.333$\\
                         & AID & $p$ & $0.7, 0.8, 0.9$ \\
        \hdashline
        Continual Full   & DASH & $\alpha$ & $0.1, 0.3$\\
                         &      & $\lambda$ & $0.05, 0.1, 0.3$\\
        \hdashline
        Class-Incremental & DASH & $\alpha$ & $0.1, 0.3$\\
                         &      & $\lambda$ & $0.05, 0.1, 0.3$\\
        \hdashline
        Continual Limited & DASH & $\alpha$ & $0.1, 0.3, 0.7, 1.0$\\
                         &      & $\lambda$ & $0.3$\\
                         & S-EWC & $\lambda$ & $100, 10, 1, 0.1, 0.01$\\
        \midrule
        Standard SL & L2   & $\lambda$ & $1\mathrm{e}-2, 1\mathrm{e}-3, 1\mathrm{e}-4, 1\mathrm{e}-5$ \\
                    & Dropout & $p$ & $0.1, 0.3, 0.5$ \\
                    & AID & $p$ & $0.8, 0.9, 0.95$ \\
        \midrule
        Reinforcement Learning & Dropout & $p$ & $0.01, 0.001$ \\ & AID & $p$ & $0.99, 0.999$ \\
        \bottomrule
    \end{tabular}
    \label{tab:hyperparameter_search_space}
\end{table}

\begin{table}[p]
    \centering
    \caption{Hyperparameter set of each method on permuted MNIST.}
    \begin{tabular}{lccl}
        \toprule
        \textbf{Method} & \textbf{Optimizer} & \textbf{Learning Rate} & \textbf{Optimal Hyperparameters} \\
        \midrule
        Baseline                & Adam & $1\mathrm{e}{-3}$ & -- \\
        Dropout                 & Adam & $1\mathrm{e}{-3}$ & $p = 0.05$ \\
        L2                      & Adam & $1\mathrm{e}{-3}$ & $\lambda = 1\mathrm{e}{-5}$ \\
        L2 Init                 & Adam & $1\mathrm{e}{-3}$ & $\lambda = 1\mathrm{e}{-4}$ \\
        Shrink \& Perturb       & Adam & $1\mathrm{e}{-3}$ & $\lambda = 0.2$ \\
        ReDo                    & Adam & $1\mathrm{e}{-3}$ &  $\tau = 50$ \\
        Continual Backprop      & Adam & $1\mathrm{e}{-3}$ & $\rho = 1\mathrm{e}{-4}$ \\
        CReLU             & Adam & $1\mathrm{e}{-3}$ & -- \\
        RReLU                   & Adam & $1\mathrm{e}{-3}$ & $\text{lower} = 0.0625, \text{upper} = 0.333$ \\
        DropReLU                & Adam & $1\mathrm{e}{-3}$ & $p = 0.99$ \\
        AID                     & Adam & $1\mathrm{e}{-3}$ & $p = 0.99$ \\
        \midrule
        Baseline                & Adam & $1\mathrm{e}{-4}$ & -- \\
        Dropout                 & Adam & $1\mathrm{e}{-4}$ & $p = 0.05$ \\
        L2                      & Adam & $1\mathrm{e}{-4}$ & $\lambda = 1\mathrm{e}{-5}$ \\
        L2 Init                 & Adam & $1\mathrm{e}{-4}$ & $\lambda = 1\mathrm{e}{-4}$ \\
        Shrink \& Perturb       & Adam & $1\mathrm{e}{-4}$ & $\lambda = 0.1$ \\
        ReDo                    & Adam & $1\mathrm{e}{-4}$ & $\tau = 50$ \\
        Continual Backprop      & Adam & $1\mathrm{e}{-4}$ & $\rho = 1\mathrm{e}{-4}$ \\
        CReLU             & Adam & $1\mathrm{e}{-4}$ & -- \\
        RReLU                   & Adam & $1\mathrm{e}{-4}$ & $\text{lower} = 0.0625, \text{upper} = 0.333$ \\
        DropReLU                & Adam & $1\mathrm{e}{-4}$ & $p = 0.99$ \\
        AID                     & Adam & $1\mathrm{e}{-4}$ & $p = 0.99$ \\
        \midrule
        Baseline                & SGD  & $3\mathrm{e}{-2}$ & -- \\
        Dropout                 & SGD  & $3\mathrm{e}{-2}$ & $p = 0.25$ \\
        L2                      & SGD  & $3\mathrm{e}{-2}$ & $\lambda = 1\mathrm{e}{-5}$ \\
        L2 Init                 & SGD  & $3\mathrm{e}{-2}$ & $\lambda = 1\mathrm{e}{-3}$ \\
        Shrink \& Perturb       & SGD  & $3\mathrm{e}{-3}$ & $\lambda = 0.1$ \\
        ReDo                    & SGD  & $3\mathrm{e}{-2}$ & $\tau = 5$ \\
        Continual Backprop      & SGD  & $3\mathrm{e}{-2}$ & $\rho = 1\mathrm{e}{-3}$ \\
        CReLU             & SGD  & $3\mathrm{e}{-2}$ & -- \\
        RReLU                   & SGD  & $3\mathrm{e}{-2}$ & $\text{lower} = 0.0625, \text{upper} = 0.333$ \\
        DropReLU                & SGD  & $3\mathrm{e}{-2}$ & $p = 0.99$ \\
        AID                     & SGD  & $3\mathrm{e}{-2}$ & $p = 0.99$ \\
        \midrule
        Baseline                & SGD  & $3\mathrm{e}{-3}$ & -- \\
        Dropout                 & SGD  & $3\mathrm{e}{-3}$ & $p = 0.01$ \\
        L2                      & SGD  & $3\mathrm{e}{-3}$ & $\lambda = 1\mathrm{e}{-5}$ \\
        L2 Init                 & SGD  & $3\mathrm{e}{-3}$ & $\lambda = 1\mathrm{e}{-3}$ \\
        Shrink \& Perturb       & SGD  & $3\mathrm{e}{-3}$ & $\lambda = 0.1$ \\
        ReDo                    & SGD  & $3\mathrm{e}{-3}$ & $\tau = 1$ \\
        Continual Backprop      & SGD  & $3\mathrm{e}{-3}$ & $\rho = 1\mathrm{e}{-5}$ \\
        CReLU             & SGD  & $3\mathrm{e}{-3}$ & -- \\
        RReLU                   & SGD  & $3\mathrm{e}{-3}$ & $\text{lower} = 0.0625, \text{upper} = 0.333$ \\
        DropReLU                & SGD  & $3\mathrm{e}{-3}$ & $p = 0.99$ \\
        AID                     & SGD  & $3\mathrm{e}{-3}$ & $p = 0.99$ \\
        \bottomrule
    \end{tabular}
    \label{tab:hyperparameter_permuted_mnist}
\end{table}

\begin{table}[p]
    \centering
    \caption{Hyperparameter set of each method on random label MNIST.}
    \begin{tabular}{lccl}
        \toprule
        \textbf{Method} & \textbf{Optimizer} & \textbf{Learning Rate} & \textbf{Optimal Hyperparameters} \\
        \midrule
        Baseline                & Adam & $1\mathrm{e}{-3}$ & -- \\
        Dropout                 & Adam & $1\mathrm{e}{-3}$ & $p = 0.15$ \\
        L2                      & Adam & $1\mathrm{e}{-3}$ & $\lambda = 1\mathrm{e}{-4}$ \\
        L2 Init                 & Adam & $1\mathrm{e}{-3}$ & $\lambda = 1\mathrm{e}{-3}$ \\
        Shrink \& Perturb       & Adam & $1\mathrm{e}{-3}$ & $\lambda = 0.8$ \\
        ReDo                    & Adam & $1\mathrm{e}{-3}$ & $\tau = 0$ \\
        Continual Backprop      & Adam & $1\mathrm{e}{-3}$ & $\rho = 1\mathrm{e}{-3}$ \\
        CReLU             & Adam & $1\mathrm{e}{-3}$ & -- \\
        RReLU                   & Adam & $1\mathrm{e}{-3}$ & $\text{lower} = 0.0625, \text{upper} = 0.125$ \\
        DropReLU                & Adam & $1\mathrm{e}{-3}$ & $p = 0.9$ \\
        AID                     & Adam & $1\mathrm{e}{-3}$ & $p = 0.9$ \\
        \midrule
        Baseline                & Adam & $1\mathrm{e}{-4}$ & -- \\
        Dropout                 & Adam & $1\mathrm{e}{-4}$ & $p = 0.25$ \\
        L2                      & Adam & $1\mathrm{e}{-4}$ & $\lambda = 1\mathrm{e}{-3}$ \\
        L2 Init                 & Adam & $1\mathrm{e}{-4}$ & $\lambda = 1\mathrm{e}{-4}$ \\
        Shrink \& Perturb       & Adam & $1\mathrm{e}{-3}$ & $\lambda = 0.7$ \\
        ReDo                    & Adam & $1\mathrm{e}{-4}$ & $\tau = 0$ \\
        Continual Backprop      & Adam & $1\mathrm{e}{-4}$ & $\rho = 1\mathrm{e}{-4}$ \\
        CReLU             & Adam & $1\mathrm{e}{-4}$ & -- \\
        RReLU                   & Adam & $1\mathrm{e}{-4}$ & $\text{lower} = 0.25, \text{upper} = 0.333$ \\
        DropReLU                & Adam & $1\mathrm{e}{-4}$ & $p = 0.99$ \\
        AID                     & Adam & $1\mathrm{e}{-4}$ & $p = 0.99$ \\
        \midrule
        Baseline                & SGD  & $3\mathrm{e}{-2}$ & -- \\
        Dropout                 & SGD  & $3\mathrm{e}{-2}$ & $p = 0.15$ \\
        L2                      & SGD  & $3\mathrm{e}{-2}$ & $\lambda = 1\mathrm{e}{-3}$ \\
        L2 Init                 & SGD  & $3\mathrm{e}{-2}$ & $\lambda = 1\mathrm{e}{-3}$ \\
        Shrink \& Perturb       & SGD  & $3\mathrm{e}{-3}$ & $\lambda = 0.2$ \\
        ReDo                    & SGD  & $3\mathrm{e}{-2}$ & $\tau = 1$ \\
        Continual Backprop      & SGD  & $3\mathrm{e}{-2}$ & $\rho = 1\mathrm{e}{-5}$ \\
        CReLU             & SGD  & $3\mathrm{e}{-2}$ & -- \\
        RReLU                   & SGD  & $3\mathrm{e}{-2}$ & $\text{lower} = 0.0625, \text{upper} = 0.125$ \\
        DropReLU                & SGD  & $3\mathrm{e}{-2}$ & $p = 0.99$ \\
        AID                     & SGD  & $3\mathrm{e}{-2}$ & $p = 0.99$ \\
        \midrule
        Baseline                & SGD  & $3\mathrm{e}{-3}$ & -- \\
        Dropout                 & SGD  & $3\mathrm{e}{-3}$ & $p = 0.01$ \\
        L2                      & SGD  & $3\mathrm{e}{-3}$ & $\lambda = 1\mathrm{e}{-2}$ \\
        L2 Init                 & SGD  & $3\mathrm{e}{-3}$ & $\lambda = 1\mathrm{e}{-3}$ \\
        Shrink \& Perturb       & SGD  & $3\mathrm{e}{-3}$ & $\lambda = 0.1$ \\
        ReDo                    & SGD  & $3\mathrm{e}{-3}$ & $\tau = 1$ \\
        Continual Backprop      & SGD  & $3\mathrm{e}{-3}$ & $\rho = 1\mathrm{e}{-3}$ \\
        CReLU             & SGD  & $3\mathrm{e}{-3}$ & -- \\
        RReLU                   & SGD  & $3\mathrm{e}{-3}$ & $\text{lower} = 0.0625, \text{upper} = 0.25$ \\
        DropReLU                & SGD  & $3\mathrm{e}{-3}$ & $p = 0.99$ \\
        AID                     & SGD  & $3\mathrm{e}{-3}$ & $p = 0.99$ \\
        \bottomrule
    \end{tabular}
    \label{tab:hyperparameter_random_label_mnist}
\end{table}

\begin{table}[p]
    \centering
    \caption{Hyperparameter set of each method on continual full setting.}
    \begin{tabular}{llcl}
        \toprule
        \textbf{Dataset (Model)} & \textbf{Method} & \textbf{Optimal Learning Rate} & \textbf{Optimal Hyperparameters} \\
        \midrule
        CIFAR10 & Baseline                & $1\mathrm{e}{-4}$ & -- \\
        (CNN)&Full Reset              & $1\mathrm{e}{-4}$ & -- \\
        &L2                      & $1\mathrm{e}{-3}$ & $\lambda = 1\mathrm{e}{-2}$ \\
        &Dropout                 & $1\mathrm{e}{-4}$ & $p = 0.3$ \\
        &RReLU                   & $1\mathrm{e}{-4}$ & -- \\
        &CReLU                   & $1\mathrm{e}{-4}$ & -- \\
        &Fourier                 & $1\mathrm{e}{-4}$ & -- \\
        &S\&P                    & $1\mathrm{e}{-4}$ & $\lambda = 0.8$ \\
        &CBP                     & $1\mathrm{e}{-4}$ & $\rho = 1\mathrm{e}{-5}, \text{maturity threshold} = 100$ \\
        &DASH                    & $1\mathrm{e}{-4}$ & $\alpha=0.1$, $\lambda=0.3$ \\
        &DropReLU                & $1\mathrm{e}{-4}$ & $p = 0.9$ \\
        &AID                     & $1\mathrm{e}{-3}$ & $p=0.9$ \\
        \midrule
        CIFAR100 & Baseline                & $1\mathrm{e}{-3}$ & -- \\
        (Resnet-18)&Full Reset              & $1\mathrm{e}{-3}$ & -- \\
        &L2                      & $1\mathrm{e}{-3}$ & $\lambda = 1\mathrm{e}{-2}$ \\
        &Dropout                 & $1\mathrm{e}{-4}$ & $p = 0.3$ \\
        &RReLU                   & $1\mathrm{e}{-3}$ & -- \\
        &CReLU                   & $1\mathrm{e}{-3}$ & -- \\
        &Fourier                 & $1\mathrm{e}{-3}$ & -- \\
        &S\&P                    & $1\mathrm{e}{-3}$ & $\lambda = 0.8$ \\
        &CBP                     & $1\mathrm{e}{-3}$ & $\rho = 1\mathrm{e}{-5}, \text{maturity threshold} = 1000$ \\
        &DASH                    & $1\mathrm{e}{-4}$ & $\alpha=0.1$, $\lambda=0.05$ \\
        &DropReLU                & $1\mathrm{e}{-3}$ & $p = 0.8$ \\
        &AID                     & $1\mathrm{e}{-3}$ & $p=0.7$ \\
        \midrule
        TinyImageNet & Baseline                & $1\mathrm{e}{-3}$ & -- \\
        (VGG-16)&Full Reset              & $1\mathrm{e}{-3}$ & -- \\
        &L2                      & $1\mathrm{e}{-3}$ & $\lambda = 1\mathrm{e}{-4}$ \\
        &Dropout                 & $1\mathrm{e}{-4}$ & $p = 0.1$ \\
        &RReLU                   & $1\mathrm{e}{-4}$ & -- \\
        &CReLU                   & $1\mathrm{e}{-3}$ & -- \\
        &Fourier                 & $1\mathrm{e}{-4}$ & -- \\
        &S\&P                    & $1\mathrm{e}{-3}$ & $\lambda = 0.8$ \\
        &CBP                     & $1\mathrm{e}{-3}$ & $\rho = 1\mathrm{e}{-4}, \text{maturity threshold} = 100$ \\
        &DASH                    & $1\mathrm{e}{-4}$ & $\alpha=0.3$, $\lambda=0.1$ \\
        &DropReLU                & $1\mathrm{e}{-4}$ & $p = 0.7$ \\
        &AID                     & $1\mathrm{e}{-4}$ & $p=0.7$ \\
        \bottomrule
    \end{tabular}
    \label{tab:hyperparameter_continual_full}
\end{table}

\begin{table}[p]
    \centering
    \caption{Hyperparameter set of each method on continual limited setting.}
    \begin{tabular}{llcl}
        \toprule
        \textbf{Dataset (Model)} & \textbf{Method} & \textbf{Optimal Learning Rate} & \textbf{Optimal Hyperparameters} \\
        \midrule
        CIFAR10 & Baseline                & $1\mathrm{e}{-4}$ & -- \\
        (CNN)&L2                      & $1\mathrm{e}{-4}$ & $\lambda = 1\mathrm{e}{-5}$ \\
        &Dropout                 & $1\mathrm{e}{-3}$ & $p = 0.5$ \\
        &S-EWC                 & $1\mathrm{e}{-4}$ & $\lambda = 0.01$ \\
        &RReLU                   & $1\mathrm{e}{-4}$ & -- \\
        &CReLU                   & $1\mathrm{e}{-4}$ & -- \\
        &Fourier                 & $1\mathrm{e}{-4}$ & -- \\
        &S\&P                    & $1\mathrm{e}{-4}$ & $\lambda = 0.2$ \\
        &CBP                     & $1\mathrm{e}{-4}$ & $\rho = 1\mathrm{e}{-4}, \text{maturity threshold} = 100$ \\
        &DASH                    & $1\mathrm{e}{-3}$ & $\alpha=0.1$, $\lambda=0.3$ \\
        &DropReLU                & $1\mathrm{e}{-4}$ & $p = 0.9$ \\
        &AID                     & $1\mathrm{e}{-3}$ & $p=0.8$ \\
        \midrule
        CIFAR100 & Baseline                & $1\mathrm{e}{-3}$ & -- \\
        (Resnet-18)&L2                      & $1\mathrm{e}{-3}$ & $\lambda = 1\mathrm{e}{-5}$ \\
        &Dropout                 & $1\mathrm{e}{-4}$ & $p = 0.1$ \\
        &S-EWC                 & $1\mathrm{e}{-3}$ & $\lambda = 1$ \\
        &RReLU                   & $1\mathrm{e}{-3}$ & -- \\
        &CReLU                   & $1\mathrm{e}{-3}$ & -- \\
        &Fourier                 & $1\mathrm{e}{-3}$ & -- \\
        &S\&P                    & $1\mathrm{e}{-3}$ & $\lambda = 0.2$ \\
        &CBP                     & $1\mathrm{e}{-3}$ & $\rho = 1\mathrm{e}{-5}, \text{maturity threshold} = 1000$ \\
        &DASH                    & $1\mathrm{e}{-4}$ & $\alpha=0.1$, $\lambda=0.3$ \\
        &DropReLU                & $1\mathrm{e}{-4}$ & $p = 0.7$ \\
        &AID                     & $1\mathrm{e}{-3}$ & $p=0.8$ \\
        \midrule
        TinyImageNet & Baseline                & $1\mathrm{e}{-4}$ & -- \\
        (VGG-16)&L2                      & $1\mathrm{e}{-3}$ & $\lambda = 1\mathrm{e}{-4}$ \\
        &Dropout                 & $1\mathrm{e}{-4}$ & $p = 0.1$ \\
        &S-EWC                 & $1\mathrm{e}{-3}$ & $\lambda = 100$ \\
        &RReLU                   & $1\mathrm{e}{-4}$ & -- \\
        &CReLU                   & $1\mathrm{e}{-3}$ & -- \\
        &Fourier                 & $1\mathrm{e}{-4}$ & -- \\
        &S\&P                    & $1\mathrm{e}{-3}$ & $\lambda = 0.4$ \\
        &CBP                     & $1\mathrm{e}{-4}$ & $\rho = 1\mathrm{e}{-4}, \text{maturity threshold} = 1000$ \\
        &DASH                    & $1\mathrm{e}{-3}$ & $\alpha=0.1$, $\lambda=0.3$ \\
        &DropReLU                & $1\mathrm{e}{-4}$ & $p = 0.8$ \\
        &AID                     & $1\mathrm{e}{-4}$ & $p=0.8$ \\
        \bottomrule
    \end{tabular}
    \label{tab:hyperparameter_continual_limited}
\end{table}

\begin{table}[p]
    \centering
    \caption{Hyperparameter set of each method on class-incremental setting.}
    \begin{tabular}{llcl}
        \toprule
        \textbf{Dataset (Model)} & \textbf{Method} & \textbf{Optimal Learning Rate} & \textbf{Optimal Hyperparameters} \\
        \midrule
        CIFAR100 & Baseline                & $1\mathrm{e}{-3}$ & -- \\
        (Resnet-18)&Full Reset              & $1\mathrm{e}{-3}$ & -- \\
        &L2                      & $1\mathrm{e}{-3}$ & $\lambda = 1\mathrm{e}{-5}$ \\
        &Dropout                 & $1\mathrm{e}{-3}$ & $p = 0.3$ \\
        &RReLU                   & $1\mathrm{e}{-3}$ & -- \\
        &CReLU                   & $1\mathrm{e}{-3}$ & -- \\
        &Fourier                 & $1\mathrm{e}{-3}$ & -- \\
        &S\&P                    & $1\mathrm{e}{-3}$ & $\lambda = 0.4$ \\
        &CBP                     & $1\mathrm{e}{-3}$ & $\rho = 1\mathrm{e}{-4}, \text{maturity threshold} = 1000$ \\
        &DASH                    & $1\mathrm{e}{-4}$ & $\alpha=0.1$, $\lambda=0.05$ \\
        &DropReLU                & $1\mathrm{e}{-4}$ & $p = 0.8$ \\
        &AID                     & $1\mathrm{e}{-3}$ & $p=0.7$ \\
        \midrule
        TinyImageNet & Baseline                & $1\mathrm{e}{-3}$ & -- \\
        (VGG-16)&Full Reset              & $1\mathrm{e}{-3}$ & -- \\
        &L2                      & $1\mathrm{e}{-3}$ & $\lambda = 1\mathrm{e}{-5}$ \\
        &Dropout                 & $1\mathrm{e}{-4}$ & $p = 0.1$ \\
        &RReLU                   & $1\mathrm{e}{-4}$ & -- \\
        &CReLU                   & $1\mathrm{e}{-3}$ & -- \\
        &Fourier                 & $1\mathrm{e}{-4}$ & -- \\
        &S\&P                    & $1\mathrm{e}{-3}$ & $\lambda = 0.4$ \\
        &CBP                     & $1\mathrm{e}{-3}$ & $\rho = 1\mathrm{e}{-5}, \text{maturity threshold} = 1000$ \\
        &DASH                    & $1\mathrm{e}{-3}$ & $\alpha=0.3$, $\lambda=0.3$ \\
        &DropReLU                & $1\mathrm{e}{-4}$ & $p = 0.8$ \\
        &AID                     & $1\mathrm{e}{-4}$ & $p=0.7$ \\
        \bottomrule
    \end{tabular}
    \label{tab:hyperparameter_class_incremental}
\end{table}

\newpage

\begin{table}[H]
    \vspace{2.5cm}
    \centering
    \caption{\centering Hyperparameter set of each method on reinforcement learning setting.}
    \begin{tabular}{lcc}
        \toprule
        \textbf{Game} & \textbf{Dropout} ($p$) & \textbf{AID} ($p$) \\
        \midrule
        Seaquest &$0.01$& $0.999$ \\
        DemonAttack &$0.01$& $0.99$ \\
        SpaceInvaders &$0.01$& $0.99$ \\
        Qbert &$0.01$& $0.999$ \\
        DoubleDunk &$0.01$& $0.99$ \\
        MsPacman &$0.01$& $0.999$ \\
        Enduro & $0.001$&$0.99$\\
        BeamRider &$0.01$& $0.99$\\
        WizardOfWor &$0.01$& $0.999$\\
        Jamesbond & $0.01$&$0.99$\\
        RoadRunner & $0.01$&$0.999$\\
        Asterix & $0.01$&$0.99$\\
        Pong & $0.01$&$0.999$\\
        Zaxxon & $0.01$&$0.999$\\
        YarsRevenge &$0.01$& $0.99$\\
        Breakout &$0.01$& $0.99$\\
        IceHockey &$0.01$& $0.99$\\
        
        \bottomrule
    \end{tabular}
    \label{tab:hyperparameter_rl}
\end{table}
\vfill

\begin{table}[H]
    \centering
    \caption{Hyperparameter set of each method on standard supervised learning setting.}
    \begin{tabular}{llc}
        \toprule
        \textbf{Dataset(Model)} & \textbf{Method} & \textbf{Optimal Hyperparameters} \\
        \midrule
        CIFAR10 &L2     & $\lambda = 1\mathrm{e}{-2}$ \\
        (CNN)
        &Dropout               & $p = 0.3$ \\
        &AID                    & $p=0.95$ \\
        \midrule
        CIFAR100 &L2     & $\lambda = 1\mathrm{e}{-5}$ \\
        (Resnet-18)
        
        &Dropout              & $p = 0.1$ \\
        &AID                  & $p=0.8$ \\
        \midrule
        TinyImageNet &L2 & $\lambda = 1\mathrm{e}{-5}$ \\
        (VGG-16)
        &Dropout            & $p = 0.1$ \\
        &AID               & $p=0.9$ \\
        \bottomrule
    \end{tabular}
    \label{tab:hyperparameter_sl}
\end{table}
\vfill

\newpage

\newpage

\section{Omitted Results}

\subsection{Additional Results for Trainability}
\label{app:additional_results_trainability}
\subsubsection{Extended Results for various conditions}
\label{app:extended_results_trainability}

\begin{figure*}[h]
    \centering
    \includegraphics[width=\textwidth]{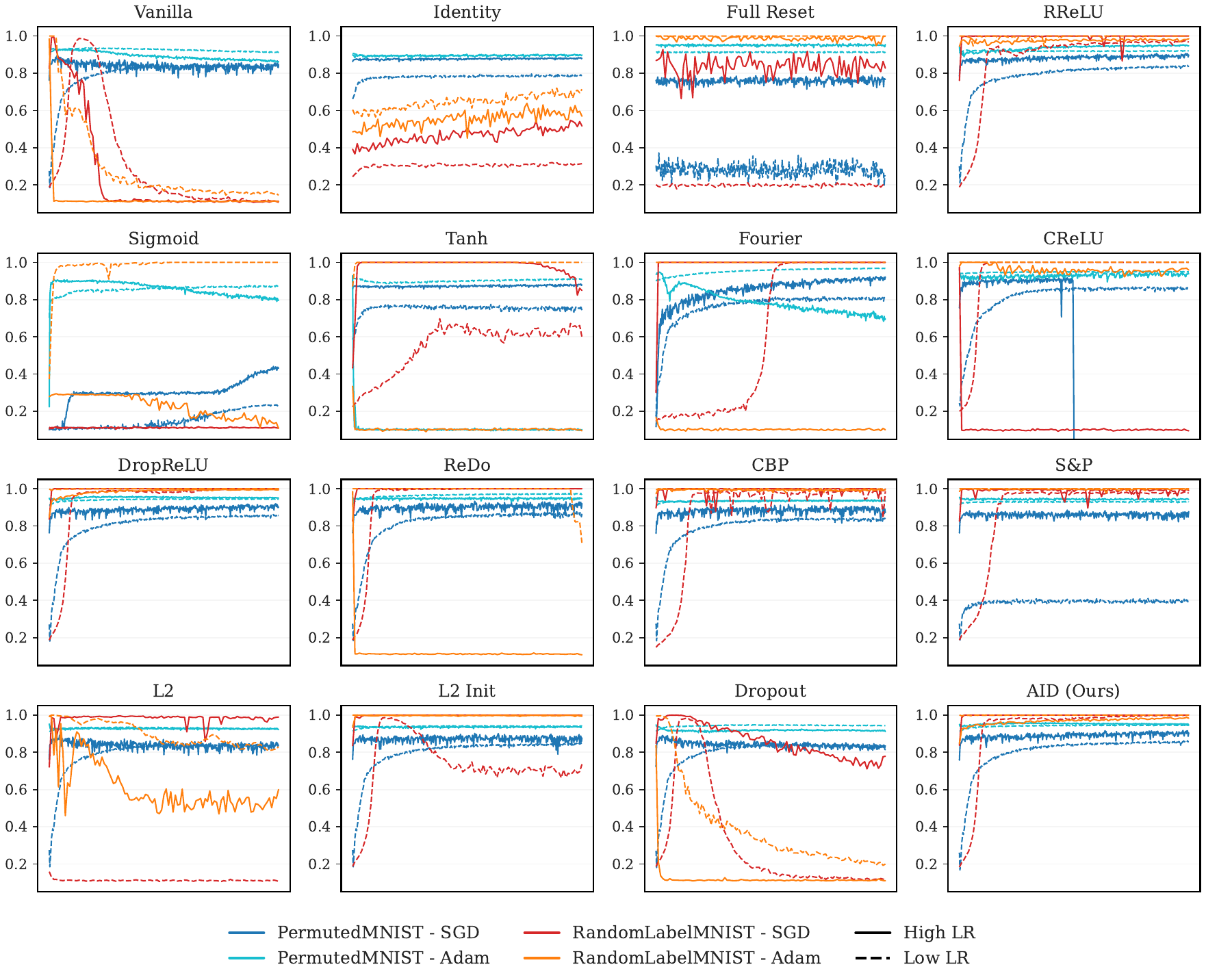}
    \caption{\textbf{Additional Results for Trainability on Various Conditions.} We plot the train accuracy comparisons across different optimizers and learning rates on Permuted MNIST and Random Label MNIST for each method. We train SGD with learning rate $3\mathrm{e}-2, 3\mathrm{e}-3$ and Adam with learning rate $1\mathrm{e}-3, 1\mathrm{e}-4$.}
    \label{fig:trainability_ablation}
\end{figure*}

% 우리는 추가적인 activation function들에 대해 trainability 실험을 진행하였다. 
% 그에 대한 결과는 figure 7에 제시하였다.
% ~,~에서 언급되었듯, Linear network는 plasticity loss를 겪지 않는 것을 확인할 수 있었다.
% 흥미로운 것은 RReLU, DropReLU, AID와 같은 stochastic activation function들은 trainability를 잘 유지한다는 점이다.
% 이것은 future work에 plasticity 연구를 하는 데 있어서 영감을 줄 수 있을 것이다.
We conducted additional trainability experiments on various activation functions, including Identity, RReLU, Sigmoid, and Tanh, as presented in Figure \ref{fig:trainability_ablation}. As discussed in \citet{dohare2024loss, lewandowski2024plastic}, linear networks do not suffer from plasticity loss. Interestingly, stochastic activation functions such as RReLU, DropReLU, and AID demonstrated strong trainability across different settings. This observation suggests that studying the properties of these activation functions may contribute to preserving plasticity, providing a promising direction for future research.

\newpage
\subsubsection{Analysis of Metrics Contributing to Plasticity Loss}
\label{app:metrics_pl}

%1. plasticity loss의 원인은 아직 명확하진 않지만 여러 근거있는 지표들이 제시되었다.
%2. ~, ~, ~등이 있다.

%3. 각각에 대한 계산 방법은 다음과 같다.
%5. dormant neuron ratio
%4. average sign entropy
%6. effective rank

%7. 우리는 Section \ref{}에서 plasticity loss가 크게 발생한 일부 실험 세팅에서 vanilla, dropout, aid에 대해서 다음 지표들을 비교한다

The precise cause of plasticity loss remains unclear, but several plausible indicators have been proposed to assess its impact. Among them, the dormant neuron ratio \cite{sokar2023dormant} and effective rank \cite{kumar2020implicit, lyle2022understanding} have been widely studied as key metrics. Recently, average sign entropy \cite{lewandowski2024plastic} of preactivation values has been introduced as an additional measure. Each metric is computed as follows:

\begin{itemize}
    \item \textbf{Dormant Neuron Ratio.} \citet{sokar2023dormant} introduced dormant neurons as a measure for the reduced expressivity of the neural network. A neuron is considered $\tau$-dormant, if its normalized activation score is lower than $\tau$. The normalized activation score is computed as follow:
    \[s_i^l = \frac{\mathbb E_{\mathbf{x}\in D}\left[|h_i^l(\mathbf{x})|\right]}{\frac{1}{H^l}\sum_{k\in [H^l]}\mathbb E_{\mathbf{x}\in D}\left[|h_k^l(\mathbf{x})|\right]}\]
    where $s_i^l$ is normalized activation score for $i$-th neuron in $l$-th layer. $\mathbf{x}\in D$ is sample from input distribution, $H^l$ is the number of neurons in $l$-th layer, and $h_i^l(\cdot)$ is post-activation function. Note that $l$ does not include final layer of the network. The dormant neuron ratio is computed as the proportion of neurons with $\tau=0$.
    \item \textbf{Average Unit Sign Entropy.} \citet{lewandowski2024plastic} proposed unit sign entropy as a generalized metric encompassing unit saturation \citep{abbas2023loss} and unit linearization \citep{lyle2024disentangling}. It is defined as follows:
    \[\text{Unit Sign Entropy}(x) = \mathbb E_{p(x)} [sgn(h(x))]\]

    where $p(x)$ is a distribution of inputs to the network, $h(\cdot)$ is preactivation values of according unit, and $sgn(\cdot)$ is sign function. To obtain the final metric, we compute the mean across all units, yielding the average unit sign entropy.
    \item \textbf{Effective Rank.} Previous studies \cite{kumar2020implicit, lyle2022understanding} present the effective rank of the output matrix after the penultimate layer is closely related to plasticity loss. While several methods exist for computing effective rank, we follow srank \cite{kumar2020implicit}:
\[\text{srank}_\delta(\Phi) = \min_k\frac{\sum_{i=1}^k\sigma_i(\Phi)}{\sum_{j=1}^n\sigma_j(\Phi)}\geq1-\delta\]
where $\Phi$ is feature matrix, and $\{\sigma_i(\Phi)\}_{i=1}^n$ is singular values of $\Phi$ sorted in descending order. We set $\delta=0.01$ as default.

\end{itemize}

\newpage

\begin{figure*}[h]
    \centering
    \includegraphics[width=0.30\textwidth]{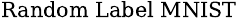}\\
    \includegraphics[width=0.32\textwidth]{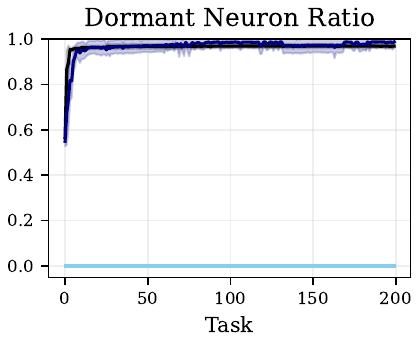}
    \includegraphics[width=0.32\textwidth]{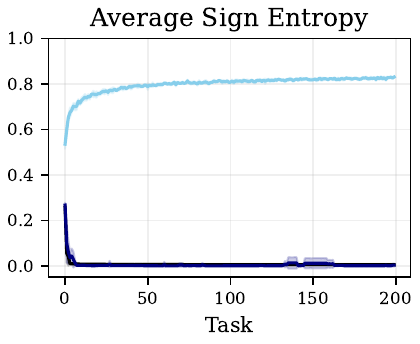}
    \includegraphics[width=0.32\textwidth]{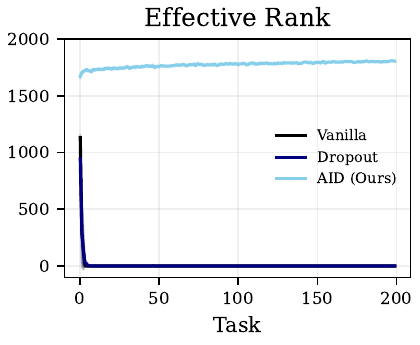}
    \includegraphics[width=0.30\textwidth]{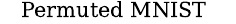}\\
    \includegraphics[width=0.32\textwidth]{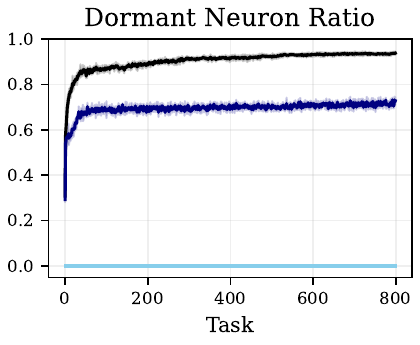}
    \includegraphics[width=0.32\textwidth]{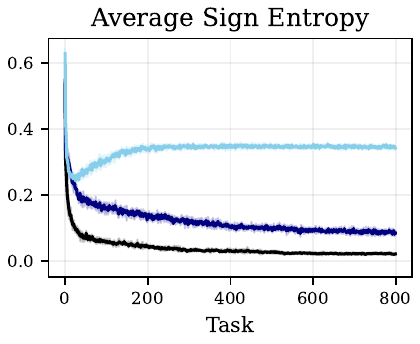}
    \includegraphics[width=0.32\textwidth]{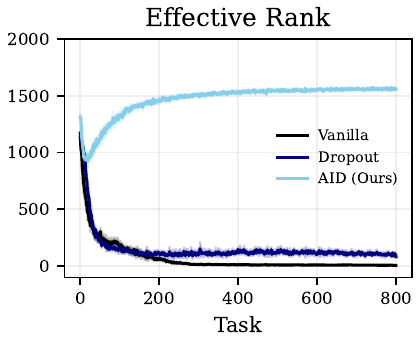}

\caption{\textbf{Metrics for Measuring Plasticity Loss.} This figure presents the Dormant Neuron Ratio, Average Sign Entropy, and Effective Rank across tasks, comparing Vanilla, Dropout, and AID on random label MNIST (\textbf{Top}) and permuted MNIST (\textbf{Bottom}). Notably, AID maintains key metrics where Dropout fails, demonstrating its ability to mitigate plasticity loss and sustain network adaptability.}
    
    \label{exp_trainability_metrics}
\end{figure*}

We evaluate these metrics for vanilla, Dropout, and AID under experimental settings characterized by severe plasticity loss. Specifically, we analyze permuted MNIST and random label MNIST with the Adam optimizer (lr = 0.001). The results are presented in Figure \ref{exp_trainability_metrics}. Interestingly, Dropout had no effect on random label MNIST and provided only a slight improvement in permuted MNIST. While both Dropout and Vanilla models cause a significant portion of neurons to become dormant, AID effectively prevents neuron saturation. The Average Sign Entropy further supports this observation, as AID maintains consistently higher entropy, indicating a more diverse and active neuron distribution. Lastly, the Effective Rank plot demonstrates that AID preserves representational capacity throughout training, whereas both Dropout and Vanilla models experience a rapid decline. These findings highlight AID’s ability to mitigate plasticity loss and sustain model adaptability over training.

\newpage
\subsubsection{Analysis on Preactivation Distribution Shift of Dropout and AID}
\label{app:omitted_results_preactivation}

\begin{figure*}[ht!]
    \centering
    \includegraphics[width=0.32\textwidth]{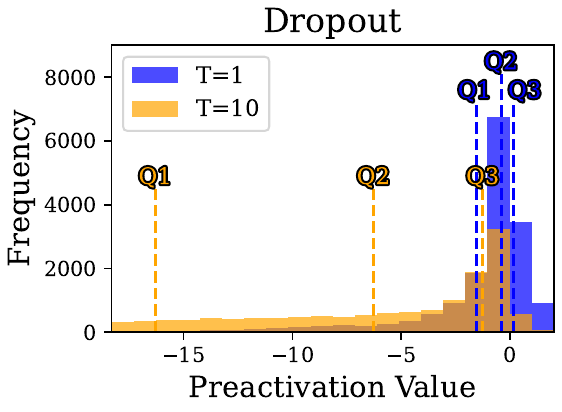}
    \includegraphics[width=0.32\textwidth]{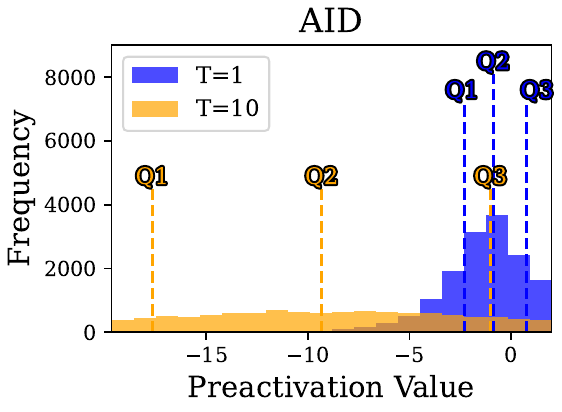}
    \includegraphics[width=0.32\textwidth]{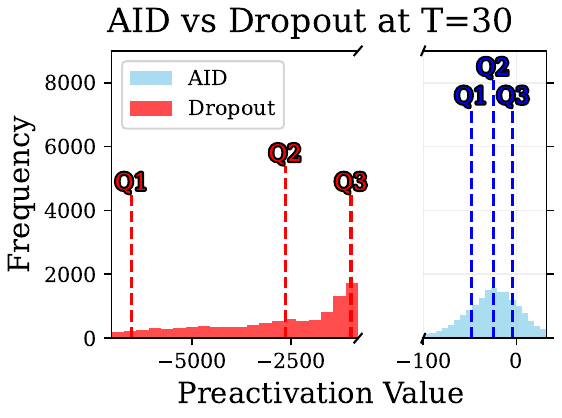}
    \caption{\textbf{Left \& Middle. }Visualization of the preactivation distribution shift for Dropout and AID  at the first and tenth tasks. Q1, Q2, and Q3 represent the first, second (median), and third quartiles, respectively. \textbf{Right.} Comparison of the preactivation distributions of AID and Dropout at the 30th task.}
    \label{exp_dropout_preact}
\end{figure*}

In this section, we present additional experimental results related to Section \ref{sec:dropout}. Specifically, we analyze the preactivation distribution of Dropout and AID. The results presented in this section follow the Random label MNIST experimental setup as those in Section \ref{sec:dropout}, with further details provided in Appendix \ref{app:loss_pl_dropout_trainability}. We trained an 8-layer MLP on the random label MNIST and visualized the preactivation of the penultimate layer in Figure \ref{exp_dropout_preact}. 

Figure \ref{exp_dropout_preact} (left and middle) illustrate the preactivation distributions of Dropout and AID, for the first and tenth tasks, respectively. According to prior research \cite{lyle2024disentangling}, preactivation distribution shift is one of the primary causes of plasticity loss in non-stationary settings. Dropout, which suffers from plasticity loss, exhibits a drastic preactivation shift, aligning well with previous findings. However, AID, despite maintaining plasticity effectively, undergoes a similarly significant preactivation shift as Dropout. This observation contradicts existing results, which indicate that the initial preactivation shift during training may not necessarily be the primary cause of plasticity loss.

Although both Dropout and AID experience preactivation distribution shifts, the extent of the shift in Dropout becomes more severe over the course of training compared to AID. Figure \ref{exp_dropout_preact} (right) compares the preactivation distributions of Dropout and AID in the 30th task. As training progresses, Dropout's distribution shifts considerably, with the Q2 value approaching -2500, whereas AID maintains a distribution much closer to zero. These results indicate that while both Dropout and AID undergo distribution shifts, AID exhibits a bounded shift, preventing excessive plasticity loss. In contrast, Dropout experiences an unbounded distribution shift, leading to substantial plasticity degradation. This finding suggests that simply applying different Dropout probabilities across different stages, as in AID, can help mitigate preactivation distribution shifts.

\subsection{Additional Results for Generalizability}
\label{app:additional_results_generalizability}
\subsubsection{Generalizability Comparison: Dropout vs. AID}
\label{app:the_effect_of_dropout_on_generalizability}
% Section 3.2의 generalizability 실험을 봤을 때, dropout은 generalization 성능을 높이지만 generalizability 개선은 없다는 결과를 얻음
% 이 결과를 더 확고히 하기 위해 continual full setting에서 확장 실험을 진행
% 같은 모델과 데이터셋, learning rate range을 사용하였다.
% dropout과 AID 각각 데이터가 추가될 때마다 reset을 진행하였을 때와 그렇지 않을 때 각각 final epoch에서 가장 높은 accuracy를 가진 hyperparameter를 골랐다.
% 그 후 각 method에 reset을 합친 accuracy에서 method에 대한 accuracy를 뺀 값을 보고한다. 해당 값이 작을 수록, generalizability가 높다는 것을 나타낸다. 용이한 비교를 위해 test accuracy도 같이 plot한다.
To further validate the findings in Section~\ref{sec:generalizability_perspective}, we conduct an extended comparison between Dropout and AID in terms of generalizability under the continual full setting. While Dropout is known to improve generalization performance, our results suggest that it does not enhance generalizability. To strengthen this conclusion, we perform a controlled experiment using the same model architecture, dataset, and learning rate search range.

For each method (Dropout and AID), we evaluate two variants: one where the model is reset after each data increment and one where it is not. For both settings, we identify the hyperparameter that yields the highest final accuracy at the last epoch. We then compute the generalizability gap by subtracting the accuracy of the non-resetting variant from that of the resetting one (i.e., (Cold-Start Acc.) - (Warm-Start Acc.)). A smaller value indicates better generalizability, as it implies that the method retains performance even without full retraining.

Figure \ref{fig:generalizability_appendix} demonstrates that AID consistently maintains a lower generalizability gap across training epochs, indicating its superior generalizability. While Dropout exhibits a smaller gap compared to the vanilla model, it fails to prevent the gradual increase over time. Investigating the theoretical foundations behind AID's ability to preserve generalizability would be a valuable direction for future research in this area.

\begin{figure}[H]
    \centering
    \includegraphics[width=0.3\textwidth]{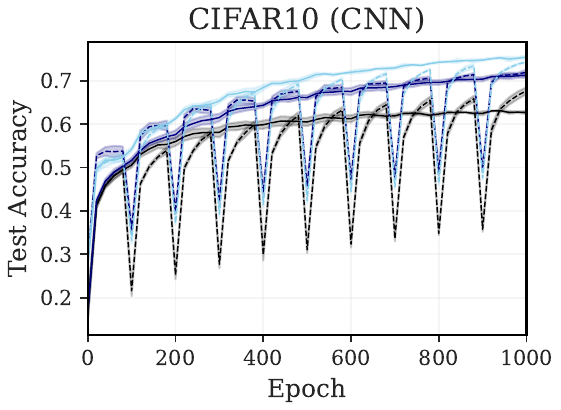}
    \includegraphics[width=0.3\textwidth]{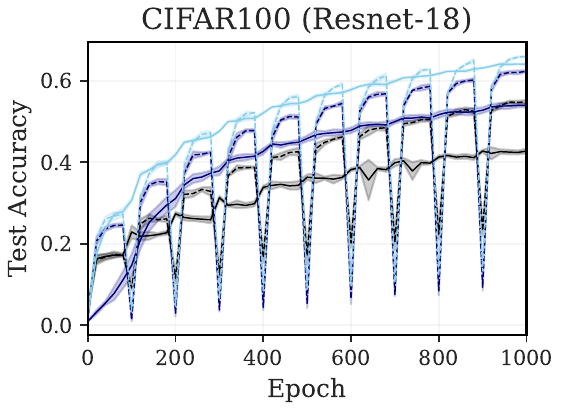}
    \includegraphics[width=0.3\textwidth]{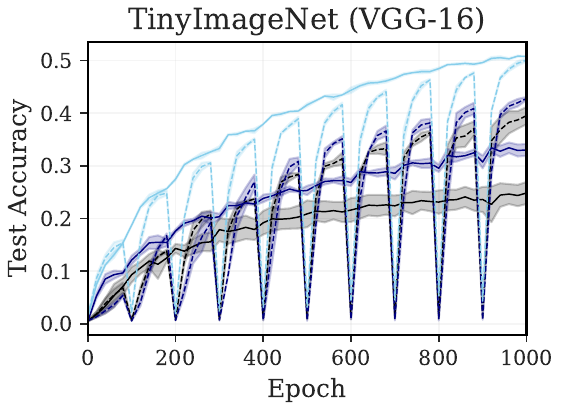}

    \includegraphics[width=0.9\textwidth]{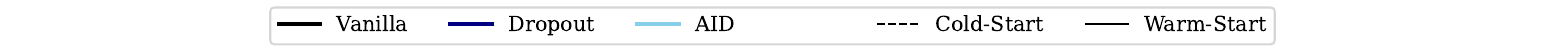}
    
    \includegraphics[width=0.3\textwidth]{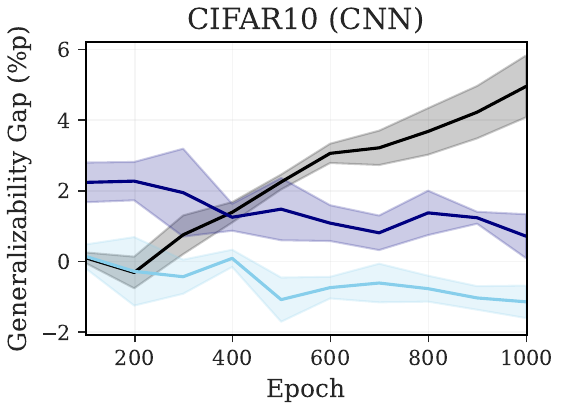}
    \includegraphics[width=0.29\textwidth]{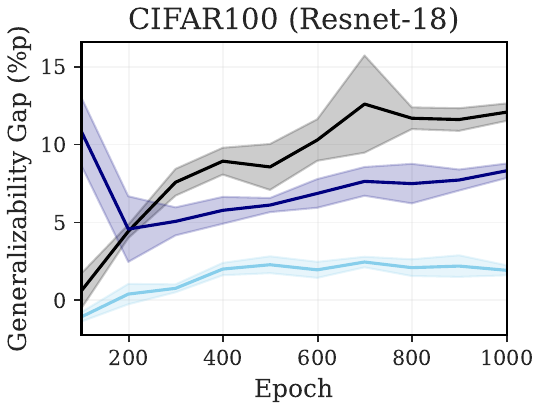}
    \includegraphics[width=0.3\textwidth]{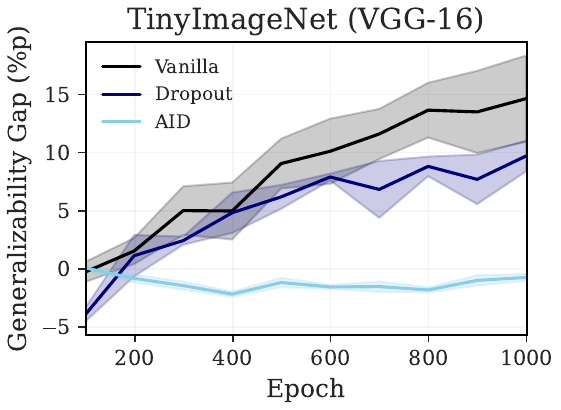}

\caption{\textbf{Generalizability Comparison.} Test accuracies of cold-start and warm-start variants for vanilla, Dropout, and AID (\textbf{top row}). To facilitate direct comparison, we report the generalizability gap, computed as the accuracy difference between cold-start and warm-start (\textbf{bottom row}).}

    \label{fig:generalizability_appendix}
\end{figure}

\subsubsection{Hyperparameter Sensitivity}
\label{app:hyperparameter_sensitivity}

% 우리는 가장 대표적인 실험인 Section 5.2의 continual full 세팅에서 AID의 hyperparameter에 대한 민감도에 관한 ablation study를 수행한다. 
% 같은 모델과 데이터셋들을 사용하여 AID의 하이퍼파라미터인 p를 [0.6,0.7,0.8,0.9]에 대한 성능을 보고한다. learning rate는 원래의 실험과 동일하게 1e-3, 1e-4를 사용했다.

% 결과는 Figure {}에 heatmap으로 표시했으며, colorbar의 빨간 선은 baseline 중 Full Reset의 결과를 나타낸다. 해당 값보다 높은 accuracy들은 모두 bold 처리를 하였다. 
% 대부분의 hyperparameter에서 AID는 full reset 한 모델보다 continual setting에서 높은 성능을 achieve했으며, 이는 hyperparameter에 강건함을 보여준다. 

In this section, we present an additional study on the hyperparameter sensitivity of AID in the most representative setting, the continual full setup described in Section \ref{sec:generalizability}. We evaluate the effect of varying the hyperparameter $p$ in AID using the same models and datasets as in the main experiments. Specifically, we report the test accuracy at the final epoch for $p$ values in $[0.6,\, 0.7,\, 0.8,\, 0.9]$, and learning rates $0.001$ and $0.0001$.

\begin{figure}[h]
    \centering
    \includegraphics[width=0.28\textwidth]{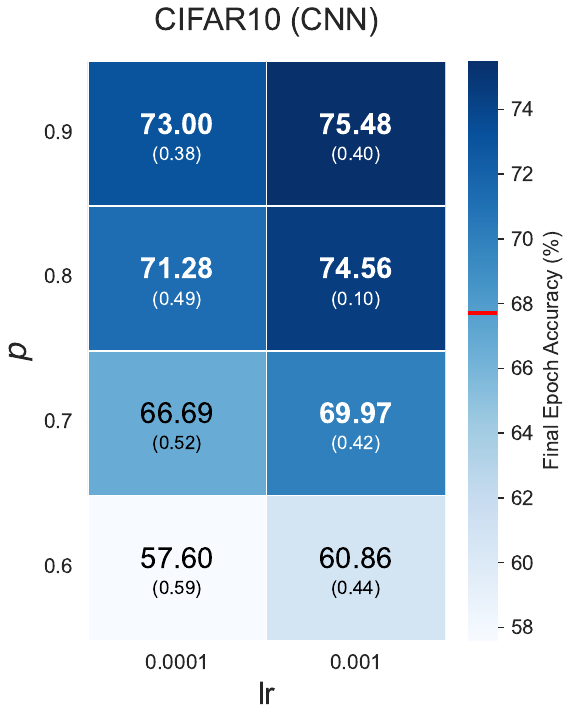}
    \includegraphics[width=0.28\textwidth]{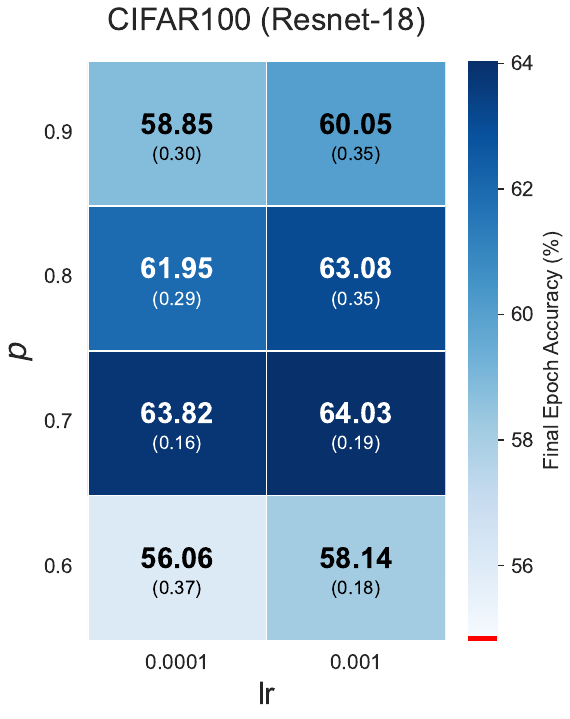}
    \includegraphics[width=0.28\textwidth]{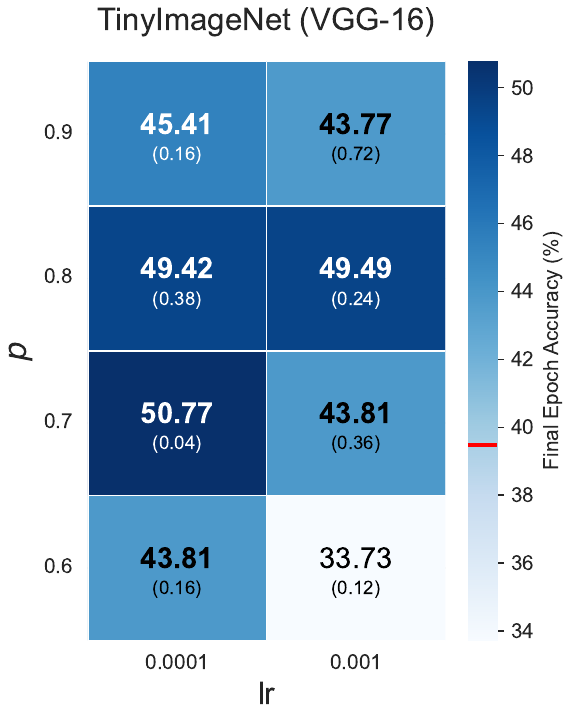}
\caption{\textbf{Hyperparameter Sensitivity for AID.} Heatmaps show the final epoch test accuracy and standard deviation for different values of $p$ and learning rate on continual full setting. The red lines in the colorbars indicate the baseline accuracy of \textit{full reset}. Values above this threshold are displayed in bold. Standard deviation is shown in smaller font below the mean accuracy.}

    \label{fig:hyperparameter_sensitivity}
\end{figure}

The results are visualized as heatmaps in Figure~\ref{fig:hyperparameter_sensitivity}, where each cell shows the mean accuracy (with standard deviation in smaller font). The red line on each colorbar indicates the baseline performance of the \textit{full reset} model. Accuracy values above this baseline are highlighted in bold. Overall, AID achieves higher performance than the \textit{full reset} baseline across most hyperparameter configurations, demonstrating the robustness of AID to the choice of $p$ and learning rate.

\newpage
\subsection{Additional Results for Class-Incremental}
\label{app:omitted_results_CI}

\begin{figure}[h]
    \centering
    \includegraphics[width=0.47\textwidth]{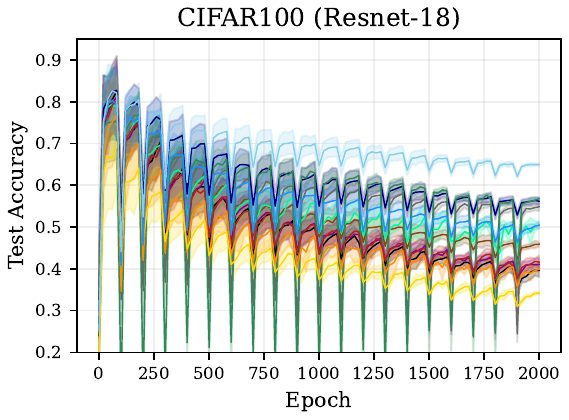}
    \includegraphics[width=0.47\textwidth]{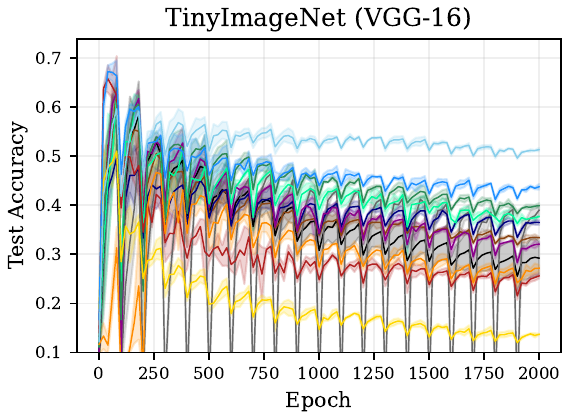}
    \includegraphics[width=0.8\textwidth]{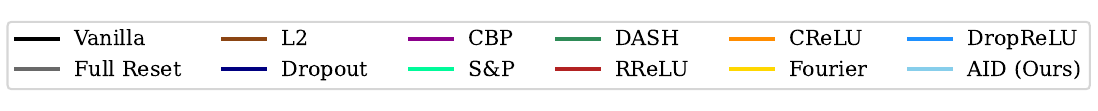}
    
    \caption{\textbf{Test Accuracy for Learned Classes in Class-Incremental Setting.} As new classes are introduced, the task complexity increases, leading to a gradual decline in accuracy.}
    
    \label{exp_class_incremental_ablation}
\end{figure}

\begin{figure}[h]
    \centering
    \includegraphics[width=0.47\textwidth]{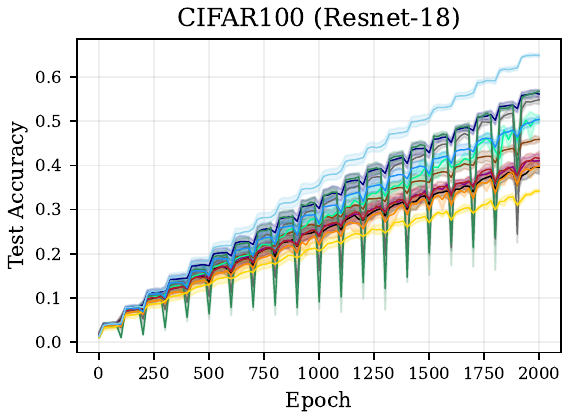}
    \includegraphics[width=0.47\textwidth]{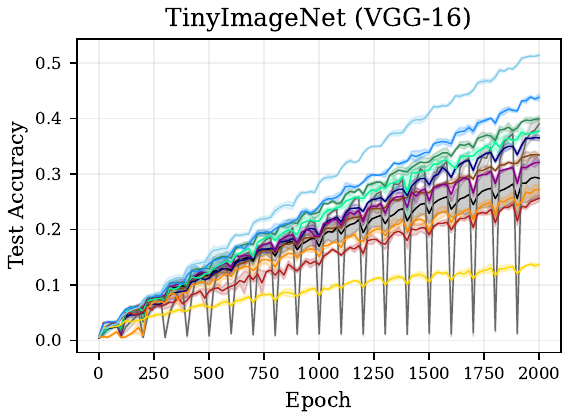}
    \includegraphics[width=0.8\textwidth]{figures/sources/class_incremental/legend_class_incremental_ablation.pdf}
    
    \caption{\textbf{Test Accuracy for Total Classes in Class-Incremental Setting.}}
    
    \label{exp_class_incremental_ablation_full}
\end{figure}

In the main section of the paper, we primarily presented results for the difference from full reset in class-incremental experiment. To provide a more detailed analysis, we include additional plots in \cref{exp_class_incremental_ablation,exp_class_incremental_ablation_full}, illustrating both the accuracy for previously learned classes and the accuracy across all classes. These results further highlight that AID consistently outperforms the other proposed approaches, demonstrating its effectiveness in maintaining plasticity while mitigating performance degradation in class-incremental learning.

\newpage
\subsection{Additional Results for Reinforcement Learning}
\label{app:omitted_results_rl}
\subsubsection{Atari 2600 RAW Scores by Games}

\begin{figure*}[h]
    \centering
    \includegraphics[width=0.24\textwidth]{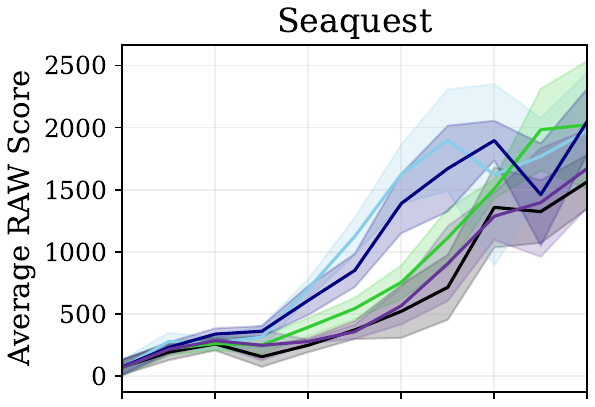}
    \includegraphics[width=0.24\textwidth]{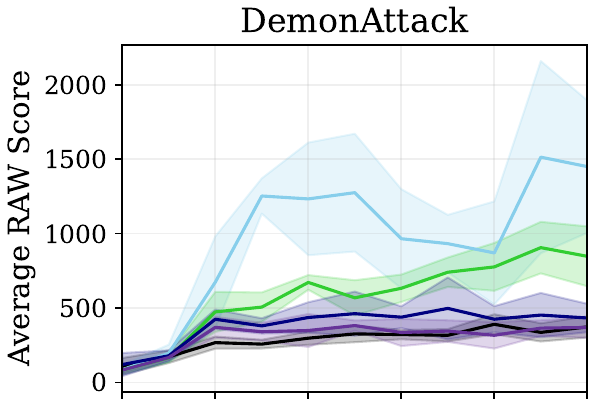}
    \includegraphics[width=0.24\textwidth]{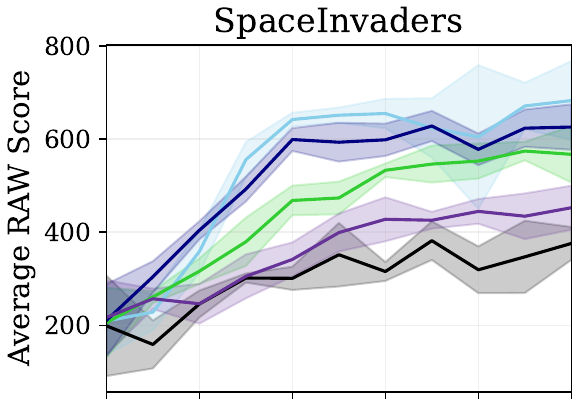}
    \includegraphics[width=0.24\textwidth]{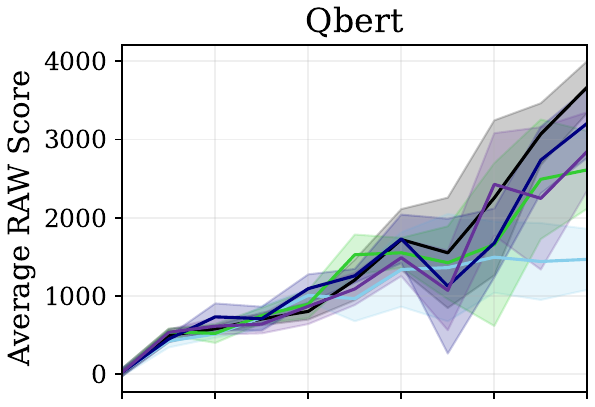}
    
    \includegraphics[width=0.24\textwidth]{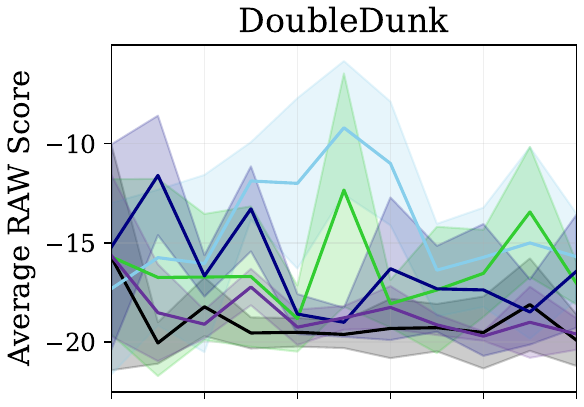}
    \includegraphics[width=0.24\textwidth]{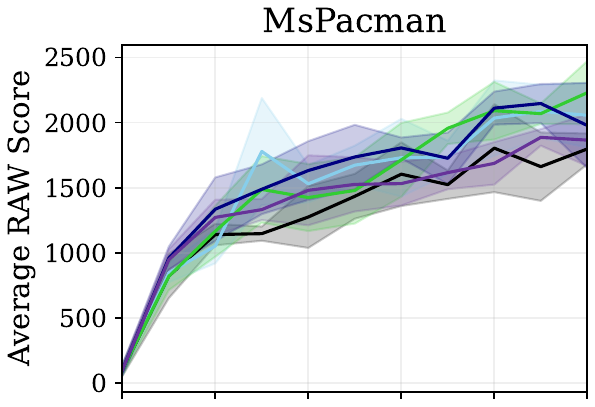}
    \includegraphics[width=0.24\textwidth]{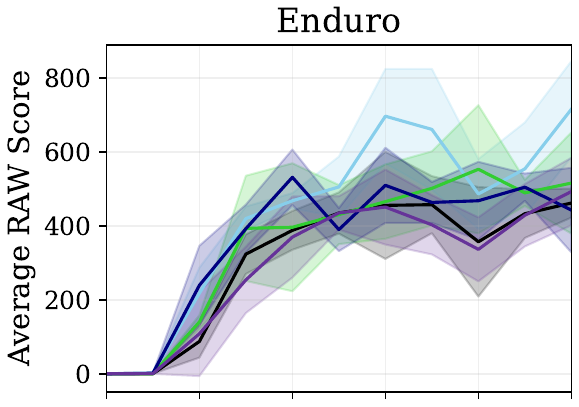}
    \includegraphics[width=0.24\textwidth]{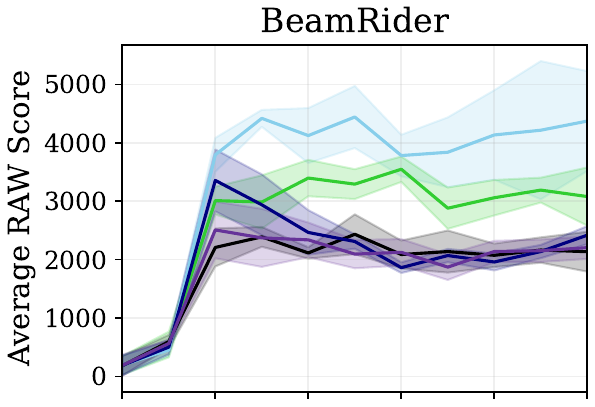}
    
    \includegraphics[width=0.24\textwidth]{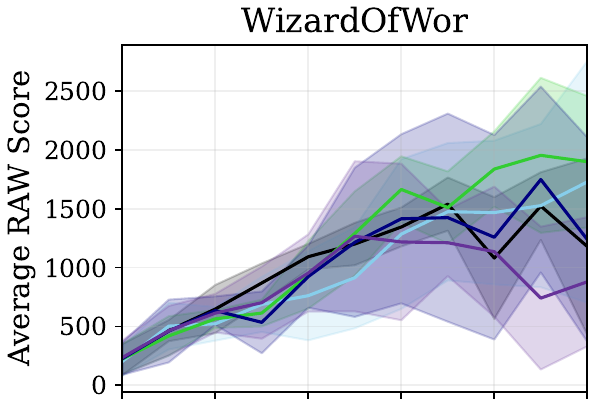}
    \includegraphics[width=0.24\textwidth]{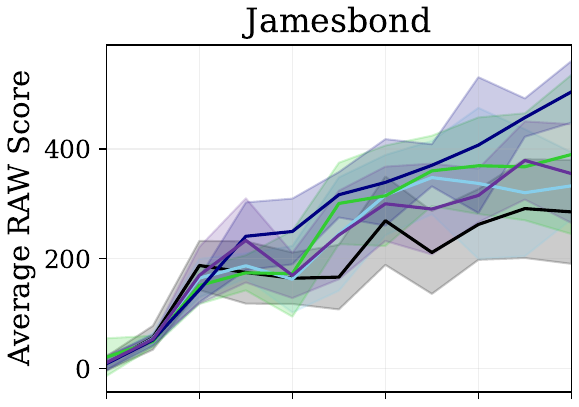}
    \includegraphics[width=0.24\textwidth]{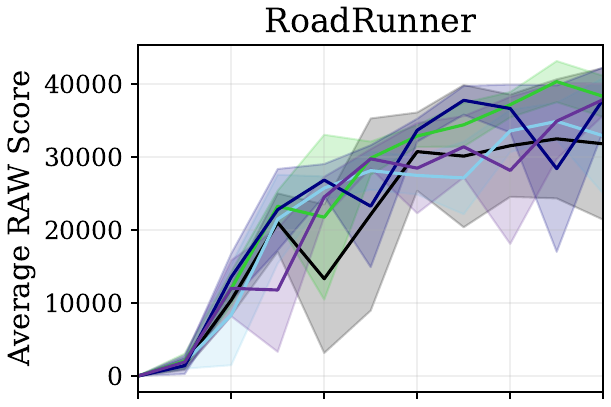}
    \includegraphics[width=0.24\textwidth]{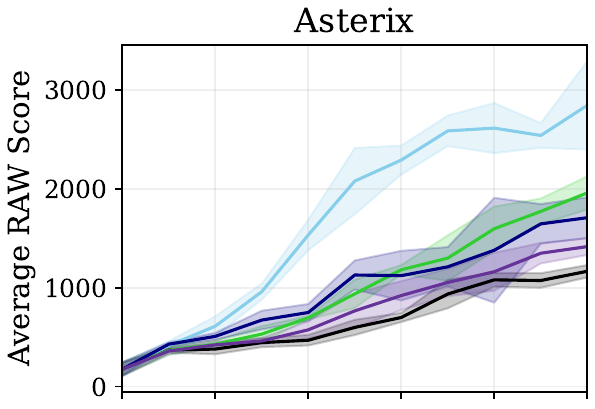}
    
    \includegraphics[width=0.24\textwidth]{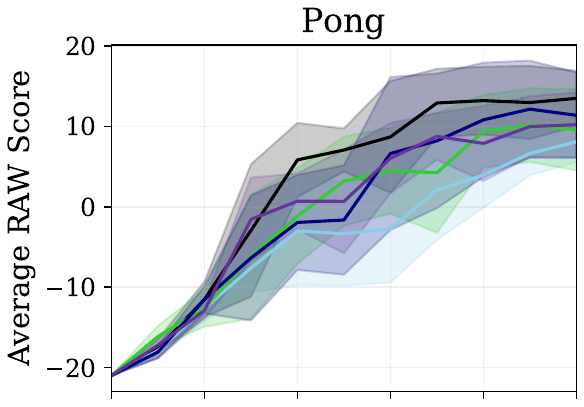}
    \includegraphics[width=0.24\textwidth]{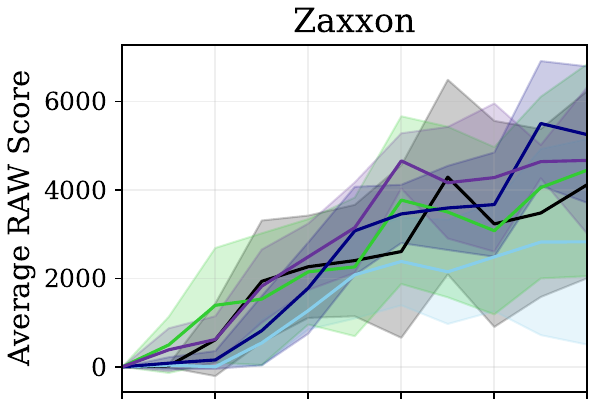}
    \includegraphics[width=0.24\textwidth]{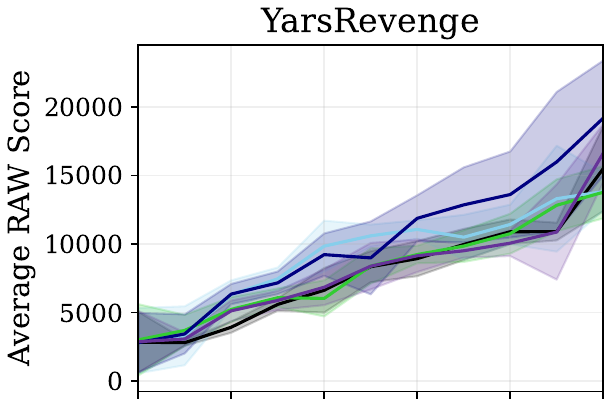}
    \includegraphics[width=0.24\textwidth]{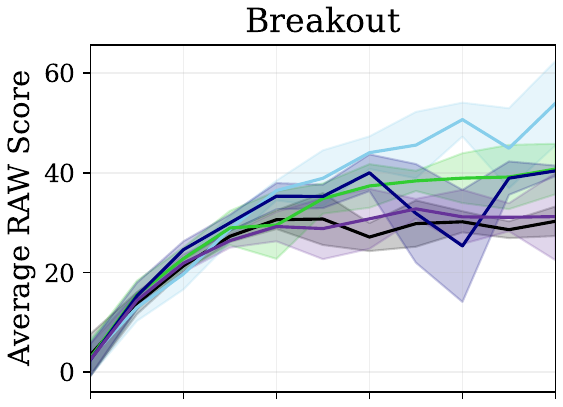}
    
    \includegraphics[width=0.24\textwidth]{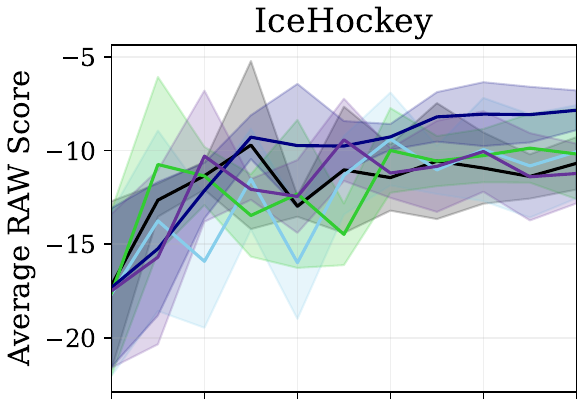}
    \includegraphics[width=0.24\textwidth]{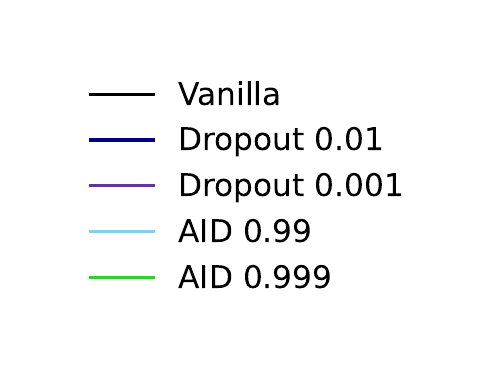}
    \includegraphics[width=0.24\textwidth]{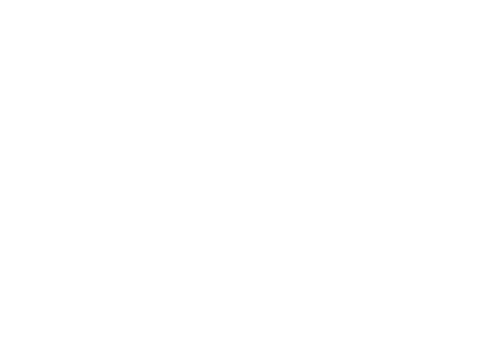}
    \includegraphics[width=0.24\textwidth]{figures/sources/RL/empty.pdf}

    \caption{\textbf{Average RAW Scores across 17 Atari Games.} We train DQN model over 10 million frames with replay ratio 1. The comparison includes vanilla DQN, Dropout and AID. Shaded regions indicate the standard deviation across 5 runs.}
    
    \label{exp_RL}
\end{figure*}

% 1. 우리는 RL 실험에 대해서 상대적으로 큰 coefficient를 sweep 했다.
% 2. 이는 value-based RL의 특성상 정확한 value prediction이 선행되어야 학습이 가능하며, linearity를 많이 주입하기보단 training loss를 지속적으로 감소시키는 능력인 trainability 실험에서 dia가 굉장히 높은 coef을 optimal로 가진 결과와 일치한다. 
% 3. 다만 high RR에서도 plasticity loss 문제를 겪지 않았던 Pong이나 Qbert같은 일부 게임들에선 vanilla에 비해 성능이 살짝 떨어진 것을 확인할 수 있었다. 나중에 이러한 특정 게임들을 더 오래 학습시키거나 정밀하게 분석하는 것은 흥미로울 것이다.
% 4. 여기선 high RR에 대한 실험만 진행하였지만, PPO와 같은 다른 RL 알고리즘이나 환경 또는 long-term 학습에 대해서도 해보면 좋을 것 같다.
In the reinforcement learning experiments, we sweep over relatively high coefficient for AID $[0.99, \, 0.999]$, consistent with previous trainability experiments where a high coefficient was found to be optimal for continuously reducing training loss. For Dropout, we sweep over smaller coefficients $[0.01, \, 0.001]$, since Dropout can cause large variance in reinforcement learning. Despite its lower trainability in some games, Dropout showed modest improvements in a few cases, possibly due to its model ensemble effect. However, in three games—Asterix, BeamRider and DemonAttack—which are known to suffer from plasticity loss \citep{sokar2023dormant}, Dropout failed to improve performance, supporting our argument that Dropout is ineffective against plasticity loss. While AID demonstrated advantages in most games, we observed a slight performance drop in certain cases, such as Pong and Qbert, where plasticity loss was not a significant issue even at high replay ratios. Further investigation is needed to understand this behavior, including extended training on these specific games. Additionally, our experiments were limited to high replay ratios; future research could explore alternative RL algorithms, different environments, or long-term learning settings to gain further insights.

\newpage
\subsubsection{Impact of AID on Effective Rank}

\begin{figure*}[h]
    \centering
    \includegraphics[width=0.33\textwidth]{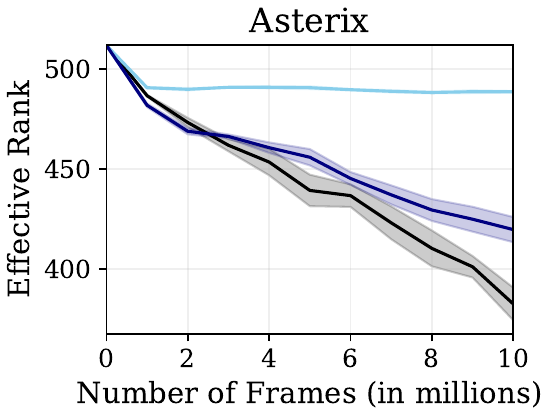}
    \includegraphics[width=0.33\textwidth]{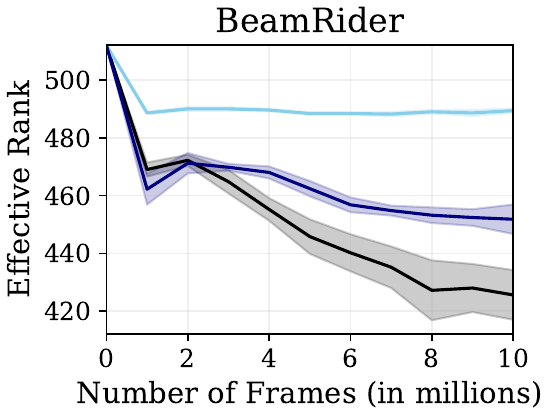}
    \includegraphics[width=0.33\textwidth]{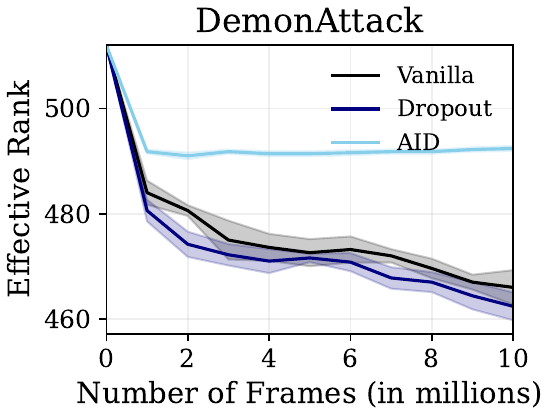}
    \caption{\textbf{Effective Rank of 3 Games.} The AID method maintains a higher effective rank compared to the vanilla and Dropout model throughout training.}
    
    \label{exp_RL_feature_rank}
\end{figure*}

% ~,~는 마지막에서 두번째 단에 나오는 feature matrix의 rank가 plasticity loss와 관련이 있다는 연구를 진행하였다.
% 따라서 우리는 AID로 인해 성능이 크게 향상된 3개의 게임 (BeamRider, Asterix, DemonAttack)에 대해서 effective rank를 비교하려고 한다.
% effective rank는 계산하는 몇가지 방법들이 있지만 ~의 feature rank 계산방식을 따른다.
Following the approach in Section \ref{app:metrics_pl}, we analyze the feature rank for three games—Asterix, BeamRider, and DemonAttack—where AID demonstrated significant performance improvements. 
As shown in Figure \ref{exp_RL_feature_rank}, the feature rank of the vanilla and Dropout model steadily declines throughout training, indicating a loss of representational diversity. In contrast, AID consistently maintains a higher feature rank, suggesting that it helps preserve network plasticity.

\subsection{Learning Curves for Standard Supervised Learning}
\label{app:learning_curve_sl}

\begin{figure*}[h]
    \centering
        \includegraphics[width=0.33\textwidth]{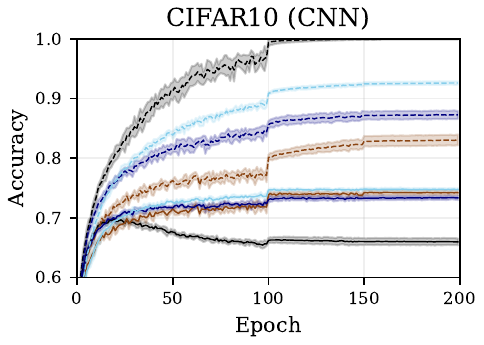}
    \includegraphics[width=0.33\textwidth]{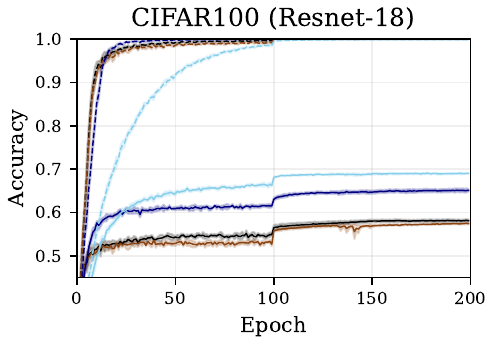}
    \includegraphics[width=0.33\textwidth]{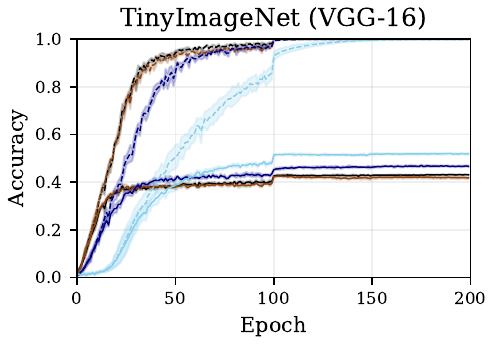}
    \includegraphics[width=0.8\textwidth]{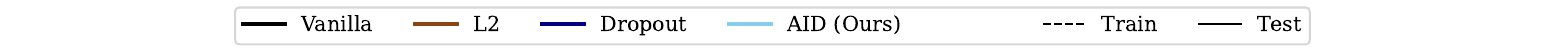}
\caption{\textbf{Learning Curves on Standard Supervised Learning.} The figure presents training and test accuracy curves for CIFAR10 (CNN), CIFAR100 (Resnet-18), and TinyImageNet (VGG-16). We divide learning rate by 10 at 100th and 150th epochs. AID not only effectively mitigates plasticity loss but also reduces the generalization gap, demonstrating its robustness in both continual and standard supervised learning settings.}
    
    \label{exp_SL}
\end{figure*}

% 일반적으로 overfitting에 효과적인 L2와 dropout보다 AID가 generalization gap을 잘 줄이는 것을 확인할 수 있음
% 이는 AID가 단순히 plasticity loss를 완화하는 method가 아닌 범용적인 task에 쓰일 수 있는 방법론이라는 가능성을 제시함.
The Figure \ref{exp_SL} present training and test loss of experiment in Section \ref{sec:exp_gen}. We observe that AID more effectively closes the generalization gap compared to L2 regularization and Dropout, both of which are commonly used to address overfitting. This result suggests that AID is not merely a technique for mitigating plasticity loss but a more general-purpose methodology that enhances model robustness across various tasks.

\newpage
\section{Implementation Details of AID}
\label{app:implementation}
\begin{listing}[h]
\begin{minted}[fontsize=\small]{python}

import torch
import torch.nn as nn

class AID(nn.Module):
    def __init__(self, p=0.9):
        super(AID, self).__init__()
        self.p = float(p)

        if p < 0.0 or p > 1.0:
            raise ValueError(f"dropout probability has to be between 0 and 1, but got {p}")
        
    def forward(self, x):
        if self.training:
            mask = torch.bernoulli(torch.full_like(x, self.p)).bool()
            return torch.where(mask, torch.relu(x), -torch.relu(-x))
        else:
            return torch.where(x >= 0, self.p * x, (1 - self.p) * x)
\end{minted}
\caption{PyTorch implementation of the \texttt{AID}.}
\label{lst:aid}
\end{listing}

% {\small
% \begin{lstlisting}[caption={PyTorch implementation of the \texttt{AID}.}][h]

%     import torch
%     import torch.nn as nn
    
%     class AID(nn.Module):
%         def __init__(self, p=0.9):
%             super(AID, self).__init__()
%             self.p = float(p)
    
%             if p < 0.0 or p > 1.0:
%                 raise ValueError(f"dropout probability has to be between 0 and 1, but got {p}")
            
%         def forward(self, x):
%             if self.training:
%                 mask = torch.bernoulli(torch.full_like(x, self.p)).bool()
%                 return torch.where(mask, torch.relu(x), -torch.relu(-x))
%             else:
%                 return torch.where(x >= 0, self.p * x, (1 - self.p) * x)
    
%     \end{lstlisting}
%     \label{lst:aid}
% }

% 우리는 algorithm 2 의 pytorch implementation을 Listing \ref{lst:aid}에 제공한다. 
% Note that negative relu는 $\bar r(x) = - r(-x)$ 로 나타낼 수 있으므로 간단하게 처리하였다.
% supervised learning을 이용한 모든 실험은 network의 모든 ReLU activation function을 위 class로 대체하는 것으로 충분하다. 
% Section 5.2의 reinforcement learning 실험에선 추가적으로 고려해야하는 것이 있다. 
% 일반적인 dropout 계열의 방법론을 사용할 때, rollout을 이용하여 탐험을 진행할때, 모델이 training phase 상태라는 것은 같은 state에 대해서도 다른 q value를 내뱉으므로 탐험을 장려하는 효과를 기대할 수 있다. 
% 그러나 내부적인 실험 결과, 이러한 예상은 오히려 우리 실험 세팅에서 학습을 불안정하게 만들었다. 
% 따라서 우리는 rollout 중 greedy action을 select할 때 model을 test phase로 만들었으며, 이는 Dropout과 AID 모두 동일하게 적용하였다. 
% 학습의 다른 부분은 일반적인 DQN과 동일하게 진행하였다.

We provide the PyTorch implementation of the simplified version of AID in Listing \ref{lst:aid}, corresponding to Algorithm \ref{alg:AID}. Note that the negative ReLU function $\bar{r}(x)$ is implemented as $-r(-x)$, allowing for an efficient expression using standard ReLU operations. For all supervised learning experiments, we simply replaced all ReLU activations in the network with the AID module above. More extensive experiments are required, but intuitively, since AID can preserve negative preactivation values unlike ReLU, we recommend replacing maxpooling layers with average pooling layers in model, as discussed in Section~\ref{app:generalizability}.

In the reinforcement learning experiments described in Section \ref{sec:exp_RL}, an additional consideration is necessary. In general, when using dropout during policy rollouts, maintaining the model in training mode (i.e., with dropout active) can promote exploration by producing stochastic Q-values for the same state. However, in our setting, we found that this behavior introduced instability during training. Therefore, we set the model to evaluation mode when selecting greedy actions during rollouts, ensuring deterministic Q-value predictions. This strategy was applied consistently to both Dropout and AID. Investigating how such design choices affect the behavior of dropout-based methods in reinforcement learning remains an interesting direction for future work. All other aspects of the training procedure followed the standard DQN framework.

%%%%%%%%%%%%%%%%%%%%%%%%%%%%%%%%%%%%%%%%%%%%%%%%%%%%%%%%%%%%%%%%%%%%%%%%%%%%%%%
%%%%%%%%%%%%%%%%%%%%%%%%%%%%%%%%%%%%%%%%%%%%%%%%%%%%%%%%%%%%%%%%%%%%%%%%%%%%%%%

\end{document}